\documentclass{ifm-tr}
\usepackage{amsmath,amssymb,mathtools}
\usepackage{amsthm}
\usepackage{tabularx}
\usepackage{nicematrix}
\usepackage{colortbl}
\usepackage{enumitem}
\usepackage{etoc}
\usepackage{wrapfig}
\usepackage{xspace}
\usepackage{algorithm}
\usepackage{algpseudocode}
\usepackage{fontawesome5}
\usepackage{tikz}
\usetikzlibrary{arrows.meta,positioning,calc}
\newcommand{\E}{\mathbb E}
\newcommand{\R}{\mathbb R}
\newcommand{\norm}[1]{\left\lVert #1\right\rVert}
\newcommand{\sg}{\operatorname{stopgrad}}

\newcommand{\softmax}{\operatorname{softmax}}
\newcommand{\CE}{\operatorname{CE}}
\newcommand{\ModelStep}{\operatorname{ModelStep}}
\newcommand{\Adam}{\operatorname{Adam}}
\newcommand{\EMA}{\operatorname{EMA}}

\newcommand{\secref}[1]{{Section~\ref{#1}}}
\newcommand{\appref}[1]{Appendix~\ref{#1}}

\newtheorem{lemma}{Lemma}
\theoremstyle{definition}

\theoremstyle{plain}
\setlist[itemize]{leftmargin=1.30em,itemsep=2pt,topsep=3pt}
\setlist[enumerate]{leftmargin=1.50em,itemsep=2pt,topsep=3pt}
\ifdefined\titlespacing
  \titlespacing*{\paragraph}{0pt}{0.8ex plus 0.2ex minus 0.1ex}{0.6em}
\fi
\definecolor{ifmtableink}{HTML}{202020}
\definecolor{ifmtablerule}{HTML}{202020}
\definecolor{ifmtableband}{HTML}{EFEFEF}
\newcommand{\papertablestyle}{%
  \small
  \renewcommand{\arraystretch}{1.10}%
  \setlength{\tabcolsep}{4pt}%
  \setlength{\heavyrulewidth}{0.75pt}%
  \setlength{\lightrulewidth}{0.35pt}%
  \setlength{\cmidrulewidth}{0.30pt}%
  \arrayrulecolor{ifmtablerule}%
}
\newsavebox{\papertabularbox}
\newsavebox{\papernicetabularbox}
\newcommand{\appendixtablefontsize}{6.9}
\newif\ifappendixtables
\AddToHook{cmd/appendix/after}{\global\appendixtablestrue}
\newif\ifnestedpapertable
\makeatletter
\def\papertable@captype{table}
\newcommand{\fitpapertable}[1]{%
  \ifappendixtables\ifx\@captype\papertable@captype\ifnestedpapertable\else
    \sbox#1{\scalebox{\fpeval{\appendixtablefontsize/\f@size}}{\usebox#1}}%
  \fi\fi\fi
  \ifdim\wd#1>\linewidth
    \resizebox{\linewidth}{!}{\usebox{#1}}%
  \else
    \usebox{#1}%
  \fi
}
\makeatother
\AddToHook{env/tabular/before}{\begin{lrbox}{\papertabularbox}\nestedpapertabletrue}
\AddToHook{env/tabular/after}{\end{lrbox}\fitpapertable{\papertabularbox}}
\AddToHook{env/NiceTabular/before}{\begin{lrbox}{\papernicetabularbox}\nestedpapertabletrue}
\AddToHook{env/NiceTabular/after}{\end{lrbox}\fitpapertable{\papernicetabularbox}}
\newcommand{\tablehead}[1]{{\color{ifmtableink}\bfseries #1}}
\newcommand{\tableband}[2]{%
  \multicolumn{#1}{@{}l@{}}{\rule{0pt}{2.00ex}\tablehead{#2}}\\%
}

\newcommand{\pairtextstart}{\begin{minipage}[t]{0.49\linewidth}\vspace{0pt}\raggedright}

\newcommand{\pairend}{\unskip\par\end{minipage}}

\definecolor{papergray}{HTML}{EFEFEF}
\newsavebox{\pairtablebox}
\newlength{\pairtablewidth}
\newlength{\pairprosewidth}
\newenvironment{papertablepair}[1]{%
  \par\medskip
  \def\pairtablealign{t}%
  \input{#1}%
  \setlength{\pairtablewidth}{\wd\pairtablebox}%
  \setlength{\pairprosewidth}{\dimexpr\linewidth-\pairtablewidth-18pt\relax}%
  \typeout{Natural table #1: \the\pairtablewidth; prose: \the\pairprosewidth}%
  \noindent\begin{minipage}[\pairtablealign]{\pairprosewidth}\vspace{0pt}\raggedright
}{%
  \par\end{minipage}\hfill
  \begin{minipage}[\pairtablealign]{\pairtablewidth}\vspace{0pt}
    \captionsetup{hypcap=false,justification=raggedright,singlelinecheck=false}%
    \pairtableheading
    \noindent\usebox{\pairtablebox}\par
  \end{minipage}\par\medskip
}

\definecolor{resultgrouprule}{HTML}{AEB4BE}
\newcommand{\resulttablestyle}{%
  \papertablestyle\fontsize{9}{10.80}\selectfont
  \setlength{\tabcolsep}{2.30pt}%
  \renewcommand{\arraystretch}{1.08}%
}
\newcommand{\resultmissing}{\textemdash}
\makeatletter
\newcommand{\sidefloat}[4][]{%
  \begin{minipage}[t]{#2}\vspace{0pt}\centering
    \def\@captype{#3}%
    \captionsetup{width=.95\linewidth,justification=raggedright,singlelinecheck=false}%
    \ifx\relax#1\relax\else\captionsetup{skip=#1}\fi
    \renewenvironment{table}[1][]{}{}\renewenvironment{figure}[1][]{}{}%
    \input{#4}%
  \end{minipage}}
\makeatother
\newcommand{\resultrowstretch}{1.18}

\newcommand{\titlesubtitlelist}{}
\newcommand{\titlesubtitle}[1]{\def\titlesubtitlelist{#1}}
\renewcommand{\title}[1]{\def\titlelist{#1}}

\newcommand{\patchpartititle}[2]{%
  \patchcmd{\mymaketitle}{#1}{#2}{}{%
    \PackageError{parti-title}{The IFM title template changed}{Check title_layout.tex against ifm-tr.cls.}%
  }%
}
\patchpartititle
  {{\fontsize{19pt}{16pt}\selectfont\sffamily\bfseries\color{black}\titlelist}\par}
  {{\fontfamily{lmr}\fontsize{24pt}{27pt}\selectfont
      \bfseries\color{black}\titlelist\par}%
   \ifdefempty{\titlesubtitlelist}{}{%
     \vskip 0.10cm
     {\fontfamily{lmr}\fontsize{12pt}{15pt}\selectfont
       \mdseries\color{ifmmedgray}\titlesubtitlelist\par}%
   }}
\patchpartititle{\vspace{0.15cm}}{\vspace{0.10cm}}
\patchpartititle{\vspace{0.5cm}}{\vspace{0.28cm}}
\patchpartititle{\vskip 0.3cm}{\vskip 0.24cm}
\patchpartititle{\vskip 0.15cm}{\vskip 0.10cm}
\patchpartititle{\vskip 0.05cm}{\vskip 0.04cm}
\patchpartititle{\vskip 0.25cm}{\vskip 0.18cm}
\patchpartititle{\vskip 0.3cm}{\vskip 0.22cm}

\title{Towards Looped Models Done Right}
\titlesubtitle{Part II: Rethinking \textbf{at} Fixed Points}

\metadata[\faGithub\ Code]{\url{https://github.com/ifm-ai/xllm-loop}}
\date{September 29, 2026}
\hypersetup{pdftitle={Towards Looped Models Done Right: Part II - Rethinking at Fixed Points},pdfauthor={Benhao Huang, Chufan Shi, Junlin Chen, Shicheng Wen, Zhengzhong Liu, Eric Xing, Xuezhe Ma}}
\abstract{Every recurrence of a looped language model adds cost in training, decoding, prefill, and reinforcement learning (RL). {The closer recurrent states get to fixed points, the less the path to them matters. This enables} truncated backpropagation in training; terminal key-value (KV) sharing for decoding with almost no loss in accuracy; a distilled student that prefills up to {$1.79\times$} faster; and RL updates that compute gradients from saved rollout states, {$2\times$} faster than {backpropagating through the replayed trajectory}. We therefore improve the two components of training that shape these fixed points: the depth prior and input injection. Fixed-depth training breaks KV sharing, and Huginn's broad depth prior {supports sharing but dilutes supervision at the target depth more than sharing requires}; we learn the prior from prediction feedback, with an entropy term that keeps it broad. Existing injection schemes let the state's component along the input amplify or cancel the injection; we remove this component with orthogonal injection. From $100$M to $1.6$B parameters, the learned prior and orthogonal injection lower perplexity at every scale relative to Huginn's prior and existing injection schemes, respectively. At $1.6$B, the learned prior with a $3\times$ smaller KV cache matches the downstream average of fixed-depth training with the full cache.}

\author[\(\ddagger\ast\)]{\href{https://huskydoge.github.io/}{Benhao Huang}}\authorsep{, }
\author[\(\ddagger\dagger\)]{Chufan Shi}\authorsep{, }
\author[\(\ddagger\)]{Junlin Chen}\authorsep{, }
\author[\(\ddagger\dagger\)]{Shicheng Wen}\authorsep{\\}
\author[\(\ddagger\)]{Zhengzhong Liu}\authorsep{, }
\author[\(\ddagger\)]{Eric Xing}\authorsep{, }
\author[\(\ddagger\dagger\)]{Xuezhe Ma}\authorsep{}

\affiliation[\(\ddagger\)]{Institute of Foundation Models}\affiliationsep{, }
\affiliation[\(\dagger\)]{USC}\affiliationsep{, }
\affiliation[\(\ast\)]{CMU}

\begin{document}
\maketitle
\begin{center}
\includegraphics[width=\linewidth]{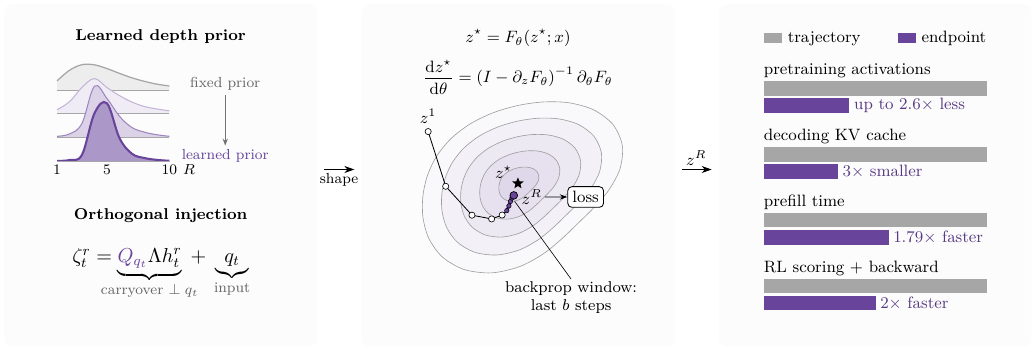}
\captionof{figure}{{\textbf{{What shapes fixed points, and what fixed points enable.}} Near a fixed point, the recurrence trajectory can be skipped: training backpropagates through only the last $b$ steps, decoding keeps terminal KV banks, a student distills prefill, and RL reuses saved states to score and update (\secref{sec:why}). {The depth prior and input injection shape the fixed points; we learn the prior from prediction feedback and project the carryover off the input (\secref{sec:promoting}). }}}
\label{fig:teaser}
\end{center}
\etocdepthtag.toc{main}
\begingroup
\setcounter{tocdepth}{2}
\etocsettagdepth{main}{section}
\etocsettagdepth{appendix}{none}
\endgroup
\clearpage
\section{Introduction}
\label{sec:introduction}

Looped language models repeatedly apply a shared block of layers~\citep{universal_transformer,huginn}, but whether they should learn fixed points remains contested. One camp forgoes fixed points and keeps the loop count small and fixed~\citep{gao2026loopie,wang2026smelt,ouro}. The other camp shapes fixed points~\citep{bai2021stabilizing,yang2026stars,eqr} and loops much deeper. {In typical implementations, every loop adds cost}: activations to store during training, keys and values (KV) to cache during decoding, compute to prefill each prompt, and a replay to score each rollout during reinforcement learning (RL). In this work, we show that fixed points hold the potential to cut each of these costs (Figure~\ref{fig:teaser}), which makes deep looped models cheaper at every stage: pretraining, serving, and post-training.

The fixed-point equation $z^\star = F_\theta(z^\star; x)$ involves only the endpoint, not the path to it, so once a model's states are near a fixed point, the endpoint can stand in for the full trajectory (\secref{sec:why}). Reusing terminal KV caches across loops tests this claim: it barely changes the zero-shot GSM8K accuracy of Huginn~\citep{huginn} but collapses that of Ouro~\citep{ouro} at four loops~\citep{schwethelm2026cdb}. The split tracks convergence: Huginn's states approach fixed points~\citep{blayney2026mechanistic}; Ouro's do not.

Similarly, the same fixed point gives each remaining cost a shortcut. In pretraining, backpropagating through only the last few loops approximates the equilibrium gradient~\citep{neumann_rbp,phantom_grad} without the warm-up that implicit gradients require. In decoding, \emph{terminal KV sharing} keeps only each prefix token's terminal KV (Figure~\ref{fig:prefix-paths}b) at little cost in performance. In prefilling, a student distilled to predict the fixed point directly replaces the loops (Figure~\ref{fig:prefix-paths}c). In RL, scoring rollouts from saved fixed-point states avoids replaying the loops. Each shortcut removes a depth-dependent cost, so its savings grow with every added loop. Looped models already take the first two~\citep{huginn,hrm,ouro}; fixed points explain why they work and yield the other two.

Realizing these benefits requires training that shapes the fixed points. Two components do the shaping, and current choices for each fall short (\secref{sec:promoting}). The first is the \emph{depth prior}, the distribution of training depths. Because tokens converge at different depths (Figure~\ref{fig:prefix-dynamics}), the sampled depth decides whether training sees the converged prefix context that terminal KV sharing supplies at inference (Figure~\ref{fig:prefix-paths}a). A model trained at a single depth predicts best there but breaks KV sharing; Huginn's fixed broad prior~\citep{huginn} supports sharing but dilutes supervision at the target depth more than sharing requires. We learn the prior from prediction feedback, which pulls supervision toward depths that predict well, and add an entropy term that keeps it broad enough for sharing. The second component is \emph{input injection}~\citep{bai2019deep,huginn,parcae,huang2026loopedmodels}, which adds the input to the state at every recurrence. Huginn and Parcae leave the state's component along the input unconstrained; depending on its sign, this component amplifies or cancels the injection. Our orthogonal input injection (OrthoInj) removes this component, so every recurrence receives the input at the same strength.

\begin{itemize}[leftmargin=*,topsep=2pt,itemsep=1pt]
\item Fixed points explain the two existing shortcuts (\secref{sec:why}): truncated backpropagation alone shapes the fixed points that terminal KV sharing relies on. Fixed-depth training with full backpropagation through time does not; at $1.6$B, sharing drops its GSM8K accuracy from $50.6$ to $21.2$. With tolerant fixed points, sharing cuts the KV cache $3\times$ at five recurrences, keeping the entropy-regularized learned-prior models within $0.4$ points of their full-cache downstream average.
\item Fixed points yield two new shortcuts. Distilled prefill (\secref{sec:exp-distill}), trained on a quarter of the teacher's pretraining tokens, is up to $1.79\times$ faster on $8$K prompts and trails the teacher's downstream average by $1.0$--$4.5$ points; at matched latency, it scores $0.5$--$0.9$ points above the teacher prefilling with two recurrences. Rollout-state reuse (\secref{sec:why-rl}) halves the scoring and backward time of each RL update relative to replaying the loops with full backpropagation through time, and stays within $1.6$ points of that baseline's GSM8K pass@1.
\item The learned depth prior keeps prediction quality under terminal KV sharing (\secref{sec:exp-prior}): from $100$M to $1.6$B, it lowers perplexity and raises the downstream average over Huginn's fixed prior, and at $1.6$B, with a $3\times$ smaller KV cache, it even matches the full-cache downstream average of fixed-depth training.
\item OrthoInj improves on the strongest injection baseline, Parcae's (\secref{sec:exp-architecture}). From $100$M to $1.6$B, it trains to a lower loss, lowers validation perplexity by $0.3$--$1.4\%$, and raises the downstream average.
\end{itemize}
\section{Fixed Points in Looped Language Models}
\label{sec:fixed-points}
\label{sec:background}

Recurrent states in looped language models {tend to approach fixed points as recurrence depth grows}~\citep{huginn,blayney2026mechanistic}.\footnote{{``Fixed points'' here include approximately stationary states at finite depths.}} {For an input $x=(x_1,\ldots,x_N)$, the state after $r$ recurrent updates is $z^r=(\mathbf{H}^r,C^r)$: hidden states $\mathbf{H}^r=(h_1^r,\ldots,h_N^r)$ and the \emph{KV banks} $C^r$ written on update $r$, one per physical attention layer. The shared update map gives $z^{r+1}=F_\theta(z^r;x)$, and $z^\star$ is a \emph{fixed point} if $F_\theta(z^\star;x)=z^\star$.}   {We use Huginn~\citep{huginn} for experiments. A prelude of non-shared blocks embeds the input as $e=P_\theta(x)$. A recurrent core of shared blocks starts from a random state $z^0$ and iterates $z^r = F_\theta(z^{r-1};x)$ for $r=1,\dots,R$, injecting $e$ at every recurrence (\secref{sec:architecture}); we write $z_\theta^R(x)$ for the final state. A coda of non-shared blocks maps $z_\theta^R(x)$ to next-token logits.}

\begin{figure}[!htbp]
  \centering
  \includegraphics[width=\linewidth]{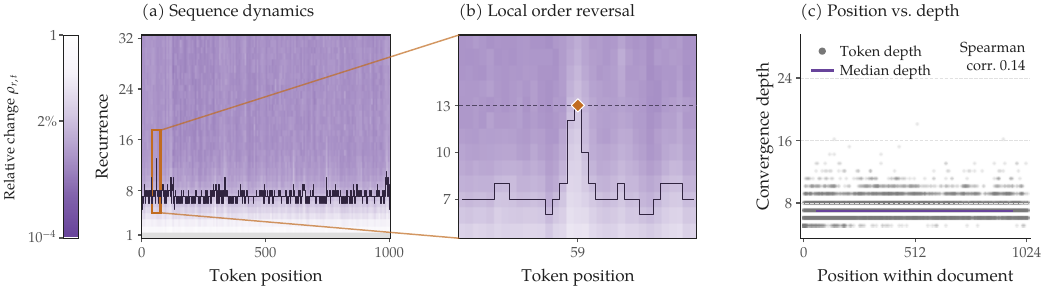}

  \caption[Heterogeneous token convergence within a sequence.]{{\textbf{Heterogeneous token convergence in {Huginn}.}}
(a) Relative hidden-state changes $\rho_{r,t}$, with {convergence} threshold {$\tau=2\%$}.
(b) {Later tokens can approach approximate fixed points before earlier ones.}
(c) Position versus {convergence depth}; the purple line shows medians over $128$-position intervals.
Details in \appref{app:huginn-prefix-probe}.}
  \label{fig:huginn-prefix-stabilization}\label{fig:prefix-dynamics}

\end{figure}

{We measure convergence by the tokenwise relative change $\rho_{r,t}=\norm{h_t^r-h_t^{r-1}}_2/(\norm{h_t^{r-1}}_2+\epsilon)$, with $\epsilon=10^{-8}$. Token $t$ is \emph{{converged} from depth $r$} if $\rho_{r',t}<\tau$ for all $r\le r'\le R_{\rm obs}$, the observation horizon; its \emph{{convergence depth}} is the smallest such $r$.}

{{Convergence} in Huginn is \emph{heterogeneous}: tokens in {a} sequence {converge} at different depths, not necessarily in causal order (Figure~\ref{fig:prefix-dynamics}b), and {convergence depth} correlates only weakly with position (Spearman correlation $0.14$; Figure~\ref{fig:prefix-dynamics}c). {A} prefix $x_{<t}$ is a \emph{{converged prefix}} from depth $r$ if every token in it is {converged} from $r$; \secref{sec:inference} relates such prefixes to KV sharing.}

\FloatBarrier
\section{What Fixed-Point Shaping Enables}
\label{sec:why}
\emph{Fixed-point shaping} trains a looped model with supervision near the fixed points of its recurrence~\citep{bai2019deep,eqr}. At a fixed point, the endpoint can stand in for the trajectory. Conventional looped language models pay for every recurrence in both training and inference; fixed-point shaping decouples both costs from depth.

\begin{figure}[H]
\centering
\includegraphics[width=\linewidth]{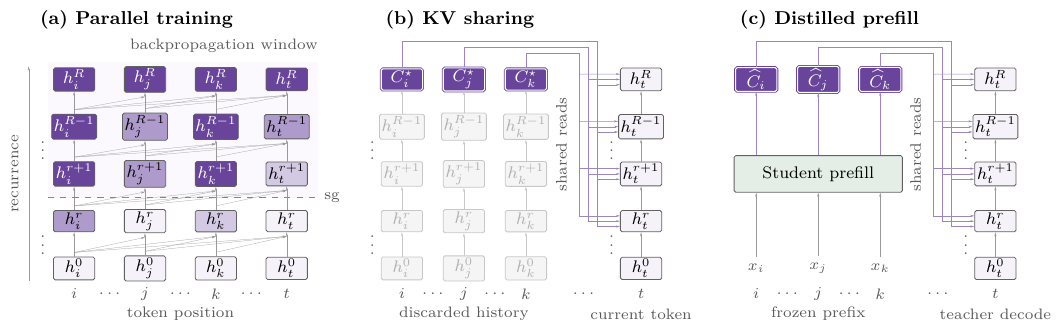}
\caption{\textbf{The endpoint stands in for the recurrent trajectory.} Recurrence runs upward; ellipses mark omitted positions and depths. (a) {In training, each} update reads prefix states from the previous depth{; darker states are closer to convergence (as in Figure~\ref{fig:prefix-dynamics}). Prefix tokens} converge at different depths (\secref{sec:fixed-points}), so the sampled depth $R$ decides whether reads inside the backpropagation window see converged context; gradients stop at sg. (b) Decoding reads only the terminal prefix KV banks $C^\star_{<t}$ and discards the others, with nearly unchanged downstream accuracy. (c) During prefill, a distilled student replaces the recurrence and supplies approximate terminal banks $\widehat C_{<t}$.}
\label{fig:prefix-paths}
\end{figure}

\subsection{Training: {Endpoint Gradients}}
\label{sec:why-training}
{Full backpropagation through time (BPTT) differentiates through all \(R\) recurrences, so its cost grows with depth. When near a fixed point, the gradient can instead be computed from the endpoint.}

\paragraph{Pretraining: truncated backpropagation.}

At a fixed point \(z^\star\), let
\(J_\star=\partial_z F_\theta(z^\star;x)\) and
\(B_\star=\partial_\theta F_\theta(z^\star;x)\).
If \(F_\theta\) is continuously differentiable and the spectral radius of \(J_\star\) is below one,
implicit differentiation gives the \emph{equilibrium sensitivity}~\citep{bai2019deep} \(D_\star\equiv\mathrm dz^\star/\mathrm d\theta\),
and truncating its Neumann series gives
\(D_b\)~\citep{neumann_rbp,phantom_grad} (derivation in \appref{app:gradients}):
\begin{equation}
 D_\star=(I-J_\star)^{-1}B_\star=\sum_{k=0}^{\infty}J_\star^kB_\star,  \quad
 D_b=\sum_{k=0}^{b-1}J_\star^kB_\star,
 \quad
 {\tilde D_b=\sum_{k=0}^{b-1}\Bigl(\prod_{j=1}^{k}J_{R-j}\Bigr)B_{R-1-k}.}
 \label{eq:fixed-point-sensitivity}
\end{equation}

{\(D_\star\) depends only on \(z^\star\), and it equals the \(R\to\infty\) limit of full BPTT when that trajectory converges. Its truncated Neumann estimate \(D_b\) has a bias \(D_\star-D_b=J_\star^bD_\star\) that decays geometrically in \(b\). Both, however, are valid only once a stable fixed point exists. Applied from initialization, such endpoint gradients destabilize training (\appref{app:tbptt-study}), so implicit models first warm up with unrolled backpropagation~\citep{bai2019deep,bai2020multiscale,nguyen2023efficient}.}

Truncated BPTT (TBPTT) instead differentiates through only the last \(b\) recurrences~\citep{williams1990online,aicher2020adaptive}, so its memory and compute scale with \(b\) rather than \(R\). Its gradient \(\tilde D_b\) evaluates {the Jacobians \(J_r=\partial_zF_\theta(z^r;x)\) and \(B_r=\partial_\theta F_\theta(z^r;x)\) at visited states \(z^r\)} rather than at \(z^\star\). This frees TBPTT from needing a fixed point: from any state, \(\tilde D_b\) is the exact gradient of the loss through the last \(b\) recurrences with the earlier state held fixed, so every update descends a well-defined objective. As the visited states converge to \(z^\star\), \(\tilde D_b\) approaches \(D_b\) regardless of \(R\). TBPTT is therefore as stable as full BPTT early in training and recovers the fixed-point gradient once fixed points form, using a single estimator with no warm-up phase or switch. 

\textbf{Furthermore, truncation alone shapes fixed points.} {A fixed point has two testable properties: (i)~extra recurrences leave the state unchanged, and (ii)~terminal KV sharing costs little (Lemma~\ref{lem:joint-frozen}). At fixed depth $R=5$, full BPTT fails both: extrapolating to $R=64$ raises perplexity by over $25\%$, and terminal KV sharing raises it by $21\%$. In contrast, TBPTT through the last two recurrences ($b=2$) raises perplexity by at most $2\%$ and $3\%$, respectively} (\appref{app:tbptt-study}). {With sampled depths enabled, training with $b\geq4$ also matches full BPTT within $1\%$ perplexity at $400$M (Table~\ref{tab:tbptt-pln5-windows} in \appref{app:tbptt-study}).}

\paragraph{Post-training: reusing rollout states for RL.}
\label{sec:why-rl}
RL usually runs the recurrent forward pass twice: once without gradients to sample rollouts, and again with gradients to score them. 
\begin{algorithm}[tbp]
\caption[Policy update from saved rollout states]{Policy update from saved rollout states (Neumann-$(b-1)$). A single forward pass through the recurrent block records its graph $\mathcal G$ (line~\ref{alg:rl-reuse-record}); the Neumann loop (lines~\ref{alg:rl-reuse-loop-start}--\ref{alg:rl-reuse-loop-end}) and the parameter gradient (line~\ref{alg:rl-reuse-param}) reuse $\mathcal G$ without running the block again. In code, lines~\ref{alg:rl-reuse-loop-start}--\ref{alg:rl-reuse-loop-end} run as a backward hook on $\widetilde z$.}
\label{alg:rl-reuse}
\small
\begin{algorithmic}[1]
\Require saved endpoints $z^R$ of the sampled sequences $x$ (no gradient), advantages, window $b$
\State $\bar z\gets\sg(z^R)$, marked as requiring gradients
\State $f\gets F_\theta(\bar z;x)$, recording its autograd graph $\mathcal G$\label{alg:rl-reuse-record}\Comment{the only pass through the recurrent block}
\State $\widetilde z\gets\bar z+\bigl(f-\sg(f)\bigr)$\Comment{forward value $z^R$}
\State $\mathcal L\gets$ policy-gradient loss of the coda readout of $\widetilde z$ on the sampled tokens
\State $g\gets\partial\mathcal L/\partial\widetilde z$\Comment{backward through the readout}
\State $v\gets g$\label{alg:rl-reuse-loop-start}
\For{$k=1,\ldots,b-1$}
    \State $v\gets g+J_R^{\top}v$\Comment{backward through $\mathcal G$ to $\bar z$: \texttt{autograd.grad}$(f,\bar z,v)$, $\mathcal G$ reused}
\EndFor\label{alg:rl-reuse-loop-end}
\State backpropagate $v$ through $\mathcal G$ to $\theta$\label{alg:rl-reuse-param}\Comment{$\mathcal G$ reused once more: $\theta$ receives $B_R^{\top}v=(\partial_\theta F_\theta)^{\top}v$}
\State optimizer step on $\theta$ with these and the readout gradients; discard $z^R$
\end{algorithmic}
\end{algorithm}

Near a fixed point, the second pass need not replay the trajectory. Full BPTT computes $\tilde D_R$ of equation~\eqref{eq:fixed-point-sensitivity}; when every visited state is close to $z^\star$, its Jacobians are close to $J_\star$ and $B_\star$, so $\tilde D_R$ approaches $D_\star$, whose truncated Neumann series can be evaluated at the last state $z^R$:
\begin{equation}
 \underbrace{\tilde D_R=\sum_{k=0}^{R-1}\Bigl(\prod_{j=1}^{k}J_{R-j}\Bigr)B_{R-1-k}}_{\text{full BPTT: all $R$ recurrences}}
 \;\approx\;
 \underbrace{\sum_{k=0}^{b-1}J_R^{k}B_R}_{\text{reuse: endpoint $z^R$ only}}
 \label{eq:rl-reuse}
\end{equation}
The update is on-policy, so the endpoints $z^R$ saved during rollout equal those a fresh forward pass would produce. Reuse applies the recurrent block once at $z^R$ and evaluates the right side of equation~\eqref{eq:rl-reuse} by looping the backward pass $b-1$ times over that one recorded recurrence, $v\leftarrow g+J_R^{\top}v$ (Neumann-$(b-1)$, Algorithm~\ref{alg:rl-reuse}). Full BPTT instead runs and backpropagates through all $R$ recurrences, and recomputation reruns them without gradients to rebuild $z^R$. The update cost and memory of reuse are thus independent of $R$, though rollout generation still takes most of the training time (\secref{sec:exp-rl}).

\FloatBarrier

\subsection{Inference: {Endpoint Caching and Distillation}}
\label{sec:why-inference}
\paragraph{Decoding: cache only terminal KV banks.}
\label{sec:inference}
The KV cache of conventional looped models grows linearly with recurrence depth. \emph{Terminal KV sharing} removes this growth: each prefix token keeps only its terminal KV, and every recurrence of later tokens attends to it (Figure~\ref{fig:prefix-paths}b).\footnote{Decoding reads only terminal banks, so prefill and decoding can run at separate recurrence budgets (\appref{app:pd-results}); prefill gains saturate at moderate depth while decoding gains persist deeper.} This requires a model that predicts well from terminal context.

However, {teacher-forced} training alone does not match this setting exactly. It updates all tokens of a sequence in parallel, so recurrence $r$ of the current token reads prefix banks from depth $r$ (Figure~\ref{fig:prefix-paths}a); sharing instead supplies terminal banks from the first recurrence. This gap is small when the prefix banks change little across depth and the current token's update contracts, so an error introduced by early context decays over later recurrences. Sampled training depths push the model toward this regime: deep draws let supervised recurrences read nearly settled prefix banks, and supervising many depths penalizes states that drift as recurrence continues (Figure~\ref{fig:fixed-vs-pln-depth-scaling}).

{Lemma~\ref{lem:joint-frozen} formalizes the limiting case, in which the prefix banks converge.}
For the current token $t$, let $h^r$ denote its state at recurrence $r$, $C^r_{<t}$ the prefix-dependent input to its update (including the prefix's per-layer KV banks), and $f_\theta(h;C)$ the hidden-state component of $F_\theta$, as a function of state $h$ and context $C$.
By causality, $\{C^r_{<t}\}$ does not depend on the current token's states, so the prefix enters the current token's iteration as an external input sequence.

\begin{lemma}[Joint iteration and frozen-prefix decoding, proof in \appref{app:joint-frozen}]
\label{lem:joint-frozen}
Fix the parameters and the input sequence, and suppose $C^r_{<t} \to C^\star_{<t}$.
Let $\mathcal H \subseteq \mathbb R^d$ be nonempty and closed, and let $\mathcal C = \{C^r_{<t}\}_{r \ge 0} \cup \{C^\star_{<t}\}$.
Suppose that for every $C \in \mathcal C$, $f_\theta(\cdot;C)$ maps $\mathcal H$ into itself and is $\kappa$-contractive with $\kappa<1$, and that for every $h \in \mathcal H$, $f_\theta(h;\cdot)$ is $\beta$-Lipschitz on $\mathcal C$.
Then, from any initial states in $\mathcal H$, the joint and frozen-prefix iterations
\begin{equation}
\label{eq:joint-frozen-iterations}
h^{r+1}=f_\theta(h^r;C^r_{<t}),
\qquad
\widetilde h^{r+1}=f_\theta(\widetilde h^r;C^\star_{<t})
\end{equation}
both converge to the unique fixed point $h^\star = f_\theta(h^\star;C^\star_{<t})$ in $\mathcal H$.
Consequently, any readout continuous in state and context yields the same limiting next-token distribution under both iterations.
\end{lemma}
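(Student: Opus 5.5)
The plan is to reduce both claims to the Banach fixed-point theorem plus a standard perturbation argument for contractions driven by a convergent input sequence. First, since $C^\star_{<t}\in\mathcal C$, the map $f_\theta(\cdot;C^\star_{<t})$ is a $\kappa$-contraction of the closed, hence complete, subset $\mathcal H\subseteq\mathbb R^d$ into itself. Banach's theorem then gives a unique fixed point $h^\star\in\mathcal H$. It also shows that the frozen-prefix iteration $\widetilde h^{r+1}=f_\theta(\widetilde h^r;C^\star_{<t})$ converges to $h^\star$ from any $\widetilde h^0\in\mathcal H$, with $\norm{\widetilde h^r-h^\star}\le\kappa^r\norm{\widetilde h^0-h^\star}$. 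The self-map assumption for every $C\in\mathcal C$ ensures that both iterations stay in $\mathcal H$, so all Lipschitz hypotheses apply along the trajectories.

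For the joint iteration, set $e_r=\norm{h^r-h^\star}$ and split
\[
h^{r+1}-h^\star=\bigl[f_\theta(h^r;C^r_{<t})-f_\theta(h^\star;C^r_{<t})\bigr]+\bigl[f_\theta(h^\star;C^r_{<t})-f_\theta(h^\star;C^\star_{<t})\bigr].
\]
The first bracket is bounded by contraction and the second by the $\beta$-Lipschitz dependence on context at the fixed state $h^\star\in\mathcal H$. Writing $\delta_r=\norm{C^r_{<t}-C^\star_{<t}}$, this yields the scalar recursion $e_{r+1}\le\kappa e_r+\beta\delta_r$ with $\delta_r\to0$. Unrolling gives
\[
e_r\le\kappa^r e_0+\beta\sum_{s=0}^{r-1}\kappa^{r-1-s}\delta_s.
\]
The first term vanishes geometrically. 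The convolution term is the one step needing care; I would handle it by splitting the sum at an index $S$. Given $\varepsilon>0$, choose $S$ with $\delta_s<\varepsilon$ for all $s\ge S$. The tail then contributes at most $\beta\varepsilon/(1-\kappa)$. The head contributes at most $\beta\,\kappa^{r-S}\sum_{s<S}\delta_s$, which tends to $0$ as $r\to\infty$ because the sum is finite and fixed. Hence $\limsup_r e_r\le\beta\varepsilon/(1-\kappa)$ for every $\varepsilon$, so $h^r\to h^\star$. Equivalently, this is the discrete Toeplitz lemma: an $\ell^1$ kernel convolved with a null sequence is null.

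For the consequence, let $g(h,C)$ be a readout continuous in state and context. Then $(h^r,C^r_{<t})\to(h^\star,C^\star_{<t})$ and $(\widetilde h^r,C^\star_{<t})\to(h^\star,C^\star_{<t})$. Continuity gives the common limit $g(h^\star,C^\star_{<t})$, and the softmax over logits is likewise continuous, so the limiting next-token distributions agree. Uniqueness of $h^\star$ within $\mathcal H$ comes from Banach's theorem and makes the limit independent of initialization.

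I expect the only real obstacle to be the technical point that $\delta_r$ requires a norm on the context space. I would treat $\mathcal C$ as a subset of a normed space, namely the concatenated KV tensors, so that the convergence $C^r_{<t}\to C^\star_{<t}$ and the $\beta$-Lipschitz hypothesis refer to the same norm. Everything else is routine.
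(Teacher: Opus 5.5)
Your proof is correct and follows the paper's argument. You apply Banach's theorem to the frozen iteration, then derive the same one-step recursion $e_{r+1}\le\kappa e_r+\beta\|C^r_{<t}-C^\star_{<t}\|$, which the paper obtains from a joint Lipschitz bound built by the same triangle-inequality split. The only difference is how the recursion is closed: you unroll it and use a Toeplitz-type $\varepsilon$-split, whereas the paper bounds $e_r$ uniformly by induction and takes $\limsup$ to get $\bar\delta\le\kappa\bar\delta$; both work, and your unrolled bound reduces to the paper's finite-depth estimate when the context error is constant.
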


In the limit, therefore, sequence-parallel supervision near fixed points transfers to {decoding} with terminal KV sharing.
{Empirically, at the training depth $R=5$, sharing {keeps $4$ of $12$ banks and} raises perplexity over the full cache by at most {$0.85\%$} (\appref{app:deep-r-kv}).}

\paragraph{Prefill: distill the recurrent endpoint.}
\label{sec:why-distill}
{Because decoding reads only terminal KV banks, prefill can be distilled into a student that predicts only the recurrent endpoint; without terminal sharing, the student would have to reproduce the KV banks of every recurrence}~\citep{horton2024kvprediction}. {The student is a lightweight network $S_\psi$ that replaces the iterated recurrent core during prefill; the prelude and coda are reused from the teacher.} From the prelude output $e=P_\theta(x)$, the student predicts $\widehat z=S_\psi(e)\approx z_\theta^R(x)$; prefill then applies one teacher recurrence and the coda to $\widehat z$ and retains each physical attention layer's KV bank (Figure~\ref{fig:prefix-paths}c). Because fixed-point shaping makes the endpoint an approximate fixed point, one recurrence from a nearby $\widehat z$ yields banks close to the teacher's terminal banks. Decoding is unchanged: the teacher runs at its original depth with terminal KV sharing.

We train the student against depth-$R$ targets computed online from the frozen teacher, using normalized hidden-state regression~\citep{romero2015fitnets} plus a next-token KL term:
\begin{equation}
\mathcal L_{\rm distill}(\psi)
=\E_x\!\left[
\frac{
\norm{\widehat z-z_\theta^R(x)}_F^2
}{
N d\,\sigma_{\rm T}^2
}
+\frac1N\sum_{t=1}^{N}\mathrm{KL}\bigl(p_t\,\Vert\,\widehat p_t\bigr)
\right],
\label{eq:hidden-distillation}
\end{equation}
where $N$ is the prompt length, $d$ is the hidden width, and $\sigma_{\rm T}^2$ is a fixed teacher-state energy calibrated on training data, so the first term is a per-coordinate squared error on the teacher's scale. The second term compares the teacher's next-token distribution $p_t$ at depth $R$ with the distribution $\widehat p_t$ obtained by applying the frozen coda and output head to $\widehat z$. We initialize the student from the teacher's recurrent core, so before training it approximately outputs the teacher's state after one recurrence; equation~\eqref{eq:hidden-distillation} then moves this output toward the depth-$R$ endpoint (\appref{app:distill-ablation}).

\secref{sec:exp-distill} evaluates distilled prefill. With a suitable loss, the student may also be able to serve as a standalone language model or a drafter for speculative decoding; we leave these uses, and possible connections between recurrent state refinement and flow-based language modeling, to future work.

\section{Improving Fixed-Point Shaping}
\label{sec:promoting}
The benefits in \secref{sec:why} depend on how training shapes fixed points. {Two} common techniques do this shaping: depth sampling sets where supervision lands, and input injection sets what a fixed point retains about the input. We improve both {(\secref{sec:fixed-point-depth-prior} and~\secref{sec:architecture})}.

\subsection{Learning a Depth Prior for Fixed-Point Training}
\label{sec:fixed-point-depth-prior}

\begin{algorithm}[tbp]
\caption[Depth-prior learning]{Depth-prior learning. { The prior learns from the detached model CE (line~\ref{alg:learned-prior-reward}) and adds no forward pass. The advantage (line~\ref{alg:learned-prior-adv}) centers the reward with EMA statistics (lines~\ref{alg:learned-prior-ema-start}--\ref{alg:learned-prior-ema-end}) to reduce variance and rescales it to CE scale, so $\lambda_H$ and $\lambda_m$ need no retuning across reward designs (\appref{app:prior-results}).}}
\label{alg:learned-prior}
\small
\begin{algorithmic}[1]
\Require $\theta_{n},\phi_{n}$, EMA statistics at $n-1$;
$\nu,\lambda_H,\lambda_m,\bar R,b_{\max}$

\State $p_{\phi_{n}}\gets\softmax(\phi_{n})$
\For{microbatch $i$}
    \State $r_i\sim p_{\phi_{n}}$;\quad
    $\ell_i\gets\CE(\theta_{n},r_i)$;\quad
    $u_i\gets \exp(-\ell_i)$\label{alg:learned-prior-reward}
    \Comment{$\ell_i$ detached}
    \State accumulate CE gradients through the final
    $\min(r_i,b_{\max})$ recurrences
    \State $A_i\gets s_{n-1}(u_i-\hat u_{n-1})$\label{alg:learned-prior-adv}
\EndFor
\State $\mathcal J_{n}\gets
-\widehat{\E}_{n}[\sg(A_i)\log p_{\phi_{n}}(r_i)]
-\lambda_H H(p_{\phi_{n}})
+\lambda_m(\E_{p_{\phi_{n}}}[R]-\bar R)^2$
\State $\theta_{n+1}\gets\ModelStep(\theta_{n})$;\quad
$\phi_{n+1}\gets\Adam(\phi_{n},\nabla_{\phi_{n}}\mathcal J_{n})$
\State $\EMA_{n}[\chi]\gets
\nu\EMA_{n-1}[\chi]+(1-\nu)\widehat{\E}_{n}[\chi],
\quad \chi\in\{\ell,u,\ell^2,u^2\}$\label{alg:learned-prior-ema-start}
\State $\sigma_{\chi,n}\gets
\max\!\left\{\sqrt{\EMA_{n}[\chi^2]-\EMA_{n}[\chi]^2},\epsilon\right\},
\quad \chi\in\{\ell,u\}$
\State $\hat u_n\gets\EMA_{n}[u]$;\qquad
$s_{n}\gets\sigma_{\ell,n}/\sigma_{u,n}$\label{alg:learned-prior-ema-end}
\end{algorithmic}
\end{algorithm}

The sampled training depth sets how converged the states are when supervision arrives, and thus how well the model tolerates terminal KV sharing (\secref{sec:inference}). Huginn~\citep{huginn} samples the training depth from a shifted Poisson--log-normal (PLN) distribution:
\begin{equation}
\xi\sim\mathcal N(\mu,\sigma_\xi^2),\qquad
K\mid\xi\sim\operatorname{Poisson}\bigl(\exp(\xi)\bigr),\qquad
R=K+1,
\label{eq:huginn-pln}
\end{equation}
where $\sigma_\xi=\tfrac12$ and $\mu=\log(\bar R-1)-\sigma_\xi^2/2$, so that $\mathbb E[R]$ equals the target mean $\bar R>1$.
Under a fixed training budget, \textbf{the choice of depth prior trades prediction quality at the target depth against fixed-point shaping} (Figure~\ref{fig:fixed-vs-pln-depth-scaling}). Fixed-depth training concentrates supervision at one depth and predicts better there. PLN sampling spreads supervision across depths and yields the fixed-point behavior that supports KV sharing (Table~\ref{tab:prior-ablation}) and test-time scaling.\footnote{\appref{app:prior-results} gives the depth distributions and comparison details (Figure~\ref{fig:fixed-pln-learned-distributions}).}

\begin{figure}[!htbp]
\centering
\includegraphics[width=\linewidth]{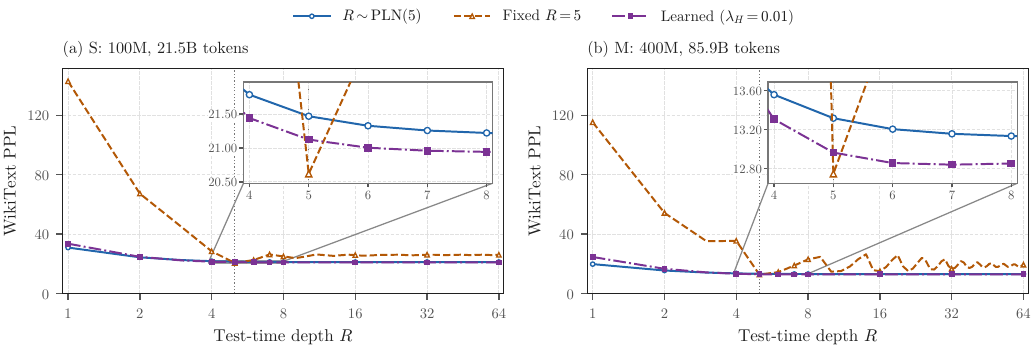}
\vspace{-3mm}
\caption{\textbf{The learned prior narrows the trade-off between fixed-depth training and PLN sampling.}
{Native WikiText-103 test PPL across test-time depths at matched training budgets.
Fixed-depth training is best at training-matched depth $R=5$ but degrades sharply at others. PLN sampling performs well across all depths but falls behind at $R=5$. The learned prior lies between them, narrowing the gap at $R=5$ while staying stable across depths.}}
\label{fig:fixed-vs-pln-depth-scaling}
\end{figure}

We therefore \emph{learn the depth prior}{: it spreads supervision} across depths but adapts its allocation {to} training feedback, balancing prediction quality against fixed-point shaping at equal mean depth.

\paragraph{Learned prior.}
We replace the PLN prior with a categorical prior $p_\phi=\operatorname{softmax}(\phi)$ over depths $\mathcal R=\{1,\ldots,R_{\max}\}$, with learnable logits $\phi$. At each optimizer step $n$, every microbatch draws a depth $r\sim p_{\phi_{n}}$ and receives supervision at that depth. We learn $p_\phi$ from this feedback by minimizing
\begin{equation}
\mathcal J_{\mathrm{prior}}(\phi)
=\underbrace{-\widehat{\mathbb E}_{n}\!\bigl[\sg(A_{n})\log p_\phi(r)\bigr]}_{\text{prediction quality}}
-\underbrace{\lambda_H H(p_\phi)\vphantom{\widehat{\mathbb E}_{n}}}_{\text{depth diversity}}
+\underbrace{\lambda_m\bigl(\mathbb E_{p_\phi}[R]-\bar R\bigr)^2}_{\text{budget control}},
\label{eq:depth-prior-objective}
\end{equation}
where $\widehat{\mathbb E}_{n}$ averages over microbatches, weighted by their unmasked positions. The advantage $A_{n}$ scores the sampled depth. We map the mean next-token CE $\ell$ to a reward $u=\omega(\sg(\ell))$ with a decreasing function $\omega$; we use inverse perplexity, $\omega(\ell)=\exp(-\ell)$. The advantage is then $A_{n}=s_{n-1}(u-\hat u_{n-1})$: an exponential moving average (EMA) baseline $\hat u_{n-1}$ centers the reward, and $s_{n-1}=\sigma_{\ell,n-1}/\sigma_{u,n-1}$ rescales it to the scale of CE, following PopArt~\citep{vanhasselt2016learning,hessel2019popart}. A positive $A_{n}$ thus raises the probability of the sampled depth.

\paragraph{Entropy regularization.}
Prediction feedback under a budget constraint favors depths that already predict well. Left alone, it concentrates supervision on a narrow range of depths and can weaken fixed-point shaping. The entropy term
\(
H(p_\phi)=-\sum_{r\in\mathcal R} p_\phi(r)\log p_\phi(r)
\)
counters this tendency. Increasing \(\lambda_H\) spreads the depth allocation. Since depths start at $1$, the wider allocation extends mainly above $\bar R$, toward $R_{\max}$. This trades finite-budget predictive performance for supervision at later depths, where settled prefix context supports KV sharing (\appref{app:m-entropy-matched}). The coefficient $\lambda_H$ thus gives a direct knob for balancing performance and memory efficiency.

\paragraph{Budget control.}
The third term penalizes deviations of the expected depth from the target $\bar R$, keeping the allocation near the prescribed forward recurrence budget. \appref{app:prior-algorithm} gives one update step (Algorithm~\ref{alg:learned-prior}) and implementation details.

\subsection{Orthogonal Input Injection}
\label{sec:architecture}
Input injection feeds the same embedding $q_t$ into every recurrence~\citep{bai2019deep,huginn,parcae,huang2026loopedmodels}, keeping the fixed points input conditioned. {A common additive form is}
\begin{equation}
{\zeta_t^r=Mh_t^r+q_t,\qquad \mathbf H^{r+1}=\Gamma_\theta(\zeta_1^r,\ldots,\zeta_N^r),}
\label{eq:additive-input-injection}
\end{equation}
{where $h_t^r$ is the recurrent state at token $t$ and recurrence $r$, $M$ controls state carryover, and $\Gamma_\theta$ is the recurrent transformer core. Parcae}~\citep{parcae} {uses a diagonal carryover $M=\Lambda=\operatorname{diag}(\exp(-\Delta\odot \alpha))$ with learned positive vectors $\Delta$ and $\alpha$, so $\Lambda$ has entries in $(0,1)$ and $\norm{\Lambda}_2<1$. It injects the prelude output $e_t=P_\theta(x)_t$ as $q_t=\Delta\odot We_t$.}\footnote{Parcae applies RMSNorm to $e_t$ before injection; OrthoInj drops it. Ablations in \appref{app:architecture-results}.}

\paragraph{Orthogonal injection.}
{{In Parcae, the carryover $\Lambda h_t^r$ can have a component along $q_t$ that amplifies or cancels the input, so the input a fixed point retains depends on the state; this can hurt performance (\appref{app:architecture-results}).} {OrthoInj} removes this component:}
\begin{equation}
{\zeta_t^r=Q_{q_t}\Lambda h_t^r+q_t,\qquad Q_{q_t}=I-\frac{q_tq_t^\top}{\norm{q_t}_2^2+\epsilon}.}
\label{eq:orthoinj}
\end{equation}
{For $q_t\ne0$ and $\epsilon=0$, the component of $\zeta_t^r$ along $q_t$ is exactly $q_t$ at every recurrence, including at a fixed point, while the orthogonal part evolves freely; the stabilizer $\epsilon>0$ makes this approximate. Since $\norm{Q_{q_t}}_2\le1$, the projected carryover stays contractive: $\norm{Q_{q_t}\Lambda}_2\le\norm{\Lambda}_2<1$.}

\FloatBarrier
\section{Experiments}
\label{sec:experiments}
We first evaluate the two training changes from \secref{sec:promoting}: the learned depth prior and OrthoInj. We then test {rollout-state reuse in RL (\secref{sec:why-rl}) and } distilled prefill (\secref{sec:why-distill}).
\subsection{Setup}
\vspace{-5mm}
\label{sec:setup}
\begin{papertablepair}{tables/main/training_scales.tex}
\textbf{Models and training.} We build looped models from dense Llama Transformer blocks~\citep{touvron2023llama} at Small (S), Medium (M), and Large (L) scales (Table~\ref{tab:training-scales}). Following Huginn~\citep{huginn}, each model has one prelude block, a two-block recurrent core, and one coda block{; we write this layout as $1{+}2{\times}R{+}1$, since $R$ recurrences give a logical depth of $2+2R$ blocks}. {Untied~4 stacks four distinct blocks, matching the physical depth and parameter count.} {{Untied~12} stacks twelve distinct blocks, matching the logical depth of five recurrences, with three times the parameters.}
\end{papertablepair}
\vspace{-2mm}
We train on a weighted multilingual mixture that includes TxT360~\citep{txt360data2024}, tokenized with the Jais64k tokenizer~\citep{sengupta2023jais}. Optimization uses AdamW~\citep{loshchilov2019decoupled} with a warmup--stable--decay schedule~\citep{hu2024minicpm}: $5\%$ linear warmup, then $10\%$ cosine decay to one-tenth of the peak learning rate. We sweep the peak learning rate at each scale and tune the main configurations separately (\appref{app:setup-details}).
\paragraph{Evaluation.}
We report perplexity (PPL) on \textsc{MixVal}, a held-out set drawn from the TxT360 sources, and on WikiText-103~\citep{merity2016pointer}. Downstream accuracy covers LAMBADA~\citep{lambada}, HellaSwag (HS)~\citep{hellaswag}, PIQA~\citep{piqa}, ARC-Easy/Challenge~\citep{arc}, OpenBookQA (OBQA)~\citep{openbookqa}, and SciQ~\citep{sciq}. Generation uses GSM8K (GSM)~\citep{gsm8k}, DROP~\citep{drop}, and MBPP+~\citep{mbpp,evalplus}. Unless noted, we evaluate at $R=5$ with terminal KV sharing (\secref{sec:inference}): prefill caches only the terminal KV of each prompt {token}, and choice tasks and generation then decode one token at a time. For PPL, we prefill the first half of each $8{,}192$-token sequence and score the second half in one parallel pass, whose tokens read the prefix through terminal banks and one another as in training. \appref{app:protocols} gives the full protocols.

\subsection{Learning the Depth Prior}
\label{sec:exp-prior}
A good depth prior should give low loss at the target depth and lose little when every iteration shares the terminal iteration's KV cache. We compare four priors with mean depth $5$: fixed-depth training ({Fixed $R{=}5$}), Huginn's {fixed} Poisson log-normal prior ({Fixed} PLN-5), and our learned prior with and without the entropy term. Both learned priors start from PLN-5 and cap depth at $R_{\max}=64$.
\begin{table}[!ht]
\centering\resulttablestyle
\caption{\textbf{{The learned prior far surpasses Untied~4 at the same parameter count and KV cache, and approaches Untied~12 with $3\times$ fewer of both.}} The looped models ($1{+}2{\times}R{+}1$) are evaluated at $R=5$, a logical depth of $12$ blocks, with terminal KV sharing, which caches $4$ KV banks, one per physical block, instead of $12$. Learned priors start from the Fixed PLN-5 prior. {FLOPs are in units of $10^{19}$.} Generation: GSM8K accuracy, DROP F1, and MBPP+ pass@1. Untied~4 and Untied~12 stack four and twelve distinct blocks and decode with their own full KV caches; Untied~4 matches the looped models' parameters and cache size, and Untied~12 their logical depth{, with $3\times$ their non-embedding parameters and KV cache}. {Fixed-depth training breaks KV sharing: it loses over half its GSM8K accuracy at M and L relative to the full cache (Appendix Table~\ref{tab:prior-ablation-fullkv}).} See \appref{app:setup-terminal} for evaluation details.}
\label{tab:prior-ablation}
\fontsize{9}{10.8}\selectfont\renewcommand{\arraystretch}{\resultrowstretch}
\begin{NiceTabular}{@{\hspace{6pt}}lcccccccccccccc@{\hspace{6pt}}}
\CodeBefore
\rowcolor{papergray}{3,10,17}
\Body
\toprule
\multicolumn{2}{c}{Training} & \multicolumn{2}{c}{PPL $\downarrow$} & \multicolumn{8}{c}{Downstream (\%) $\uparrow$} & \multicolumn{3}{c}{Generation (\%) $\uparrow$} \\
\cmidrule(lr){1-2}\cmidrule(lr){3-4}\cmidrule(lr){5-12}\cmidrule(lr){13-15}
Prior & FLOPs & Val. & Wiki & LAMB. & HS & PIQA & ARC-E & ARC-C & OBQA & SciQ & AVG & GSM & DROP & MBPP+ \\
\midrule
\multicolumn{15}{l}{\textbf{S scale: (100M, 21.5B tokens)}} \\
Fixed $R{=}5$ & {7.384} & 5.95 & 20.59 & 18.24 & 31.03 & 62.24 & 46.13 & \textbf{25.09} & 15.20 & 78.20 & 39.45 & 1.36 & 8.33 & 0.00 \\
Fixed PLN-5 & {7.292} & 5.73 & 20.19 & 26.41 & 32.42 & 61.70 & \textbf{47.56} & 23.98 & 16.20 & 81.00 & 41.32 & 1.52 & 13.29 & 0.75 \\
Learned, $\lambda_H{=}0$ & {7.336} & \textbf{5.67} & \textbf{19.85} & 26.97 & 32.23 & 62.57 & 45.66 & 24.83 & 16.80 & \textbf{81.70} & 41.54 & 1.44 & 12.54 & \textbf{0.88} \\
Learned, $\lambda_H{=}0.01$ & {7.328} & \textbf{5.67} & 19.91 & \textbf{27.36} & \textbf{32.51} & \textbf{63.00} & 45.50 & \textbf{25.09} & \textbf{17.20} & 80.90 & \textbf{41.65} & \textbf{1.59} & \textbf{14.08} & 0.62 \\
\cmidrule(lr){1-15}
Untied 4 & {3.259} & 6.57 & 25.19 & 23.95 & 30.26 & 59.96 & 44.57 & 23.21 & 16.60 & 77.60 & 39.45 & 1.97 & 11.39 & 0.12 \\
Untied 12 & {7.232} & 4.99 & 17.24 & 31.63 & 34.85 & 65.13 & 51.26 & 26.11 & 15.80 & 82.20 & 43.85 & 1.90 & 12.77 & 2.88 \\
\midrule
\multicolumn{15}{l}{\textbf{M scale: (400M, 85.9B tokens)}} \\
Fixed $R{=}5$ & {92.39} & 4.21 & 12.79 & 34.99 & 41.83 & 67.03 & 56.65 & 29.61 & \textbf{19.40} & \textbf{88.70} & 48.32 & 3.34 & 14.98 & 3.25 \\
Fixed PLN-5 & {91.00} & 4.00 & 12.55 & \textbf{39.69} & 44.64 & 67.68 & 56.61 & \textbf{29.78} & 16.40 & 88.60 & 49.06 & 13.72 & 17.75 & 10.38 \\
Learned, $\lambda_H{=}0$ & {92.01} & \textbf{3.92} & \textbf{12.24} & 39.22 & 45.63 & \textbf{68.88} & \textbf{58.75} & 28.07 & \textbf{19.40} & 88.10 & \textbf{49.72} & 14.33 & \textbf{20.29} & \textbf{13.12} \\
Learned, $\lambda_H{=}0.01$ & {92.14} & 3.93 & 12.28 & 38.58 & \textbf{46.17} & 67.41 & 57.58 & 28.75 & 17.00 & 88.00 & 49.07 & \textbf{15.62} & 17.11 & 10.38 \\
\cmidrule(lr){1-15}
Untied 4 & {36.77} & 4.68 & 15.73 & 31.19 & 37.43 & 63.71 & 50.72 & 27.65 & 15.80 & 82.10 & 44.08 & 2.35 & 13.12 & 2.50 \\
Untied 12 & {89.95} & 3.60 & 11.03 & 42.19 & 49.27 & 70.18 & 60.19 & 30.80 & 19.00 & 90.50 & 51.73 & 16.15 & 21.11 & 16.38 \\
\midrule
\multicolumn{15}{l}{\textbf{L scale: (1.6B, 343.6B tokens)}} \\
Fixed $R{=}5$ & {1272} & 3.24 & 8.92 & 48.85 & 58.19 & 72.31 & 66.29 & 36.01 & \textbf{24.40} & 92.30 & 56.90 & 21.23 & 27.98 & 12.75 \\
Fixed PLN-5 & {1253} & 3.10 & 8.76 & \textbf{55.17} & 60.80 & 73.56 & 69.74 & 38.74 & 23.80 & 93.30 & 59.30 & 47.61 & 34.11 & 27.00 \\
Learned, $\lambda_H{=}0$ & {1272} & 3.12 & 8.75 & 53.02 & 59.79 & 72.63 & 68.35 & 38.82 & 24.00 & 92.50 & 58.44 & 36.47 & 31.01 & 19.38 \\
Learned, $\lambda_H{=}0.01$ & {1273} & \textbf{3.07} & \textbf{8.68} & \textbf{55.17} & \textbf{61.57} & \textbf{73.72} & \textbf{71.09} & \textbf{40.44} & \textbf{24.40} & \textbf{93.60} & \textbf{60.00} & \textbf{47.92} & \textbf{34.56} & \textbf{27.25} \\
\cmidrule(lr){1-15}
Untied 4 & {465} & 3.66 & 11.14 & 40.35 & 48.63 & 70.29 & 62.37 & 30.63 & 16.40 & 88.90 & 51.08 & 7.20 & 18.62 & 10.50 \\
Untied 12 & {1233} & 2.89 & 7.93 & 57.62 & 64.65 & 74.32 & 72.22 & 41.89 & 25.20 & 94.40 & 61.47 & 48.82 & 39.82 & 31.38 \\
\bottomrule
\CodeAfter
\begin{tikzpicture}
\foreach \sepcol in {3,5,13}{
\draw[draw=resultgrouprule,line width=0.35pt] (1-|\sepcol) -- (3-|\sepcol);
\draw[draw=resultgrouprule,line width=0.35pt] (4-|\sepcol) -- (10-|\sepcol);
\draw[draw=resultgrouprule,line width=0.35pt] (11-|\sepcol) -- (17-|\sepcol);
\draw[draw=resultgrouprule,line width=0.35pt] (18-|\sepcol) -- (24-|\sepcol);
}
\end{tikzpicture}
\end{NiceTabular}
\end{table}

\paragraph{{Fixed-depth training breaks KV sharing.}}
{Fixed-depth training is the worst under forced KV sharing, with the highest perplexity, and sharing also cuts its GSM8K accuracy by over half at M and L (Table~\ref{tab:prior-ablation}). It needs full KV to achieve its best performance (Figure~\ref{fig:fixed-vs-pln-depth-scaling}, \appref{app:prior-results}).}

\paragraph{{The learned prior keeps sharing intact and outperforms {Fixed} PLN-5.}}
{With terminal KV sharing, it lowers validation PPL relative to Fixed PLN-5 by {$0.7$--$1.8\%$}, for at most {$1.6\%$} more training FLOPs. It raises the downstream average over Fixed PLN-5 at every scale, is the highest at S and L, and leads all three generation tasks at L. The entropy term is essential at L: without it, GSM8K drops from $47.92$ to $36.47$. {At S scale, training with its final distribution frozen from the first update lowers validation PPL by a further $0.8\%$, so the learned prior finds a better depth distribution without a search over distributions (\appref{app:static-prior}).} \appref{app:prior-results} sweeps $\lambda_H$ (Figure~\ref{fig:m-entropy-ablation}), gives training curves and additional ablations.}

\paragraph{{The learned prior steadily surpasses {Untied~4} at the same physical depth, with a lead that grows with scale, and approaches {Untied~12} at the same logical depth, with a gap that narrows.}}
{With $\lambda_H{=}0.01$, its downstream-average lead over {Untied~4} grows from $2.2$ points at S to $5.0$ at M and $8.9$ at L. Its validation PPL exceeds that of {Untied~12} by $13.6\%$ at S, $8.9\%$ at M, and $6.3\%$ at L. At L it trails {Untied~12} by $1.5$ points on the downstream average and $0.9$ on GSM8K, where it leads {Untied~4} by $40.7$.}

\subsection{Input Injection}
\label{sec:exp-architecture}\label{sec:exp-dynamics}
{We compare OrthoInj with DEQ-QKV~\citep{bai2019deep}, Huginn-Linear~\citep{huginn}, and Parcae-Decay~\citep{parcae}, all trained with PLN-5 (details in \appref{app:architecture-variants}).}

\begin{table}[!ht]
\centering\resulttablestyle
\caption{\textbf{{OrthoInj has the highest downstream average and the lowest validation PPL at every scale.}} All models train with PLN-5 and are evaluated at $R=5$ with terminal KV sharing, as in Table~\ref{tab:prior-ablation}. Recipes are in \appref{app:architecture-variants}.}
\label{tab:architecture}
\fontsize{9}{10.8}\selectfont\renewcommand{\arraystretch}{\resultrowstretch}
\setlength{\linewidth}{0.9\linewidth}
\begin{NiceTabular}{@{\hspace{6pt}}lccccccccccccc@{\hspace{6pt}}}
\CodeBefore
\rowcolor{papergray}{3,8,12}
\Body
\toprule
Model & \multicolumn{2}{c}{PPL $\downarrow$} & \multicolumn{8}{c}{Downstream (\%) $\uparrow$} & \multicolumn{3}{c}{Generation (\%) $\uparrow$} \\
\cmidrule(lr){1-1}\cmidrule(lr){2-3}\cmidrule(lr){4-11}\cmidrule(lr){12-14}
Injection & Val. & Wiki & LAMB. & HS & PIQA & ARC-E & ARC-C & OBQA & SciQ & AVG & GSM & DROP & MBPP+ \\
\midrule
\multicolumn{14}{l}{\textbf{S scale: (100M, 21.5B tokens)}} \\
DEQ-QKV & 6.30 & 23.05 & 25.17 & 30.56 & 60.99 & 43.31 & 23.81 & 15.40 & 76.10 & 39.33 & \textbf{2.05} & 9.42 & 0.12 \\
Huginn-Linear & 5.90 & 20.78 & 25.69 & 31.51 & \textbf{62.35} & 43.18 & \textbf{24.66} & \textbf{17.20} & 78.60 & 40.46 & 1.90 & 12.21 & 0.25 \\
Parcae-Decay & 5.75 & \textbf{19.90} & 26.28 & 32.14 & 61.86 & \textbf{46.89} & 23.81 & 16.40 & 81.60 & 41.28 & 1.90 & 12.08 & 0.25 \\
\midrule
OrthoInj & \textbf{5.67} & 20.02 & \textbf{27.54} & \textbf{32.69} & 61.97 & 46.46 & \textbf{24.66} & 16.40 & \textbf{81.80} & \textbf{41.65} & 1.90 & \textbf{13.66} & \textbf{1.62} \\
\midrule
\multicolumn{14}{l}{\textbf{M scale: (400M, 85.9B tokens)}} \\
Huginn-Linear & 3.99 & 12.55 & 36.56 & 44.15 & \textbf{68.12} & 56.69 & \textbf{29.44} & 17.00 & \textbf{88.70} & 48.67 & \textbf{16.68} & 16.71 & 8.00 \\
Parcae-Decay & 3.96 & 12.45 & 37.78 & 44.63 & \textbf{68.12} & \textbf{57.58} & 27.90 & \textbf{19.40} & 87.80 & 49.03 & 15.01 & \textbf{20.17} & \textbf{8.25} \\
\midrule
OrthoInj & \textbf{3.94} & \textbf{12.32} & \textbf{37.82} & \textbf{45.37} & \textbf{68.12} & 57.07 & 29.27 & 18.40 & 88.10 & \textbf{49.16} & 13.95 & 19.38 & 7.88 \\
\midrule
\multicolumn{14}{l}{\textbf{L scale: (1.6B, 343.6B tokens)}} \\
Parcae-Decay & 3.09 & \textbf{8.72} & 53.58 & 60.60 & \textbf{72.80} & \textbf{70.62} & \textbf{39.76} & 23.40 & \textbf{93.80} & 59.22 & \textbf{52.92} & 32.66 & \textbf{30.88} \\
\midrule
 OrthoInj & \textbf{3.08} & \textbf{8.72} & \textbf{54.51} & \textbf{61.62} & 72.63 & 69.53 & 38.65 & \textbf{24.80} & 93.10 & \textbf{59.26} & 51.71 & \textbf{32.77} & 27.75 \\
\bottomrule
\CodeAfter
\begin{tikzpicture}
\foreach \sepcol in {2,4,12}{
\draw[draw=resultgrouprule,line width=0.35pt] (1-|\sepcol) -- (3-|\sepcol);
\draw[draw=resultgrouprule,line width=0.35pt] (4-|\sepcol) -- (8-|\sepcol);
\draw[draw=resultgrouprule,line width=0.35pt] (9-|\sepcol) -- (12-|\sepcol);
\draw[draw=resultgrouprule,line width=0.35pt] (13-|\sepcol) -- (15-|\sepcol);
}
\end{tikzpicture}
\end{NiceTabular}

\end{table}
\begin{figure}[!htbp]
\centering
\includegraphics[width=\linewidth]{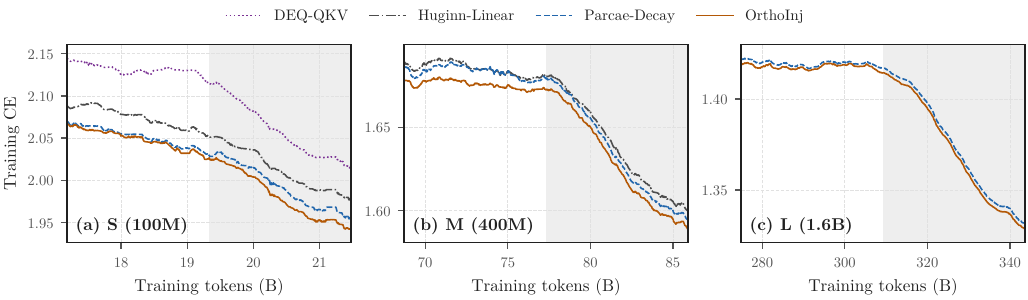}
\caption{{\textbf{OrthoInj ends training with the lowest loss at every scale.} Training CE of the Table~\ref{tab:architecture} runs over the final $20\%$ of training, as a centered rolling median over $5\%$ of each run; the gray band marks the learning-rate decay (final $10\%$ of updates). Within each scale, all runs see the same sampled depths.}}
\label{fig:injection-training-loss}
\end{figure}

\paragraph{{OrthoInj outperforms existing input injection schemes.}}
{With terminal KV sharing, it improves validation PPL over the strongest baseline, Parcae-Decay, and has the highest downstream average at {every scale} (Table~\ref{tab:architecture}).} {It also ends training with the lowest training loss, validation and WikiText PPL at every scale ({Figure~\ref{fig:injection-training-loss}}).} {Dropping prelude normalization and the projection each help, and their gains add up (\appref{app:architecture-results}).}

\subsection{{Rollout-State Reuse in RL}}
\label{sec:exp-rl}
{We fine-tune the L learned-prior model of Table~\ref{tab:prior-ablation} with Dr.~GRPO~\citep{dr_grpo} at $R=6$, the depth with its best GSM8K accuracy (\appref{app:posttraining}). The math run trains on $400$ questions from the GSM8K training split and evaluates on $500$ GSM8K test questions and on MATH500. The code run trains on $272$ MBPP+ tasks and evaluates on $100$ held-out MBPP+ tasks. In each run, every update method samples $16$ responses per training prompt.} {We compare reuse, which applies Neumann gradients to saved rollout states (\secref{sec:why-rl}), with BPTT through all recurrences. A control separates state reuse from the Neumann approximation: it computes the same Neumann gradient as reuse but rebuilds the endpoint by replaying all the recurrences without gradient tracking.}

\begin{table}[!htbp]
\centering\resulttablestyle
\caption[Rollout-state reuse during RL.]{\textbf{Reusing rollout states cuts scoring and backward time.}
{The L learned-prior model of Table~\ref{tab:prior-ablation} is fine-tuned with Dr.~GRPO at $R=6$ on GSM8K or MBPP+; the GSM8K runs are also evaluated on MATH500, which RL does not train on. Pass@$k$ uses eight samples per question. Time: median scoring and backward time per update, excluding rollout and the optimizer step. See \appref{app:posttraining} for training and evaluation details.}}
\label{tab:rl-state-reuse}
\fontsize{9}{10.8}\selectfont\renewcommand{\arraystretch}{1.18}\setlength{\tabcolsep}{5pt}
\begin{NiceTabular}{@{\hspace{6pt}}lcccccccc@{\hspace{6pt}}}
\CodeBefore
\rowcolor{papergray}{4}
\Body
\toprule
 & \multicolumn{5}{c}{RL on GSM8K} & \multicolumn{3}{c}{{RL on MBPP+}} \\
\cmidrule(lr){2-6}\cmidrule(lr){7-9}
Training & \multicolumn{2}{c}{GSM8K (\%) $\uparrow$} & \multicolumn{2}{c}{{MATH500 (\%) $\uparrow$}} & Time (s) $\downarrow$ & \multicolumn{2}{c}{{MBPP+ (\%) $\uparrow$}} & {Time (s) $\downarrow$} \\
\cmidrule(lr){1-1}\cmidrule(lr){2-3}\cmidrule(lr){4-5}\cmidrule(lr){6-6}\cmidrule(lr){7-8}\cmidrule(lr){9-9}
Update method & Pass@1 & Pass@8 & {Pass@1} & {Pass@8} & Update & {Pass@1} & {Pass@8} & {Update} \\
\midrule
\multicolumn{9}{l}{\textbf{L scale: (1.6B, 343.6B tokens)}} \\
Before RL & 40.63 & 77.60 & {13.40} & {42.80} & \resultmissing & {30.88} & {63.00} & {\resultmissing} \\
Full BPTT & 63.20 & 88.00 & {18.27} & {52.00} & 2.77 & {40.88} & {69.00} & {1.03} \\
Recompute + Neumann-4 & 60.05 & 89.60 & {17.32} & {50.60} & 2.09 & {37.12} & {69.00} & {0.65} \\
\midrule
Reuse + Neumann-4 & 61.65 & 90.20 & {17.82} & {49.60} & 1.39 & {39.38} & {64.00} & {0.51} \\
Reuse + Neumann-3 & 61.30 & 89.80 & {18.77} & {50.20} & 1.30 & {38.00} & {68.00} & {0.48} \\
\bottomrule
\CodeAfter
\begin{tikzpicture}
\foreach \sepcol in {2,7}{
\draw[draw=resultgrouprule,line width=0.35pt] (1-|\sepcol) -- (4-|\sepcol);
\draw[draw=resultgrouprule,line width=0.35pt] (5-|\sepcol) -- (10-|\sepcol);
}
\foreach \sepcol in {4,6,9}{
\draw[draw=resultgrouprule,line width=0.35pt] (2-|\sepcol) -- (4-|\sepcol);
\draw[draw=resultgrouprule,line width=0.35pt] (5-|\sepcol) -- (10-|\sepcol);
}
\end{tikzpicture}
\end{NiceTabular}
\end{table}

\paragraph{{Reusing rollout states {makes the update $2\times$ faster than full BPTT}.}}
{Reuse with Neumann-4 cuts scoring and backward time by $1.99\times$ on GSM8K and $2.02\times$ on MBPP+ (Table~\ref{tab:rl-state-reuse}). All four methods raise GSM8K pass@1 from $40.6\%$ to $60$--$63\%$ and MBPP+ pass@1 from $30.9\%$ to $37$--$41\%$.} {The {GSM8K} gains transfer to MATH500, which RL does not train on: every method raises pass@1 by $3.9$--$5.4$ points.}

\paragraph{{Reuse matches full BPTT's pass@1 within run-to-run variation.}}
{Reuse and the recomputation control compute nearly the same gradient (whole-model cosines $0.997$--$0.999$, \appref{app:posttraining}), so the gap between their pass@1 measures run-to-run variation: $1.6$ points on GSM8K, $0.5$ on MATH500, and $2.3$ on MBPP+. Reuse trails full BPTT's pass@1 by no more than this variation on any benchmark, and truncating to Neumann-3 keeps GSM8K pass@1 within $0.35$ points.}

\subsection{Distilled Prefill}
\label{sec:exp-distill}
{We distill the S, M, and L learned-prior models of \secref{sec:exp-prior} into two-block students initialized from the teacher's recurrent core (\secref{sec:why-distill}) and trained on $1/4$ of the teacher's pretraining tokens. Every prefill path is followed by the same teacher decoding at $R=5$, on the tasks of Table~\ref{tab:prior-ablation}. We compare against full teacher prefill ($R=5$), teacher prefill stopped after three recurrences ($R=3$) or one ($R=1$), and the Untied~4 baseline, which prefills and decodes on its own.}

\paragraph{{Distilled prefill trades downstream accuracy for $1.5$--$1.8\times$ faster prefill.}}
{On $8$K prompts, it trails the teacher's downstream average by $1.0$ (S), $2.3$ (M), and $4.5$ (L) points (Table~\ref{tab:distilled-prefill-main}). At matched latency, it edges out stopping the teacher early: it is $1$--$4\%$ faster than teacher prefill at $R=2$ and $0.5$--$0.9$ points higher on the downstream average. Prefill gains saturate early (Figure~\ref{fig:pd-scaling}): prefill at $R=3$ stays within $1.6$ points of the teacher, but it saves only $21$--$29\%$ of the teacher's latency, against $35$--$44\%$ for distilled prefill. A single recurrence is $12$--$23\%$ faster than distilled prefill but loses $2.7$--$3.5\times$ as much downstream accuracy. Better distillation objectives, student architectures such as mixture-of-experts layers, and the allocation of the distillation token budget could widen this margin and close the growing gap to the full teacher; we leave them to future work.}

\paragraph{{Teacher initialization drives the student's quality, and the KL term raises its generation scores.}}
{Under the same hidden and KL objective, teacher initialization adds $0.7$ (S) and $1.1$ (M) downstream points and raises M MBPP+ from $2.50$ to $6.12$ (Table~\ref{tab:distilled-prefill-ablation}); adding the KL term raises a randomly initialized student's M MBPP+ from $0.25$ to $2.50$. \appref{app:distill-ablation} ablates the loss terms and other uses of the student, and \appref{app:distill-r25} distills a deeper $R=25$ teacher.}

\begin{table}[h]
\centering\resulttablestyle
\caption{\textbf{Distilled prefill trades downstream accuracy for lower prefill latency.} Each path prefills the prompt, then the learned-prior teacher decodes at $R=5$ from the path's four terminal KV banks; Untied~4 prefills and decodes on its own. The distilled student is initialized from the teacher and trained with the hidden and KL objective of equation~\eqref{eq:hidden-distillation} on one quarter of its teacher's pretraining tokens; the teacher rows prefill at $R=5$, $3$, $2$, or $1$. Latency: mean $\pm$ s.d.\ of six $8$K prefills at batch size $8$. See \appref{app:distill-ablation} for evaluation details.}
\label{tab:distilled-prefill-main}
\fontsize{9}{10.8}\selectfont\renewcommand{\arraystretch}{\resultrowstretch}
\begin{NiceTabular}{@{\hspace{6pt}}lcccccccccccccc@{\hspace{6pt}}}
\CodeBefore
\rowcolor{papergray}{3,10,17}
\Body
\toprule
Prefill & \multicolumn{2}{c}{PPL $\downarrow$} & \multicolumn{8}{c}{Downstream (\%) $\uparrow$} & \multicolumn{3}{c}{Generation (\%) $\uparrow$} & Latency $\downarrow$ \\
\cmidrule(lr){1-1}\cmidrule(lr){2-3}\cmidrule(lr){4-11}\cmidrule(lr){12-14}\cmidrule(lr){15-15}
Path & Val. & Wiki & LAMB. & HS & PIQA & ARC-E & ARC-C & OBQA & SciQ & AVG & GSM & DROP & MBPP+ & 8K (ms) \\
\midrule
\multicolumn{15}{l}{\textbf{S scale: (100M, 21.5B tokens)}} \\
Untied 4 & 6.57 & 25.19 & 23.95 & 30.26 & 59.96 & 44.57 & 23.21 & 16.60 & 77.60 & 39.45 & 1.97 & 11.39 & 0.12 & 100.7{\scriptsize$\pm$0.5} \\
Teacher ($R{=}5$) & 5.67 & 19.91 & 27.36 & 32.51 & 63.00 & 45.50 & 25.09 & 17.20 & 80.90 & 41.65 & 1.59 & 14.08 & 0.62 & 195.0{\scriptsize$\pm$1.4} \\
Three recurrences ($R{=}3$) & 5.68 & 19.96 & 25.79 & 32.45 & 63.11 & 45.03 & 25.43 & 17.60 & 80.20 & 41.37 & 1.82 & 13.85 & 0.38 & 153.1{\scriptsize$\pm$1.7} \\
Two recurrences ($R{=}2$) & 5.73 & 20.23 & 22.45 & 31.81 & 62.51 & 43.27 & 24.66 & 16.40 & 79.90 & 40.14 & 1.90 & 13.07 & 0.25 & 131.0{\scriptsize$\pm$0.6} \\
One recurrence ($R{=}1$) & 5.95 & 21.02 & 19.29 & 31.13 & 61.21 & 41.62 & 23.98 & 15.40 & 73.70 & 38.05 & 1.52 & 10.86 & 0.00 & 110.7{\scriptsize$\pm$0.1} \\
\midrule
Distilled, teacher init & 5.71 & 20.16 & 23.56 & 31.93 & 63.17 & 43.86 & 25.17 & 17.00 & 79.70 & 40.63 & 1.74 & 14.41 & 0.62 & 125.9{\scriptsize$\pm$1.0} \\
\midrule
\multicolumn{15}{l}{\textbf{M scale: (400M, 85.9B tokens)}} \\
Untied 4 & 4.68 & 15.73 & 31.19 & 37.43 & 63.71 & 50.72 & 27.65 & 15.80 & 82.10 & 44.08 & 2.35 & 13.12 & 2.50 & 229.8{\scriptsize$\pm$1.2} \\
Teacher ($R{=}5$) & 3.93 & 12.28 & 38.58 & 46.17 & 67.41 & 57.58 & 28.75 & 17.00 & 88.00 & 49.07 & 15.62 & 17.11 & 10.38 & 495.8{\scriptsize$\pm$3.5} \\
Three recurrences ($R{=}3$) & 3.94 & 12.34 & 34.76 & 45.06 & 67.57 & 56.06 & 27.65 & 17.80 & 86.70 & 47.94 & 12.89 & 16.55 & 11.62 & 370.1{\scriptsize$\pm$1.4} \\
Two recurrences ($R{=}2$) & 3.99 & 12.53 & 27.34 & 44.04 & 66.05 & 53.87 & 28.07 & 18.20 & 83.80 & 45.91 & 9.17 & 16.22 & 7.00 & 304.2{\scriptsize$\pm$2.3} \\
One recurrence ($R{=}1$) & 4.23 & 13.12 & 13.76 & 41.60 & 64.53 & 49.45 & 27.90 & 15.40 & 75.70 & 41.19 & 2.27 & 8.56 & 0.12 & 245.9{\scriptsize$\pm$2.6} \\
\midrule
Distilled, teacher init & 3.97 & 12.44 & 31.86 & 44.63 & 67.74 & 55.68 & 27.90 & 15.20 & 84.40 & 46.77 & 8.95 & 16.20 & 6.12 & 299.3{\scriptsize$\pm$1.6} \\
\midrule
\multicolumn{15}{l}{\textbf{L scale: (1.6B, 343.6B tokens)}} \\
Untied 4 & 3.66 & 11.14 & 40.35 & 48.63 & 70.29 & 62.37 & 30.63 & 16.40 & 88.90 & 51.08 & 7.20 & 18.62 & 10.50 & 614.2{\scriptsize$\pm$3.6} \\
Teacher ($R{=}5$) & 3.07 & 8.68 & 55.17 & 61.57 & 73.72 & 71.09 & 40.44 & 24.40 & 93.60 & 60.00 & 47.92 & 34.56 & 27.25 & 1482.1{\scriptsize$\pm$32.0} \\
Three recurrences ($R{=}3$) & 3.08 & 8.70 & 50.15 & 61.01 & 72.80 & 68.69 & 38.99 & 24.60 & 92.50 & 58.39 & 44.28 & 26.98 & 26.38 & 1056.4{\scriptsize$\pm$2.4} \\
Two recurrences ($R{=}2$) & 3.11 & 8.87 & 39.82 & 58.96 & 71.44 & 64.73 & 35.67 & 24.00 & 90.10 & 54.96 & 33.06 & 20.97 & 20.25 & 848.6{\scriptsize$\pm$1.8} \\
One recurrence ($R{=}1$) & 3.31 & 9.23 & 19.35 & 53.57 & 70.46 & 58.54 & 31.06 & 20.60 & 82.30 & 47.98 & 6.22 & 12.82 & 0.88 & 642.0{\scriptsize$\pm$4.0} \\
\midrule
 Distilled, teacher init & 3.12 & 8.81 & 43.97 & 58.92 & 72.69 & 66.12 & 35.24 & 21.40 & 90.40 & 55.53 & 32.90 & 23.49 & 16.50 & 830.3{\scriptsize$\pm$3.0} \\
\bottomrule
\CodeAfter
\begin{tikzpicture}
\foreach \sepcol in {2,4,12,15}{
\draw[draw=resultgrouprule,line width=0.35pt] (1-|\sepcol) -- (3-|\sepcol);
\draw[draw=resultgrouprule,line width=0.35pt] (4-|\sepcol) -- (10-|\sepcol);
\draw[draw=resultgrouprule,line width=0.35pt] (11-|\sepcol) -- (17-|\sepcol);
\draw[draw=resultgrouprule,line width=0.35pt] (18-|\sepcol) -- (24-|\sepcol);
}
\end{tikzpicture}
\end{NiceTabular}
\end{table}

\section{Related Work}
\label{sec:related}

\paragraph{Looped models and fixed points.}
Looped language models apply a shared block repeatedly~\citep{universal_transformer} {to scale test-time compute~\citep{anil2022path,yang2026stars} and latent reasoning~\citep{huginn,ouro}. As depth grows, their states approach fixed points}~\citep{huginn,blayney2026mechanistic}, which DEQs solve for directly~\citep{bai2019deep}. {Prior work uses these fixed points to change how many recurrences run: it stabilizes them~\citep{bai2021stabilizing,yang2026stars,deeploop}, trains supervised states as attractors~\citep{eqr}, or halts at convergence~\citep{fixed_point_reasoners}. We further identify and verify the efficiency gains that fixed points bring to training, inference, and post-training.}

\paragraph{Looped model efficiency.}
{Near a fixed point, further recurrences barely change the state. We show that this property explains two shortcuts looped models already take and enables two new ones. Truncated backpropagation~\citep{huginn,hrm} thus approximates the equilibrium gradient without the warm-up that implicit gradients need~\citep{neumann_rbp,phantom_grad,jfb}. Of the schemes that share KV across recurrences~\citep{huginn,ouro,mixture_of_recursions}, terminal sharing is likewise nearly lossless when training supervises near fixed points. Terminal sharing also opens a shortcut for prefill. {Existing looped-model} distillation keeps students recurrent (\appref{app:related-distill}), and prompt-KV predictors~\citep{horton2024kvprediction,qiao2024swiftkv} must predict each recurrence's KV; distilled prefill instead uses a non-recurrent student to predict only the endpoint, from which one teacher recurrence builds every KV bank. Furthermore, fixed-point reuse carries endpoints across computations: it warm-starts solvers on related inputs (\appref{app:related-reuse}), and we bring it to RL, scoring rollouts from saved states instead of replaying the recurrence~\citep{ouro}.}

\paragraph{Shaping fixed points.}
Two choices shape the fixed point: which depths training supervises, and how the input enters each recurrence. Randomized training depths promote path-independent equilibria~\citep{anil2022path}, {but existing depth distributions are fixed~\citep{huginn}, marginalized over at inference~\citep{dun}, or learned as exits~\citep{graves2016act,banino2021pondernet}. Closest to ours, Ouro~\citep{ouro} and RL-Halting~\citep{kuo2026stochastic} learn entropy-regularized depth distributions from the recurrent state, but they unroll every training step to the maximum depth and do not target fixed-point shaping (\appref{app:related-depth}).} Injecting the input into every recurrence makes the fixed point depend on it~\citep{bai2019deep,bansal2022algorithm,huginn}. Parcae~\citep{parcae} improves injection by forcing contractiveness to {stabilize} training{.} {We further observe that the carried-over state can have a component along the injected input; OrthoInj projects it out so every recurrence receives the input intact (\appref{app:related-injection}).}

\section{Discussion and Conclusion}
\label{sec:conclusion}

Near a fixed point, the endpoint of a looped language model can replace its recurrent trajectory in training, inference, and post-training. Whether training produces such fixed points depends on the depth prior and on input injection. A learned depth prior lowers perplexity over a fixed PLN prior at the same mean depth while keeping terminal KV sharing intact. OrthoInj projects the carried-over state off the injected input, giving lower perplexity than the injection schemes of Huginn and Parcae. On these models, terminal KV sharing and rollout-state reuse cut memory or time at a small cost in quality, and distilled prefill trades downstream accuracy for prefill latency, with a gap that grows with scale.

\paragraph{Limitations.}
{We train models of up to $1.6$B parameters with one seed per configuration, and our ablations do not exhaust the design space. Larger scales and other model configurations remain untested, and the feasibility of these techniques in real-world applications needs further study.}

\begingroup
\small
\setlength{\bibsep}{0pt}
\bibliographystyle{unsrtnat}
\bibliography{references}
\endgroup
\clearpage
\appendix
\etocdepthtag.toc{appendix}
\begingroup
\etocsettagdepth{main}{none}
\etocsettagdepth{appendix}{subsection}
\etocsettocstyle{\section*{Appendix Contents}\noindent
\appref{app:conditions} gives the theoretical analysis, including the proof of Lemma~\ref{lem:joint-frozen}.
\appref{app:training} specifies the methods and training configurations, including why depth sampling exposes the model to converged context.
\appref{app:additional-results} collects supplementary experiments, including rollout-state reuse in RL, and
\appref{app:protocols} defines the shared datasets and evaluation protocols.
\appref{app:extended-related} relates the learned depth prior, OrthoInj, terminal KV sharing, rollout-state reuse, and distilled prefill to prior work.
\par\medskip
}{}
\tableofcontents
\endgroup
\clearpage
\section{Theoretical Analysis and Proofs}
\label{app:conditions}
We prove Lemma~\ref{lem:joint-frozen} and extend it to token-by-token decoding and finite depth (\appref{app:joint-frozen}), and relate truncated backpropagation to implicit gradients (\appref{app:gradients}).
\subsection{Joint Iteration and Frozen-Prefix Decoding}
\label{app:joint-frozen}
Fix parameters, tokens, injection, positions, and masks. As in \secref{sec:inference}, $h$ includes all current-token recurrent variables and $C$ all prefix inputs, including per-layer KV and additional recurrent state. If $C$ contains inputs other than KV, sharing only the KV banks falls under Lemma~\ref{lem:joint-frozen} only when those other inputs also coincide. By the triangle inequality, the assumptions of Lemma~\ref{lem:joint-frozen} give, for all $h,\widetilde h\in\mathcal H$ and $C,\widetilde C\in\mathcal C$, the joint Lipschitz bound
\begin{equation}
\label{eq:joint-frozen-lipschitz}
\bigl\|f_\theta(h;C)
      -f_\theta(\widetilde h;\widetilde C)\bigr\|
\le
\kappa\|h-\widetilde h\|
+\beta\|C-\widetilde C\|.
\end{equation}
\begin{proof}[Proof of Lemma~\ref{lem:joint-frozen}]
By the contraction mapping theorem~\citep{ortega_rheinboldt}, $f_\theta(\cdot;C^\star_{<t})$ has a
unique fixed point $h^\star$ in the complete space $\mathcal H$, and
$\widetilde h^r\to h^\star$.
Let $\delta_r=\|h^r-h^\star\|$. Invariance keeps $h^r\in\mathcal H$, so equation~\eqref{eq:joint-frozen-lipschitz} and $h^\star=f_\theta(h^\star;C^\star_{<t})$ give
\begin{equation}
\label{eq:joint-frozen-gap}
\delta_{r+1}\le\kappa\,\delta_r+\beta\|C^r_{<t}-C^\star_{<t}\|.
\end{equation}
With $\eta_{\max}=\sup_r\|C^r_{<t}-C^\star_{<t}\|<\infty$, induction gives $\delta_r\le\max\{\delta_0,\beta\eta_{\max}/(1-\kappa)\}$, so $\bar\delta=\limsup_r \delta_r$ is finite, and taking $\limsup$ in equation~\eqref{eq:joint-frozen-gap} gives $\bar\delta\le\kappa\,\bar\delta$. Hence $\bar\delta=0$ and $h^r\to h^\star$. Both iterations thus reach $h^\star$ and $C^\star_{<t}$, so any readout continuous in both has the same limit.
\end{proof}

\paragraph{Extension to one-by-one decoding.}
For $N$ tokens and nonempty closed sets $\mathcal H_t\subseteq\mathbb R^{d_t}$, write
\begin{equation}
\label{eq:causal-joint-recurrence}
C^r_{<t}=\Phi_{\theta,t}(h_{<t}^r),
\qquad
h_t^{r+1}=f_{\theta,t}(h_t^r;C^r_{<t}),
\end{equation}
where $h_{<t}^r=(h_1^r,\ldots,h_{t-1}^r)$ and $C^r_{<1}=C_{<1}$. Each $\Phi_{\theta,t}$ is continuous and depth-shared, and for every context $C\in\Phi_{\theta,t}(\prod_{t'<t}\mathcal H_{t'})$, $f_{\theta,t}(\cdot;C)$ maps $\mathcal H_t$ into itself; on these states and contexts, $f_{\theta,t}$ satisfies equation~\eqref{eq:joint-frozen-lipschitz} with constants $\kappa_t<1$ and $\beta_t<\infty$. From initial states in these sets, joint iteration and sequential decoding, which solves and freezes each token in turn, converge to the same limiting states; with common continuous readouts, they also give the same predictions.
\begin{proof}
Induct on $t$. At $t=1$ both executions use the fixed context $C_{<1}$ and converge to the fixed point $h_1^\star$ of $f_{\theta,1}(\cdot;C_{<1})$. If $h_{t'}^r\to h_{t'}^\star$ for all $t'<t$, continuity gives $C^r_{<t}\to\Phi_{\theta,t}(h_{<t}^\star)=:C^\star_{<t}$, the context that sequential decoding freezes at position $t$. Invariance and closedness keep $C^r_{<t}$ and $C^\star_{<t}$ in $\Phi_{\theta,t}(\prod_{t'<t}\mathcal H_{t'})$, so Lemma~\ref{lem:joint-frozen} gives $h_t^r\to h_t^\star$. Continuity of the readouts completes the proof, whatever the order in which joint positions converge.
\end{proof}
\paragraph{Finite depth and approximate terminal context.}
Let $\eta=\|\bar C_{<t}-C^\star_{<t}\|$ and $\bar h^{r+1}=f_\theta(\bar h^r;\bar C_{<t})$, and assume that the invariance and Lipschitz conditions also hold for $\bar C_{<t}$. Equation~\eqref{eq:joint-frozen-gap} with $h^r,C^r_{<t}$ replaced by $\bar h^r,\bar C_{<t}$ unrolls to
\begin{equation}
\label{eq:frozen-prefix-finite-error}
\|\bar h^{R}-h^\star\|
\le
\kappa^{R}\|\bar h^0-h^\star\|
+\frac{\beta(1-\kappa^{R})}{1-\kappa}\,\eta.
\end{equation}
The two terms capture unfinished token refinement and imperfect context, respectively; the second bounds the fixed-point displacement by $\beta\eta/(1-\kappa)$. A readout or loss with state and context Lipschitz constants $L_h$ and $L_C$ differs by at most $L_h$ times the right-hand side of equation~\eqref{eq:frozen-prefix-finite-error} plus $L_C\eta$. Here $\eta$ includes errors in earlier decoded tokens' banks.

\paragraph{Empirical corroboration.}
\label{app:joint-frozen-empirical}
We compare joint iteration with current-token iteration after solving and freezing the prefix, using the matched Fixed $R=5$ and independently sampled PLN-5 Small Huginn checkpoints of Table~\ref{tab:prior-ablation}. On prefixes passing the convergence checks, the two paths reach nearly identical endpoints within each model (Figure~\ref{fig:joint-frozen-endpoint}), consistent with Lemma~\ref{lem:joint-frozen}. Freezing a depth-five prefix instead leaves a gap five to six orders of magnitude larger: the median relative endpoint gap is $0.27$ for Fixed $R=5$ and $0.033$ for PLN-5, against about $4\times10^{-7}$ with a converged prefix. This finite-prefix control shows that the lemma needs the converged context. Qualified prefixes need about a hundred or more recurrences to converge, far beyond the depth five used for sharing, so practical sharing rests on the finite-depth bound in equation~\eqref{eq:frozen-prefix-finite-error} and on the measured cached perplexity (Table~\ref{tab:prior-ablation}), not on exact convergence.
\begin{figure}[!htbp]
\centering
\includegraphics[width=3.1in]{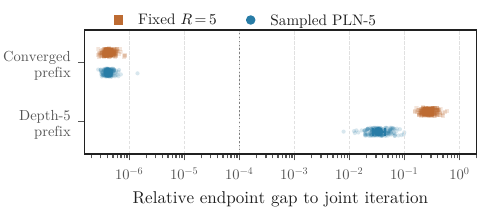}
\caption{\textbf{Freezing a converged prefix reproduces joint iteration (Lemma~\ref{lem:joint-frozen}).} Relative gap between the current-token endpoint of joint iteration and of iteration with the prefix frozen after convergence or after five recurrences. Dots are prefixes that pass the convergence checks ($206$ of $304$ for Fixed $R=5$, $191$ of $304$ for PLN-5); markers and bars give medians and interquartile ranges. The dotted line marks the $10^{-4}$ agreement tolerance.}
\label{fig:joint-frozen-endpoint}
\end{figure}

\FloatBarrier

We use 304 spans, $16$ from each validation source, with $1{,}534$ prefix tokens and one current token. Prefix-only and joint solves start independently from the model's evaluation-time state initialization. The frozen-prefix path updates only the current token, retaining each physical layer's historical KV and recomputing its own KV. All solves use FP32, PyTorch's reference (math) attention kernel, and at most $1{,}024$ recurrences. A solve qualifies when the maximum per-token relative state change and per-layer K/V change remain at most $10^{-5}$ for eight steps, followed by $16$ confirmation steps with state drift at most $5\times10^{-5}$ and a final one-step check. This criterion rules out cycles.

All three solves (prefix-only, joint, and frozen-prefix current-token) qualify on $206$ Fixed $R=5$ and $191$ PLN-5 spans; the other $98$ and 113 spans fail these checks and are dropped. Qualified prefix solves need median depths of $152.5$ (Fixed $R=5$) and $114$ (PLN-5), with maxima of $890$ and $961$. Every qualified pair agrees within relative state error $10^{-4}$ and prediction KL $10^{-7}$; the observed maximum state gaps are $8.24\times10^{-7}$ and $1.39\times10^{-6}$. The same checks pass from zero current-token initialization. The depth-five control uses the same qualified spans and requires its own current-token solve to pass. The two models' qualifying sets need not coincide.

\subsection{Truncated Backpropagation and Implicit Gradients}
\label{app:gradients}
For fixed input $x$, complete recurrent state $z=(\mathbf H,C)\in\R^{d_z}$ of \secref{sec:fixed-points}, and parameters $\theta\in\R^{d_\theta}$, we connect equation~\eqref{eq:fixed-point-sensitivity} to backpropagation through the final recurrent steps.

\paragraph{Finite-depth and equilibrium objectives.}
For a training-depth distribution $p$, the sampled-depth and equilibrium losses optimize different predictions:
\begin{equation}
 \mathcal L_{\rm depth}=\E_{R\sim p}[\ell_\theta(z^R,y)],\qquad
 \mathcal L_{\rm eq}=\E[\ell_\theta(z^\star,y)].
 \label{eq:two-objectives}
\end{equation}
Full BPTT differentiates the finite trajectory; implicit differentiation differentiates a local equilibrium branch. Equal states need not have equal derivatives. For $F_{\theta,a}(z;x)=az+(1-a)e$ with the prelude output $e=P_\theta(x)$, $z^0=0$, and $0\leq a<1$, the equilibrium $z^\star=e$ is independent of $a$, whereas $z^R=(1-a^R)e$ depends on it. Where $e=0$, both states are zero, but $\mathrm dz^R/\mathrm d\theta=(1-a^R)\,\mathrm de/\mathrm d\theta$ differs from $\mathrm dz^\star/\mathrm d\theta=\mathrm de/\mathrm d\theta$. TBPTT truncates the trajectory gradient; a convergent Neumann series approximates equilibrium sensitivity. Before convergence, the Neumann estimate is only a surrogate, so comparisons must state the objective and backward budget.

\paragraph{Equilibrium sensitivity and loss gradient.}
At $z^\star=F_\theta(z^\star;x)$~\citep{bai2019deep}, define $J_\star=\partial_zF_\theta(z^\star;x)\in\R^{d_z\times d_z}$ and $B_\star=\partial_\theta F_\theta(z^\star;x)\in\R^{d_z\times d_\theta}$, holding the state fixed in $B_\star$. Continuous differentiability near the equilibrium and invertibility of $I-J_\star$ give a locally differentiable equilibrium branch by the implicit function theorem:
\begin{equation}
 \underbrace{\frac{\mathrm d z^\star}{\mathrm d\theta}}_{D_\star}
 =\underbrace{\partial_zF_\theta}_{J_\star}
  \frac{\mathrm d z^\star}{\mathrm d\theta}
  +\underbrace{\partial_\theta F_\theta}_{B_\star},
 \qquad
 (I-J_\star)D_\star=B_\star.
 \label{eq:fixed-point-chain-rule}
\end{equation}
Thus $D_\star=(I-J_\star)^{-1}B_\star$. For a single example, $\mathcal L_{\rm eq}(\theta)=\ell_\theta(z^\star,y)$; define column gradients $g=\nabla_z\ell_\theta$ and $c=\nabla_\theta\ell_\theta$ at the equilibrium, holding the state fixed in $c$. The loss gradient is $G_{\rm imp}=c+D_\star^\top g$. Its adjoint form,
\begin{equation}
 (I-J_\star^\top)v=g,\qquad G_{\rm imp}=c+B_\star^\top v,
 \label{eq:implicit-gradient}
\end{equation}
uses the vector--Jacobian products (VJPs) $J_\star^\top v$ and $B_\star^\top v$ through one evaluation of $F_\theta$, with $v$ fixed, avoiding an explicit Jacobian or stored forward trajectory~\citep{bai2019deep}.

\paragraph{Neumann truncation and a detached suffix.}
The stronger condition that $J_\star$ has spectral radius below one ensures $J_\star^b\to0$, so the identity $(I-J_\star)\sum_{k=0}^{b-1}J_\star^k=I-J_\star^b$ yields
\begin{equation}
 D_\star=\sum_{k=0}^\infty J_\star^kB_\star,
 \qquad D_b=\sum_{k=0}^{b-1}J_\star^kB_\star,
 \qquad D_\star-D_b=J_\star^bD_\star.
 \label{eq:neumann-sensitivity-tail}
\end{equation}
Equivalently, the iteration $v_0=0$, $v_{k+1}=g+J_\star^\top v_k$ gives $G_b=c+B_\star^\top v_b$~\citep{neumann_rbp,phantom_grad}. Invertibility suffices for equation~\eqref{eq:implicit-gradient}; the spectral condition ensures convergence.

Starting from $\bar z^0=\sg(z^\star)$, each update $\bar z^{k+1}=F_\theta(\bar z^k;x)$ remains at the equilibrium and uses the same $J_\star,B_\star$. Its sensitivity after $k$ updates is $D_k$, since $D_0=0$ and $D_{k+1}=J_\star D_k+B_\star$; hence the loss gradient is $c+D_b^\top g=G_b$. A Euclidean contraction bound controls the omitted tail:
\begin{lemma}[Truncated backpropagation at equilibrium; adapted from \citet{neumann_rbp}]
\label{lem:suffix}
Suppose $F_\theta$ is continuously differentiable near the current equilibrium, $\ell_\theta$ is differentiable there, and $\norm{J_\star}_2\leq \kappa<1$. Start from $\sg(z^\star)$ and differentiate $b$ applications of the same deterministic recurrence, retaining all direct parameter paths. Then
\begin{equation}
 G_b=c+B_\star^\top\sum_{k=0}^{b-1}(J_\star^\top)^k g,\qquad
 \norm{G_{\rm imp}-G_b}_2\leq\norm{B_\star}_2\norm{g}_2\frac{\kappa^b}{1-\kappa}.
 \label{eq:gradient-bound}
\end{equation}
\end{lemma}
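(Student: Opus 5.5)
The proof has two parts: an exact identity for $G_b$, then a tail bound for $G_{\rm imp}-G_b$.

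\textbf{The identity.} We make precise the unrolling remark that precedes the lemma. Set $\bar z^0=\sg(z^\star)$, so $\mathrm d\bar z^0/\mathrm d\theta=0$. Because $z^\star$ is a fixed point, every iterate $\bar z^k=F_\theta(\bar z^{k-1};x)$ has forward value equal to $z^\star$. Hence each step's partial Jacobians are exactly $J_\star$ and $B_\star$; here continuous differentiability near the equilibrium is what makes these partials well defined. The chain rule, with all direct parameter paths retained, gives $D_{k+1}=J_\star D_k+B_\star$ and $D_0=0$. Induction then yields $D_b=\sum_{k=0}^{b-1}J_\star^kB_\star$.

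The loss reads out $\bar z^b$, which has value $z^\star$. So the total derivative is $c+D_b^\top g$ with $g$ and $c$ evaluated at the equilibrium. Transposing gives $D_b^\top g=B_\star^\top\sum_{k=0}^{b-1}(J_\star^\top)^kg$, which is the first display of \eqref{eq:gradient-bound}. Equivalently, this is the adjoint iteration $v_{k+1}=g+J_\star^\top v_k$ started from $v_0=0$.

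\textbf{The bound.} The condition $\norm{J_\star}_2\le\kappa<1$ implies $\norm{J_\star^\top}_2\le\kappa$. So the Neumann series $\sum_{k\ge0}(J_\star^\top)^k$ converges absolutely to $(I-J_\star^\top)^{-1}$, and $I-J_\star$ is invertible. Therefore the implicit gradient of \eqref{eq:implicit-gradient} exists and equals $G_{\rm imp}=c+B_\star^\top\sum_{k\ge0}(J_\star^\top)^kg$. Subtracting the identity leaves only the tail:
\[
G_{\rm imp}-G_b=B_\star^\top\sum_{k\ge b}(J_\star^\top)^kg .
\]
Submultiplicativity and $\norm{B_\star^\top}_2=\norm{B_\star}_2$ bound this by $\norm{B_\star}_2\norm{g}_2\sum_{k\ge b}\kappa^k=\norm{B_\star}_2\norm{g}_2\,\kappa^b/(1-\kappa)$. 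This is \eqref{eq:gradient-bound}.

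\textbf{Main subtlety.} The estimates are routine; the care is in the first step. We must ensure that differentiating through $\sg(z^\star)$ followed by $b$ unrolled steps yields exactly $D_b$, with no stray contribution from the starting point. The stop-gradient kills that contribution, and the fixed-point property keeps every Jacobian pinned at $J_\star$ and $B_\star$. This is also why the lemma's first claim is an exact equality rather than an approximation. Separately, we note that only the operator-norm hypothesis is used: it gives both invertibility and the geometric tail directly, so no separate spectral-radius argument is needed.
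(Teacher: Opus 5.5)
Your proposal is correct and follows the paper's proof. You both use the sensitivity recursion $D_0=0$, $D_{k+1}=J_\star D_k+B_\star$ from the detached start to obtain $G_b=c+D_b^\top g$, then subtract this from the Neumann series for $G_{\rm imp}$ and bound the geometric tail using $\norm{J_\star}_2\le\kappa$. Your remark that the norm hypothesis alone makes $I-J_\star$ invertible, and hence $G_{\rm imp}$ well defined, only makes the argument a little more self-contained.
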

\begin{proof}[Proof of Lemma~\ref{lem:suffix}]
The sensitivity recursion above establishes the expression for $G_b$. Subtracting it from the infinite series and using $\norm{J_\star}_2\leq \kappa<1$ gives
\[
 \norm{G_{\rm imp}-G_b}_2
 =\left\lVert B_\star^\top\sum_{k=b}^\infty(J_\star^\top)^k g\right\rVert_2
 \leq\norm{B_\star}_2\norm{g}_2\sum_{k=b}^\infty \kappa^k
 =\norm{B_\star}_2\norm{g}_2\frac{\kappa^b}{1-\kappa}.
\]
\end{proof}
This specializes the TBPTT/Neumann connection~\citep{neumann_rbp,phantom_grad}. If only the spectral radius of $J_\star$ is below one, then for any $\gamma$ between the spectral radius and one, $\norm{J_\star^k}_2=O(\gamma^k)$, and the same argument gives $\norm{G_{\rm imp}-G_b}_2=O(\norm{B_\star}_2\norm{g}_2\gamma^b)$. Direct paths remain essential: $B_\star$ includes input/prelude injection, and $c$ includes the output head. Detaching prefix KV removes further dependencies and therefore changes the estimator.

\paragraph{Finite-trajectory BPTT and TBPTT.}
Along $z^{r+1}=F_\theta(z^r;x)$, let $J_r=\partial_zF_\theta(z^r;x)$ and $B_r=\partial_\theta F_\theta(z^r;x)$. The recursion $\partial z^{r+1}/\partial\theta=J_r\,\partial z^r/\partial\theta+B_r$ from a detached window start $z^{R-b}$ gives $\partial z^R/\partial\theta=\tilde D_b$, the TBPTT sensitivity of equation~\eqref{eq:fixed-point-sensitivity} with $i=R-1-k$:
\begin{equation}
 \tilde D_b
 =\sum_{i=R-b}^{R-1}
   \bigl(J_{R-1}J_{R-2}\cdots J_{i+1}\bigr)B_i.
 \label{eq:tbptt-trajectory-sensitivity}
\end{equation}
The empty product is $I$. Full BPTT includes all terms $i=0,\ldots,R-1$, which give $\tilde D_R$, and $(J_{R-1}\cdots J_0)\,\mathrm d z^0/\mathrm d\theta$ for parameter-dependent initialization. Multiplying the transposed sensitivity by the endpoint loss gradient and adding the loss's direct parameter gradient gives the loss gradient.

Equation~\eqref{eq:tbptt-trajectory-sensitivity} reduces to $D_b$ when all suffix Jacobians equal $J_\star,B_\star$ and approaches the endpoint Neumann estimate as they converge. Because early derivatives come from transient states and truncation adds its own error, neither estimator is uniformly more accurate. Controlling the error at approximate states additionally requires locally Lipschitz state and parameter Jacobians, a uniform state-Jacobian bound below one, regular loss derivatives, and small state errors throughout the window. A small endpoint residual alone is not enough.

\FloatBarrier
\section{Additional Method Details}
\label{app:training}
We specify the shared model and training configuration (\appref{app:setup-details}), explain why sampled training depths expose the model to converged context (\appref{app:depth-exposure}), and detail the learned depth prior of \secref{sec:fixed-point-depth-prior} (\appref{app:prior-algorithm}), and the input-injection recipes of \secref{sec:architecture} (\appref{app:architecture-variants}).

\subsection{Models and Training Contract}
\label{app:setup-details}

The model layout follows \secref{sec:setup}, and Table~\ref{tab:setup-dense-ladder-config} gives the full configurations.
All models use separate input and output embeddings. They use the Jais64k
tokenizer~\citep{sengupta2023jais} with a $64{,}256$-token vocabulary and bfloat16 arithmetic. The weighted pretraining mixture covers web
text, QA, code, mathematics, reasoning, planning, multilingual text, and
synthetic data. Its QA component includes TxT360~\citep{txt360data2024}.
Architectural variants and Untied~12 differ in size, so each comparison states its scale and configuration.

\begin{table}[ht]
  \centering\papertablestyle
  \caption{\textbf{Model configurations at three scales.}
  Blocks count prelude, recurrent-core, and coda blocks of the looped models; Untied~4 and Untied~12 stack $4$ and $12$ blocks.
  Width and FFN give hidden and feedforward dimensions, and Q and KV give query and key--value head counts.
  Non-emb. and Total give Untied~4 parameter counts without and with embeddings; the looped rows add Huginn's width$\times$width input adapter.
  The main text rounds the scales to $100$M, $400$M, and $1.6$B.
  Parameter and token counts are in billions; the peak learning rate is tuned per run (\appref{app:setup-lr}).}
  \label{tab:setup-dense-ladder-config}
  \renewcommand{\appendixtablefontsize}{9}\setlength{\tabcolsep}{18pt}%
  \newcommand{\grayband}[1]{\multicolumn{4}{@{}>{\columncolor{papergray}[0pt][0pt]\hspace{6pt}}l@{}}{\rule{0pt}{2.00ex}\tablehead{#1}}\\}%
  \begin{tabular}{@{\hspace{6pt}}lrrr@{\hspace{6pt}}}
    \toprule
     & S & M & L \\
    \midrule
    \grayband{A: Architecture and attention geometry}
    Blocks (prelude/core/coda) & \multicolumn{3}{c}{1 / 2 / 1} \\
    Width & 1,536 & 3,072 & 6,144 \\
    FFN & 4,352 & 8,704 & 17,408 \\
    Q heads & 24 & 48 & 96 \\
    KV heads & 6 & 12 & 24 \\
    Head dim. & 64 & 64 & 64 \\
    \midrule
    \grayband{B: Parameters (B)}
    Non-emb. & 0.104 & 0.415 & 1.661 \\
    Total & 0.301 & 0.810 & 2.451 \\
    Looped non-emb. & 0.106 & 0.425 & 1.699 \\
    Looped total & 0.304 & 0.819 & 2.488 \\
    \midrule
    \grayband{C: Batch and token budget}
    Sequence length & \multicolumn{3}{c}{8,192} \\
    Global batch (sequences) & \multicolumn{3}{c}{512} \\
    Tokens per update & \multicolumn{3}{c}{4,194,304} \\
    Updates & 5,120 & 20,480 & 81,920 \\
    Tokens (B) & 21.47 & 85.90 & 343.60 \\
    \midrule
    \grayband{D: Optimizer}
    Optimizer & \multicolumn{3}{c}{AdamW} \\
    $(\beta_1,\beta_2)$ & \multicolumn{3}{c}{$(0.90,\,0.98)$} \\
    $\epsilon$ & \multicolumn{3}{c}{$10^{-8}$} \\
    Weight decay & \multicolumn{3}{c}{0.10} \\
    Gradient clipping & \multicolumn{3}{c}{1.00} \\
    LR schedule & \multicolumn{3}{c}{WSD} \\
    Linear warmup & \multicolumn{3}{c}{first 5\% of updates} \\
    Cosine decay & \multicolumn{3}{c}{last 10\%, to $0.1\times$ peak} \\
    \bottomrule
  \end{tabular}
\end{table}
Each microbatch draws one training depth (\secref{sec:fixed-point-depth-prior}), so a sampled-depth run
draws one depth per gradient-accumulation step. Compute depends on the realized depths and backward truncation.

\paragraph{Learning-rate selection.}
\label{app:setup-lr}
Each scale has one learning-rate range, and every run tunes its peak rate within it: $[0.5,4]\times10^{-3}$ at S, $[0.25,4]\times10^{-3}$ at M, and $[0.125,2]\times10^{-3}$ at L. Screens keep the final WSD horizon but stop after $512$--$1{,}024$ updates at S, $2{,}048$--$12{,}288$ at M, and $8{,}192$--$32{,}768$ at L, and select by native validation CE at $R=5$ on $304$ frozen $1{,}536$-token sequences, distinct from \textsc{MixVal}. Every selected rate is interior to its run's grid. A variant that changes only $\lambda_H$ or the depth-draw correlation reuses the rate of its screened configuration. M OrthoInj in Table~\ref{tab:architecture} instead trains at $10^{-3}$, Parcae-Decay's selected M rate; its own screen selects $1.5\times10^{-3}$ (\appref{app:injection-lr-m} compares both rates).

\subsection{Why Sample Training Depths?}
\label{app:depth-exposure}
Let $p(R)$ be the training-depth distribution and $\ell_t$ the next-token CE at position $t$. Sampling one sequence-wide horizon optimizes $\mathcal L_{\rm depth}$ of equation~\eqref{eq:two-objectives} with $\ell_\theta(z^R,y)=N^{-1}\sum_t\ell_t$~\citep{huginn}. Fix parameters, input, initialization, and a trajectory whose prefix KV is converged from depth $r_{\rm c}$. Backpropagating through the final $1\leq b(R)\leq R$ steps gives the expected number of backpropagated updates that read the converged prefix KV, which we call its exposure,
\begin{equation}
 w_{r_{\rm c}}=\sum_R p(R)\min\!\left\{b(R),(R-r_{\rm c})_+\right\},
 \qquad (\cdot)_+=\max(\cdot,0).
 \label{eq:random-depth-readers}
\end{equation}
The window $R-b(R)+1,\ldots,R$ intersects updates after $r_{\rm c}$ in $\min\{b(R),(R-r_{\rm c})_+\}$ steps. Equation~\eqref{eq:random-depth-readers} gives $w_{r_{\rm c}}=0$ for fixed $R=5$ when $r_{\rm c}\geq5$, but positive PLN-5 exposure whenever $p(R>r_{\rm c})>0$; sufficiently deep fixed training can also provide exposure. With our rule $b(R)=\min(R,10)$ (\appref{app:prior-algorithm}) and the depths drawn by the Small PLN-5 run of Table~\ref{tab:prior-ablation}, PLN-5 gives $w_5=1.10$ ($p(R>5)=35.4\%$) and $w_{10}=0.15$, so exposure concentrates on prefixes that converge within about ten recurrences. The count ignores attention weights and gradient magnitudes, and Figure~\ref{fig:prefix-dynamics} does not give per-bank convergence depths or training-time read traces.

Random-depth loss alone does not imply KV stationarity: $z^r=(z_{\rm a},(-1)^r z_{\rm b})$ has constant loss if the head reads only $z_{\rm a}$, while any KV that reads the second component alternates. Supervising late converged states also does not ensure that early, transient states respond well to terminal KV. Exposure is thus necessary but not sufficient for sharing, so Table~\ref{tab:prior-ablation} measures sharing directly.

\subsection{Learned Depth Prior}
\label{app:prior-algorithm}
Every learned prior uses equation~\eqref{eq:depth-prior-objective}. Table~\ref{tab:prior-reward-ablation}'s four reward arms (CE, log CE, exp CE, and inverse perplexity, $-\exp(-\mathrm{CE})$) use $\lambda_H=0.01$, and Table~\ref{tab:prior-ablation} reports inverse perplexity at $\lambda_H\in\{0,0.01\}$.
\paragraph{Sampling and controller updates.}
The fixed PLN-5 baseline caps equation~\eqref{eq:huginn-pln} at $R\le64$ and recalibrates $\mu$ so that $\mathbb E[\min(R,64)]=5$. The learned prior initializes from this capped PLN: $\phi_0=\log p_{\rm PLN}$ uses $64$ logits; tail mass accumulates at depth $64$ and all masses are positive. Sampling and update order follow Algorithm~\ref{alg:learned-prior}.

The backpropagation cap is $b_{\max}=10$; earlier recurrent states are detached, but gradients still reach the prelude through the input injected at each backpropagated recurrence, and the output head. Equation~\eqref{eq:depth-prior-objective} uses $\bar R=5$ and $\lambda_m=1$; global token weighting is specified below. The prior's separate Adam optimizer uses LR $10^{-3}$, betas $(0.90,0.999)$, and zero weight decay. During a prior warmup of $32$ updates, the controller collects its baseline without changing the logits. The fixed arm freezes the logits in the same sampler.

\paragraph{Loss aggregation and baseline.}
For microbatch $i$, detached loss sums and unmasked target counts are reduced across ranks, and dividing by the global count $N_i$ gives $\ell_i$. The within-step average of a per-microbatch statistic $\chi_i$ is
\begin{equation}
 \widehat{\mathbb E}_{n}[\chi]=\frac{\sum_i N_i\chi_i}{\sum_i N_i}.
 \label{eq:depth-prior-weighted-average}
\end{equation}
For raw-CE control, with $\ell$ the globally reduced mean CE at sampled depth $r$, $\hat\ell_{n}$ the pre-step baseline, and $\bar R$ the target mean, the controller minimizes
\begin{equation}
\mathcal J_{\mathrm{CE}}(\phi)
=\widehat{\mathbb E}_{n}\left[
\bigl(\operatorname{stopgrad}(\ell)-\hat\ell_{n}\bigr)
\log p_\phi(r)\right]
-\lambda_H H(p_\phi)+\lambda_m\bigl(\mathbb E_{p_\phi}[R]-\bar R\bigr)^2.
\label{eq:depth-prior-controller}
\end{equation}
Both regularizers are evaluated analytically over active depths and differentiated only through the logits.
The objective and baseline share this weighting. The first weighted CE initializes the baseline, and its EMA also runs during the prior warmup:
\begin{equation}
 \hat\ell_{n+1}=\nu \hat\ell_{n}+(1-\nu)\widehat{\mathbb E}_{n}\!\left[\operatorname{stopgrad}(\ell)\right],
 \label{eq:depth-prior-baseline}
\end{equation}
where $\nu\in[0,1)$ is the EMA decay, set to $0.95$.

\paragraph{Alternative prior signals.}
Three PLN-initialized arms replace raw CE in equation~\eqref{eq:depth-prior-objective} by one of three rewards, computed after global token-weighted reduction: $\omega(\ell)=-\log(\max(\ell,10^{-12}))$, $-\exp(\ell)$, or $\exp(-\ell)$. Their advantage is $A_i=s_{n-1}(u_i-\hat u_{n-1})$ with $u_i=\omega(\ell_i)$, where $s$ is the ratio of previous CE and reward standard deviations, each floored at $10^{-12}$; it starts at one. These arms retain the model CE objective and mean penalty, adding no forward passes.

\paragraph{Reward form and entropy.}
Table~\ref{tab:prior-reward-ablation} names each arm by its cost $-\omega(\ell)$. Centering, scaling, and optimization order follow Algorithm~\ref{alg:learned-prior}. The entropy in \secref{sec:fixed-point-depth-prior} has logit derivative $\partial H/\partial\phi(r)=-p_\phi(r)[\log p_\phi(r)+H(p_\phi)]$, requiring no model forward pass. It does not constrain tail mass at a particular depth; \appref{app:m-entropy-matched} evaluates its coefficient.

\paragraph{Uniform controls.}
$U_{1:64}$ initializes all logits equally (mean $32.50$) and retains full support. $U_{1:10}$ restricts uniform initialization and normalization to depths $1$--$10$ (mean $5.50$), keeping depths $11$--64 inactive. Both target a mean depth of five and use raw-CE feedback with the PLN arm's model and prior optimizers. The restricted control changes both initialization and support. Within the training budget, the mean penalty does not pull $U_{1:64}$ down to five: its final prior mean is $6.15$ (Table~\ref{tab:prior-reward-ablation}).

\paragraph{Compute accounting.}
Backward-window visits add backward and recomputation work. Model-FLOP totals count each logical update once and exclude recomputation, controller work, and repeated work after restarts. Learning rates, schedules, and batch layouts follow \appref{app:setup-details}.

\subsection{Architecture Variants and Initialization}
\label{app:architecture-variants}
Table~\ref{tab:input-injection} defines the recipes of \secref{sec:architecture}; OrthoInj's anchor is the vector it projects against.
\begin{table}[!htbp]
\centering\papertablestyle\renewcommand{\appendixtablefontsize}{9}%
\caption{\textbf{Input-injection recipes.} $e=P_\theta(x)$ is the prelude output, $\tilde e$ is the injected input, and RMSNorm uses learned gains. DEQ-QKV injects at internal sites (superscript $(j)$ indexes a recurrent layer); the others inject at the core entrance. $\Lambda=\operatorname{diag}(\exp(-\Delta\odot \alpha))$ with $\Delta=\operatorname{softplus}(\Delta_{\rm raw})$ and $\alpha=\exp(\alpha_{\rm raw})$, and OrthoInj applies $Q_q$ of equation~\eqref{eq:orthoinj} with $q=\Delta\odot W\tilde e$ and $\epsilon=10^{-6}$.}
\label{tab:input-injection}
\begin{tabular}{@{}lll@{}}
\toprule
Injection recipe & Injected input $\tilde e$ & Input to recurrent core \\
\midrule
DEQ-QKV & $e$ & $W_{\rm QKV}^{(j)}h^{r,(j)}+W_e^{(j)}\tilde e$ \\
Huginn-Linear~\citep{huginn} & $e$ & $W_{\rm lin}[h^r;\tilde e;1]$ \\
Parcae-Decay~\citep{parcae} & $\operatorname{RMSNorm}(e)$ & $\Lambda h^r+\Delta\odot W\tilde e$ \\
Parcae w/o norm & $e$ & $\Lambda h^r+\Delta\odot W\tilde e$ \\
OrthoInj (ours) & $e$ & $Q_q\Lambda h^r+q$ \\
\bottomrule
\end{tabular}
\end{table}

\paragraph{Input source and retention.}
DEQ-QKV injects the prelude output, which stays fixed across recurrences, into Q/K/V through separate bias-free, Gaussian-initialized projections in the two physical recurrent layers, shared across visits. It replaces the injection at the core entrance and the diagonal retention of earlier states, and keeps the prelude and the Gaussian initialization of the recurrent state.

Huginn-Linear uses a learned concatenation projection (the appended $1$ gives its bias) followed by a parameter-free RMSNorm after the full core:
\begin{equation}
 \zeta_t^r=W_{\rm lin}[h_t^r;e_t;1],\qquad
 \mathbf H^{r+1}=\operatorname{RMSNorm}_{\rm exit}\!\left[\Gamma_\theta(\zeta_1^r,\ldots,\zeta_N^r)\right].
 \label{eq:linear-injection}
\end{equation}
The diagonal injection recipes use
\begin{equation}
 \begin{gathered}
 \Delta=\operatorname{softplus}(\Delta_{\rm raw}),\qquad \alpha=\exp(\alpha_{\rm raw}),\qquad \Lambda=\operatorname{diag}(\exp(-\Delta\odot\alpha)),\\
 q_t=\Delta\odot W\tilde e_t,\qquad \zeta_t^r=\Lambda h_t^r+q_t.
 \end{gathered}
 \label{eq:input-injection}
\end{equation}
Here $\Delta_{\rm raw},\alpha_{\rm raw}\in\mathbb R^d$ are shared across visits, $\odot$ denotes channelwise multiplication, $W\in\mathbb R^{d\times d}$ is a full, bias-free input projection, and $q_t$ is the injected vector of \secref{sec:architecture}; the core update $\mathbf H^{r+1}=\Gamma_\theta(\zeta_1^r,\ldots,\zeta_N^r)$ follows equation~\eqref{eq:additive-input-injection}.
With the prelude output $e=P_\theta(x)$ of \secref{sec:fixed-points}, Parcae-Decay uses $\tilde e=\operatorname{RMSNorm}(e)$ with learned gains, $W_{\rm init}=I$, $\alpha_{\rm init}=1$, and $\Delta_{\rm init}=\tfrac12\log5$ (implemented by inverse softplus). This gives retention $1/\sqrt5$, matching the intended linear-injection scale~\citep{parcae_implementation}; prelude normalization addresses large-output instability~\citep[Appendix~J]{parcae}. Parcae w/o norm uses $\tilde e=e$ and $\Delta_{\rm init}=1$, retaining $W_{\rm init}=I$ and $\alpha_{\rm init}=1$; it changes both normalization and initialization. OrthoInj uses $\tilde e=e$ without prelude RMSNorm and the Parcae-Decay initialization ($W_{\rm init}=I$, $\alpha_{\rm init}=1$, $\Delta_{\rm init}=\tfrac12\log5$). It projects the carryover against the injected vector $q$ as in \secref{sec:architecture}, with FP32 projection inner products and $\epsilon=10^{-6}$, and retains all block RMSNorms. An earlier OrthoInj variant, used only for the M learning-rate screen (\appref{app:setup-lr}), uses $\Delta_{\rm init}=1$ and the raw input $e$ as its anchor. {{Unlike the released Parcae implementation, which applies a learned linear readout $C$ to the final recurrent state before the coda~\citep{parcae_implementation}, these diagonal-injection recipes pass the final state to the coda directly ($C=I$); the readout lies outside the recurrence and does not enter the fixed-point iteration.}}

\paragraph{Linear injection and recurrent normalization.}
\label{app:linear-normalization}
The original Huginn normalizes before attention and FFN, after residual additions, and at the core exit~\citep{huginn}. We use PreNorm blocks; Huginn-Linear adds the parameter-free core-exit RMSNorm in equation~\eqref{eq:linear-injection}.

Without exit normalization, S PLN-25 training reached a nonfinite loss at all five tested learning rates. Replaying a fixed checkpoint traced the amplification to the injection interface; adding the exit RMSNorm kept the same inputs finite through recurrence $64$. This replay only diagnoses forward stability; every evaluated Huginn-Linear model is trained from scratch with the exit norm.

\FloatBarrier
\section{Additional Experimental Results}
\label{app:additional-results}
We report supplementary results and their settings; \appref{app:protocols} gives the shared protocols.

\subsection{Depth-Prior Learning and Test-Time Scaling}
\label{app:prior-results}

\paragraph{Training and evaluation.}
Every depth-prior run uses Parcae w/o norm injection (Table~\ref{tab:input-injection}). Every sampled-depth row of Table~\ref{tab:prior-ablation} draws one depth per microbatch, independently across microbatches and shared across ranks. Runs use one training seed and the setup of \appref{app:setup-details}.

Native MixVal and CacheEval follow \appref{app:setup-mixture} and \appref{app:setup-cache} at depth five. Sharing penalties are within-checkpoint ratios $100(\mathrm{PPL}_{\rm terminal}/\mathrm{PPL}_{\rm FullKV}-1)$, not differences from native MixVal. Model-FLOP accounting follows \appref{app:prior-algorithm}. For Fixed PLN-5, the trainer logs a constant nominal cost per update, so Tables~\ref{tab:prior-ablation} and~\ref{tab:prior-reward-ablation} instead apply the learned priors' per-depth cost to the depths each run drew. Every column of Table~\ref{tab:prior-ablation} except FLOPs follows the terminal-KV protocol of \appref{app:setup-terminal}, and Table~\ref{tab:prior-ablation-fullkv} evaluates the same checkpoints without KV reuse. The task AVG excludes the generation scores (GSM8K, DROP, and MBPP+).

\begin{table}[!ht]
\centering\resulttablestyle
\caption{\textbf{Table~\ref{tab:prior-ablation} without KV reuse.} Same checkpoints as Table~\ref{tab:prior-ablation}, evaluated at $R=5$. Val.\ and Wiki are native PPL over full sequences, choice tasks are scored without a cache, and generation keeps all twelve visit-specific KV banks. Table~\ref{tab:prior-ablation} instead scores Val.\ and Wiki only on the tokens after a $4{,}096$-token prompt that continue a document begun inside it (\appref{app:setup-terminal}), so only the task columns compare directly across the two tables. Untied~12 stacks twelve distinct blocks ($0.31$B, $1.25$B and $4.98$B non-embedding parameters at S, M and L, three times the looped models) and has twelve KV banks; it is scored the same way and not bolded.}
\label{tab:prior-ablation-fullkv}
\fontsize{9}{10.8}\selectfont\renewcommand{\arraystretch}{\resultrowstretch}
\begin{NiceTabular}{@{\hspace{6pt}}lccccccccccccc@{\hspace{6pt}}}
\CodeBefore
\rowcolor{papergray}{3,9,15}
\Body
\toprule
Training & \multicolumn{2}{c}{PPL $\downarrow$} & \multicolumn{8}{c}{Downstream (\%) $\uparrow$} & \multicolumn{3}{c}{Generation (\%) $\uparrow$} \\
\cmidrule(lr){1-1}\cmidrule(lr){2-3}\cmidrule(lr){4-11}\cmidrule(lr){12-14}
Prior & Val. & Wiki & LAMB. & HS & PIQA & ARC-E & ARC-C & OBQA & SciQ & AVG & GSM8K & DROP & MBPP+ \\
\midrule
\multicolumn{14}{l}{\textbf{S scale: (100M, 21.5B tokens)}} \\
Fixed $R{=}5$ & \textbf{6.50} & \textbf{20.62} & \textbf{27.79} & \textbf{32.57} & 62.19 & 47.56 & 25.77 & 15.20 & 78.80 & 41.41 & 2.05 & \textbf{15.28} & 0.00 \\
Fixed PLN-5 & 6.69 & 21.47 & 26.04 & 32.14 & 62.13 & \textbf{47.94} & 24.23 & 15.60 & 81.20 & 41.33 & \textbf{2.27} & 12.39 & \textbf{0.62} \\
Learned, $\lambda_H{=}0$ & 6.60 & 21.14 & 26.94 & 32.09 & 62.19 & 45.92 & 24.66 & \textbf{18.00} & \textbf{81.60} & 41.63 & 1.90 & 12.72 & 0.25 \\
Learned, $\lambda_H{=}0.01$ & 6.60 & 21.12 & 27.05 & 32.11 & \textbf{62.84} & 45.45 & \textbf{26.02} & 17.40 & 81.30 & \textbf{41.74} & 2.12 & 13.32 & \textbf{0.62} \\
\cmidrule(lr){1-14}
Untied 12 & 5.80 & 18.20 & 31.67 & 34.78 & 64.80 & 50.93 & 26.02 & 15.80 & 82.30 & 43.76 & 1.59 & 12.50 & 2.75 \\
\midrule
\multicolumn{14}{l}{\textbf{M scale: (400M, 85.9B tokens)}} \\
Fixed $R{=}5$ & \textbf{4.51} & \textbf{12.74} & \textbf{40.19} & \textbf{46.04} & 67.41 & 57.24 & \textbf{30.55} & \textbf{19.00} & \textbf{90.20} & \textbf{50.09} & \textbf{15.54} & \textbf{21.85} & 13.12 \\
Fixed PLN-5 & 4.66 & 13.31 & 39.22 & 44.01 & 67.36 & 55.51 & 29.61 & 17.20 & 88.60 & 48.79 & 12.89 & 17.15 & 9.12 \\
Learned, $\lambda_H{=}0$ & 4.55 & 12.89 & 39.08 & 45.30 & \textbf{67.68} & \textbf{57.32} & 28.75 & 18.20 & 88.40 & 49.25 & 14.40 & 19.35 & \textbf{13.62} \\
Learned, $\lambda_H{=}0.01$ & 4.57 & 12.96 & 38.48 & 45.30 & 67.63 & 56.86 & 28.58 & 16.60 & 87.40 & 48.69 & 14.10 & 17.72 & 11.75 \\
\cmidrule(lr){1-14}
Untied 12 & 4.18 & 11.67 & 42.11 & 49.25 & 70.13 & 60.14 & 30.80 & 19.20 & 90.50 & 51.73 & 15.92 & 21.14 & 16.00 \\
\midrule
\multicolumn{14}{l}{\textbf{L scale: (1.6B, 343.6B tokens)}} \\
Fixed $R{=}5$ & \textbf{3.51} & \textbf{8.99} & 54.55 & 61.68 & 73.39 & 69.53 & \textbf{40.27} & \textbf{25.80} & \textbf{94.10} & 59.90 & \textbf{50.64} & \textbf{36.10} & 27.12 \\
Fixed PLN-5 & 3.58 & 9.23 & 54.65 & 60.29 & 73.56 & 69.02 & 37.71 & 24.40 & 93.00 & 58.95 & 44.12 & 33.89 & 24.50 \\
Learned, $\lambda_H{=}0$ & 3.53 & 9.05 & \textbf{55.42} & \textbf{61.85} & 72.91 & 70.08 & \textbf{40.27} & 23.00 & 93.10 & 59.52 & 47.23 & 35.20 & 27.38 \\
Learned, $\lambda_H{=}0.01$ & 3.54 & 9.11 & 55.27 & 61.71 & \textbf{73.72} & \textbf{71.55} & 40.02 & 25.40 & 93.40 & \textbf{60.15} & 50.27 & 35.01 & \textbf{28.88} \\
\cmidrule(lr){1-14}
Untied 12 & 3.34 & 8.36 & 57.73 & 64.59 & 74.37 & 72.31 & 42.32 & 25.00 & 94.50 & 61.55 & 49.58 & 39.18 & 30.88 \\
\bottomrule
\CodeAfter
\begin{tikzpicture}
\foreach \sepcol in {2,4,12}{
\draw[draw=resultgrouprule,line width=0.35pt] (1-|\sepcol) -- (3-|\sepcol);
\draw[draw=resultgrouprule,line width=0.35pt] (4-|\sepcol) -- (9-|\sepcol);
\draw[draw=resultgrouprule,line width=0.35pt] (10-|\sepcol) -- (15-|\sepcol);
\draw[draw=resultgrouprule,line width=0.35pt] (16-|\sepcol) -- (21-|\sepcol);
}
\end{tikzpicture}
\end{NiceTabular}
\end{table}

\paragraph{Matched M entropy-coefficient ablation.}
\label{app:m-entropy-matched}
Figure~\ref{fig:m-entropy-ablation} varies $\lambda_H\in\{0,0.01,0.05,0.1\}$ at M scale. As $\lambda_H$ rises, the mean depth stays near five while $p_\phi(5)$ falls from $67.4\%$ to $14.0\%$ (Table~\ref{tab:m-entropy-distribution}). WikiText PPL at $R=5$ rises from $12.89$ to $13.20$, and at $R=32$, $\lambda_H=0.01$ and $0.05$ tie for the lowest PPL ($12.93$; panel (a)). Panel (b) isolates the cost of KV sharing by scoring each checkpoint with per-visit and shared KV. Sharing raises the continuation PPL on WikiText (after a $1{,}024$-token prefill, \appref{app:setup-wiki}) of fixed-depth training from $12.19$ to $16.71$, whereas PLN-5 and all learned priors retain or slightly improve it. On WikiText and LAMBADA, higher entropy does not monotonically improve sharing. Without KV reuse (Table~\ref{tab:prior-ablation-fullkv}), $\lambda_H=0$ gives slightly lower validation PPL than $\lambda_H=0.01$ at M and L ($4.55$ vs.\ $4.57$ and $3.53$ vs.\ $3.54$) and ties it at S, but with terminal KV sharing it falls behind at L (Table~\ref{tab:prior-ablation}): GSM8K $36.47$ vs.\ $47.92$.

\begin{figure}[!htbp]
\centering
\input{figures/appendix/m_entropy_table_geometry.tex}
\begin{tikzpicture}
  \node[anchor=south west,inner sep=0] (plots) at (0,0)
    {\includegraphics[width=\linewidth]{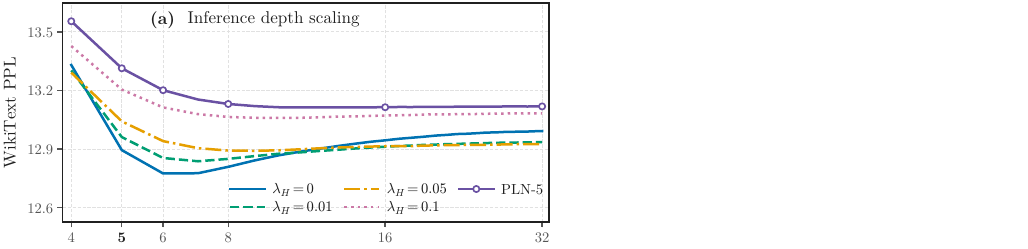}};
  \begin{scope}[x={(plots.south east)},y={(plots.north west)}]
    \node[anchor=north,inner sep=0] at (\mentropytablex,\mentropytabley) {%
      \begin{minipage}{\mentropytablewidth\linewidth}
        \centering\resulttablestyle
        {\fontsize{8.2}{10}\selectfont\textbf{(b)}\quad KV cache sharing (PPL $\downarrow$)\par}
        \vspace{3pt}
        \fontsize{7.4}{8.5}\selectfont
        \setlength{\tabcolsep}{1pt}
        \renewcommand{\arraystretch}{1.0}
        \begin{NiceTabular}{@{\hspace{3pt}}l*{6}{>{\centering\arraybackslash}m{26pt}}@{\hspace{3pt}}}
\CodeBefore
\rowcolor{papergray}{3}
\Body
\toprule
Training & \multicolumn{2}{c}{WikiText} & \multicolumn{2}{c}{MixVal} & \multicolumn{2}{c}{LAMBADA} \\
\cmidrule(lr){1-1}\cmidrule(lr){2-3}\cmidrule(lr){4-5}\cmidrule(lr){6-7}
Prior & Per-visit & Shared & Per-visit & Shared & Per-visit & Shared \\
\midrule
\multicolumn{7}{l}{\textbf{M scale: (400M, 85.9B tokens)}} \\
Fixed $R=5$ & 12.19 & 16.71 & 4.40 & 5.76 & 19.82 & 20.68\\
PLN-5 & 12.75 & 12.60 & 4.57 & 4.56 & 21.89 & 21.82\\
\midrule
$\lambda_H=0$ & 12.34 & 12.34 & 4.45 & 4.49 & 21.36 & 21.33\\
$\lambda_H=0.01$ & 12.42 & 12.37 & 4.46 & 4.48 & 22.92 & 22.93\\
$\lambda_H=0.05$ & 12.48 & 12.37 & 4.50 & 4.49 & 22.06 & 22.05\\
$\lambda_H=0.1$ & 12.64 & 12.53 & 4.54 & 4.53 & 22.27 & 22.25\\
\bottomrule
\CodeAfter
\begin{tikzpicture}
\foreach \sepcol in {2,4,6}{
\draw[draw=resultgrouprule,line width=0.35pt] (1-|\sepcol) -- (3-|\sepcol);
\draw[draw=resultgrouprule,line width=0.35pt] (4-|\sepcol) -- (10-|\sepcol);
}
\end{tikzpicture}
\end{NiceTabular}

      \end{minipage}};
  \end{scope}
\end{tikzpicture}
\input{figures/appendix/m_entropy_accuracy_note.tex}
\caption{\textbf{Entropy trades PPL at the training depth for PPL at larger depths; learned priors keep KV sharing.} M scale. (a) WikiText-103 PPL across test-time depths; the bold tick marks the training mean $R=5$. (b) PPL with per-visit KV ($12$ banks) and shared KV ($4$ banks). Panel (b) scores continuations or last words, so its values differ from panel (a).}
\label{fig:m-entropy-ablation}
\end{figure}

These M runs differ only in $\lambda_H$: they share model, data, seed, learning-rate schedule, and the controller of \appref{app:prior-algorithm} (prior LR $0.001$, $\lambda_m=1$).

Paired readouts follow \appref{app:setup-wiki} and \appref{app:setup-cache}; paired LAMBADA last-word PPL averages over the $5{,}153$ words and includes the prefill prediction. Native seven-task averages, with WinoGrande instead of SciQ (\appref{app:setup-wiki}), stay within $43.7$--$44.0$ across $\lambda_H$.

\paragraph{Paired task accuracy with and without KV sharing.}
Table~\ref{tab:m-entropy-paired-tasks} reports paired FullKV and terminal task scoring for the learned priors at $R_{\rm pre}=R_{\rm dec}=5$. Sharing does not lower the average of any learned prior (AVG $\Delta$ from $+0.25$ to $+0.70$ points), whereas Fixed $R=5$ loses $0.61$ points (FullKV and shared AVG $44.73$ and $44.12$); PLN-5 gives $44.30$ and $44.44$. Both layouts score identical requests in BF16 without FA3, and each answer option starts from a fresh cache.

Accuracy intervals use $2{,}000$ paired bootstrap draws per task, keeping task weights equal.
\begin{table}[!htbp]
\centering\resulttablestyle
\caption{\textbf{Terminal sharing does not lower the task accuracy of any learned prior.} M entropy sweep at $R_{\rm pre}=R_{\rm dec}=5$. Per-task columns give Shared minus FullKV accuracy in points; the interval is a paired, task-stratified $95\%$ bootstrap interval.}
\label{tab:m-entropy-paired-tasks}
\renewcommand{\arraystretch}{\resultrowstretch}
\begin{NiceTabular}{@{\hspace{6pt}}cccccccccccc@{\hspace{6pt}}}
\CodeBefore
\rowcolor{papergray}{3}
\Body
\toprule
Prior & \multicolumn{7}{c}{Shared $-$ FullKV (points)} & \multicolumn{4}{c}{AVG (\%)} \\
\cmidrule(lr){1-1}\cmidrule(lr){2-8}\cmidrule(lr){9-12}
$\lambda_H$ & LAMB. & HS & PIQA & ARC-E & ARC-C & WG & OBQA & FullKV & Shared & $\Delta$ & 95\% interval \\
\midrule
\multicolumn{12}{l}{\textbf{M scale: (400M, 85.9B tokens)}} \\
0 & +0.06 & +0.36 & +1.64 & +0.76 & 0.00 & +1.90 & +0.20 & 43.97 & 44.67 & +0.70 & [+0.31, +1.11] \\
0.01 & +0.02 & +0.75 & $-$0.33 & +0.93 & $-$0.51 & +0.47 & +0.40 & 43.68 & 43.93 & +0.25 & [$-$0.14, +0.62] \\
0.05 & +0.02 & +0.69 & +0.65 & +0.84 & +0.09 & +0.87 & 0.00 & 43.77 & 44.22 & +0.45 & [+0.10, +0.80] \\
0.1 & +0.04 & +0.50 & +0.82 & +0.71 & +0.26 & +0.32 & +0.60 & 43.96 & 44.42 & +0.46 & [+0.13, +0.80] \\
\bottomrule
\CodeAfter
\begin{tikzpicture}
\foreach \sepcol in {2,9}{
\draw[draw=resultgrouprule,line width=0.35pt] (1-|\sepcol) -- (3-|\sepcol);
\draw[draw=resultgrouprule,line width=0.35pt] (4-|\sepcol) -- (8-|\sepcol);
}
\end{tikzpicture}
\end{NiceTabular}
\end{table}

Table~\ref{tab:m-entropy-distribution} records final priors and realized means. The CacheEval sharing penalty decreases monotonically with entropy, from $+0.96\%$ to $-0.22\%$, and absolute shared MixVal PPL is lowest at $\lambda_H=0.01$ ($4.48$, within $0.01$ of $\lambda_H=0$ and $0.05$; Figure~\ref{fig:m-entropy-ablation}b). Intervals use $2{,}000$ paired sequence-CE bootstrap draws within each source, transformed by $100(\exp(\Delta\mathrm{CE})-1)$; they quantify evaluation uncertainty, not training-seed variability or between-coefficient differences.
\begin{table}[!htbp]
\centering\resulttablestyle
\caption{\textbf{Higher entropy spreads the learned prior and lowers the sharing penalty.} Final priors of the M entropy sweep. $\widehat{\E}[R]$ is the mean realized draw and $H(p_\phi)$ is in nats; probabilities and penalties are percentages, with paired $95\%$ bootstrap intervals.}
\label{tab:m-entropy-distribution}
\renewcommand{\arraystretch}{\resultrowstretch}
\begin{NiceTabular}{@{\hspace{6pt}}ccccccccc@{\hspace{6pt}}}
\CodeBefore
\rowcolor{papergray}{3}
\Body
\toprule
Prior & \multicolumn{6}{c}{Final depth distribution} & \multicolumn{2}{c}{Sharing penalty (\%)} \\
\cmidrule(lr){1-1}\cmidrule(lr){2-7}\cmidrule(lr){8-9}
$\lambda_H$ & $\E_{p_\phi}[R]$ & $\widehat{\E}[R]$ & $H(p_\phi)$ & $p_\phi(5)$ & $p_\phi(R>10)$ & $p_\phi(R>16)$ & Penalty & 95\% interval \\
\midrule
\multicolumn{9}{l}{\textbf{M scale: (400M, 85.9B tokens)}} \\
0 & 5.007 & 5.014 & 1.067 & 67.40 & 0.32 & 0.010 & +0.955 & [+0.786, +1.136] \\
0.01 & 5.008 & 5.033 & 1.592 & 37.79 & 0.79 & 0.042 & +0.340 & [+0.189, +0.503] \\
0.05 & 5.009 & 5.040 & 2.091 & 19.32 & 2.69 & 0.156 & $-$0.030 & [$-$0.159, +0.101] \\
0.1 & 5.016 & 5.036 & 2.293 & 13.98 & 5.02 & 0.488 & $-$0.224 & [$-$0.342, $-$0.108] \\
\bottomrule
\CodeAfter
\begin{tikzpicture}
\foreach \sepcol in {2,8}{
\draw[draw=resultgrouprule,line width=0.35pt] (1-|\sepcol) -- (3-|\sepcol);
\draw[draw=resultgrouprule,line width=0.35pt] (4-|\sepcol) -- (8-|\sepcol);
}
\end{tikzpicture}
\end{NiceTabular}
\end{table}

\paragraph{Training trajectories and depth distributions.}
Figure~\ref{fig:prior-training-loss} plots the training loss of the S and M checkpoints in Table~\ref{tab:prior-ablation}. Figures~\ref{fig:fixed-vs-pln-depth-scaling} and~\ref{fig:fixed-pln-learned-distributions} use the S/M checkpoints of Table~\ref{tab:prior-ablation}: fixed PLN-5 with independent draws and learned priors with $\lambda_H=0.01$ (LR $0.002$ at S, $0.0015$ at M). The learned M checkpoint is the $\lambda_H=0.01$ curve of Figure~\ref{fig:m-entropy-ablation}, whose PLN-5 reference is also the Table~\ref{tab:prior-ablation} run.
\begin{figure}[!htbp]
\centering
\includegraphics[width=\linewidth]{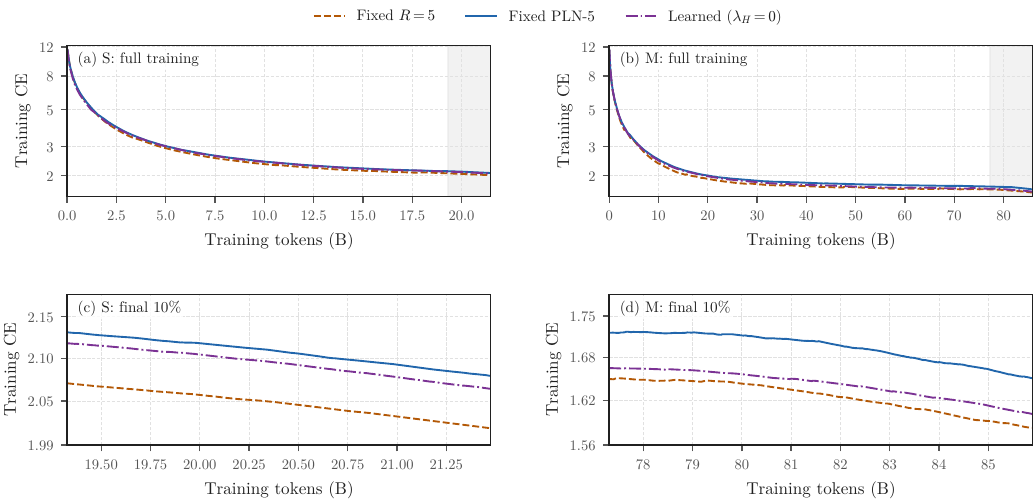}
\caption{\textbf{Training loss for the S/M checkpoints in Table~\ref{tab:prior-ablation}.}
Fixed PLN-5 draws depths independently.
S and M complete $21.5$B and $85.9$B tokens, respectively; learned priors use $\lambda_H=0$.
Curves show debiased exponential moving averages of optimizer-step CE (decay $0.999$), with linear token axes and logarithmic CE axes; the lower panels expand the shaded final $10\%$.
CE is measured at $R=5$ for fixed-depth training and averaged over sampled depths for the priors.
Learning rates and batch layouts follow \appref{app:setup-details}.}
\label{fig:prior-training-loss}
\end{figure}

\begin{figure}[!htbp]
\centering
\includegraphics[width=\linewidth]{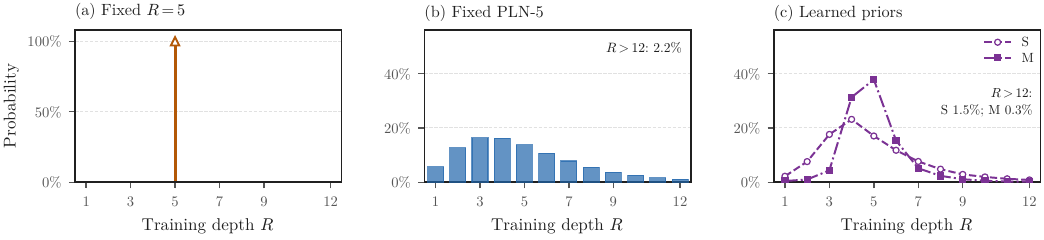}
\caption{\textbf{Depth distributions for the scaling comparison.}
Fixed $R=5$, fixed PLN-5, and learned S/M priors ($\lambda_H=0.01$) of Figure~\ref{fig:fixed-vs-pln-depth-scaling}.
Depths $1$--$12$ are shown without renormalization; annotations give the remaining mass through depth $64$.
Panels (b,c) share a probability scale.}
\label{fig:fixed-pln-learned-distributions}
\end{figure}

\paragraph{Reward and initialization ablations.}
Table~\ref{tab:prior-reward-ablation} differs from Table~\ref{tab:prior-ablation} in two ways: its S control is an earlier Fixed PLN run with independent draws, and Val.\ is native PPL on the LR-selection set (\appref{app:setup-lr}) rather than terminal-KV PPL on \textsc{MixVal}. Four-task scores use summed answer-token likelihood on full splits (\appref{app:setup-wiki}).

Cached PPL uses four terminal banks and the common CacheEval protocol at $R_{\rm pre}=R_{\rm dec}=5$. At M, the fixed PLN arm uses antithetic draws and the learned priors use independent draws; the arms otherwise share budget, model, data, seed, and optimizer.
\begin{table}[!ht]
\centering\resulttablestyle
\caption{\textbf{Reward and initialization ablations for the learned prior.} Rows name the reward signal. Learned rows use $\lambda_H=0.01$ and start from capped PLN-5 ($U_{1:k}$: uniform over depths $1,\ldots,k$); the M Fixed PLN row uses antithetic draws. FLOPs are in $10^{19}$ and $\E_{p_\phi}[R]$ is the final prior mean. Val.\ is PPL on the LR-selection set.}
\label{tab:prior-reward-ablation}
\renewcommand{\arraystretch}{\resultrowstretch}
\begin{NiceTabular}{@{\hspace{6pt}}lcccccccccc@{\hspace{6pt}}}
\CodeBefore
\rowcolor{papergray}{3,11}
\Body
\toprule
\multicolumn{3}{c}{Training} & \multicolumn{3}{c}{PPL $\downarrow$} & \multicolumn{5}{c}{Downstream (\%) $\uparrow$} \\
\cmidrule(lr){1-3}\cmidrule(lr){4-6}\cmidrule(lr){7-11}
Prior & $\E_{p_\phi}[R]$ & FLOPs & Val. & Wiki & Cached & HS & ARC-C & ARC-E & PIQA & AVG \\
\midrule
\multicolumn{11}{l}{\textbf{S scale: (100M, 21.5B tokens)}} \\
Fixed PLN & 5.00 & 7.31 & 6.86 & 21.34 & 6.52 & 29.60 & 22.15 & 47.06 & 61.15 & 39.99 \\
\midrule
CE & 5.00 & 7.33 & \textbf{6.79} & \textbf{21.17} & 6.47 & 29.46 & 21.80 & 46.89 & 61.81 & 39.99 \\
$\log(\mathrm{CE})$ & 5.01 & 7.34 & 6.85 & 21.60 & 6.51 & \textbf{29.69} & 21.20 & 46.81 & 61.75 & 39.86 \\
$\exp(\mathrm{CE})$ & 5.01 & 7.33 & 6.85 & 21.59 & 6.50 & 29.67 & 21.03 & 46.51 & 61.86 & 39.77 \\
$-\exp(-\mathrm{CE})$ & 5.00 & 7.33 & 6.80 & 21.19 & \textbf{6.46} & 29.50 & 21.72 & 47.44 & \textbf{62.35} & 40.25 \\
\midrule
CE ($U_{1:10}$) & 5.01 & 7.37 & 6.87 & 21.58 & 6.52 & 29.67 & \textbf{22.40} & \textbf{47.65} & 61.92 & \textbf{40.41} \\
CE ($U_{1:64}$) & 6.15 & 11.53 & 6.94 & 21.40 & 6.57 & 29.42 & 21.55 & 46.98 & 62.24 & 40.04 \\
\midrule
\multicolumn{11}{l}{\textbf{M scale: (400M, 85.9B tokens)}} \\
Fixed PLN & 5.00 & 91.01 & 4.72 & 13.16 & 4.49 & 36.15 & 25.49 & 58.86 & 68.66 & 47.29 \\
\midrule
CE & 5.01 & 91.99 & 4.64 & 12.85 & 4.47 & \textbf{36.98} & \textbf{28.24} & 58.22 & \textbf{69.64} & \textbf{48.27} \\
$-\exp(-\mathrm{CE})$ & 5.01 & 91.95 & \textbf{4.63} & \textbf{12.77} & \textbf{4.46} & 36.76 & 26.87 & 58.14 & 69.37 & 47.78 \\
CE ($U_{1:10}$) & 5.01 & 92.31 & 4.64 & 12.91 & 4.48 & 36.89 & 27.73 & \textbf{59.28} & 68.72 & 48.15 \\
\bottomrule
\CodeAfter
\begin{tikzpicture}
\foreach \sepcol in {4,7}{
\draw[draw=resultgrouprule,line width=0.35pt] (1-|\sepcol) -- (3-|\sepcol);
\draw[draw=resultgrouprule,line width=0.35pt] (4-|\sepcol) -- (11-|\sepcol);
\draw[draw=resultgrouprule,line width=0.35pt] (12-|\sepcol) -- (16-|\sepcol);
}
\end{tikzpicture}
\end{NiceTabular}
\end{table}

With model CE training unchanged, inverse perplexity gives the best cached PPL and task mean among the S learned-PLN arms, while raw CE and inverse perplexity tie on native and WikiText PPL (within $0.02$). At M, raw CE leads task accuracy and inverse perplexity leads WikiText and cached PPL, each costing about $1\%$ more FLOPs than fixed PLN. Log and exponential CE have higher S perplexity than either. $U_{1:10}$ leads S task accuracy but has higher PPL; it also changes both initialization and support. $U_{1:64}$ increases compute and worsens validation and cached PPL. These controls match tokens, not compute.

\paragraph{Fixed priors with the learned shape.}
\label{app:static-prior}
Table~\ref{tab:static-prior} separates the distribution the learned prior reaches from its adaptation during training. At S, the final learned prior moves mass from depths $1$--$2$ to depths $3$--$5$; the capped PLN closest to it in KL has $\sigma=0.35$. Sampling either this PLN or the frozen final distribution from the first update, at the learned prior's LR $0.002$, matches or beats the learned prior's validation PPL ($5.67$ and $5.63$ against $5.67$), with a downstream average within $1.1$ points. Fixed PLN-5 at the same LR draws the same depths as its Table~\ref{tab:prior-ablation} run and reaches $5.69$. At either LR, the learned shape lowers S validation PPL by $0.3$--$1.1\%$ relative to Fixed PLN-5. The learned prior finds a better fixed shape without a search over distributions; at S, adapting it during training adds no measurable gain. At M, the two rows of Table~\ref{tab:prior-ablation} share one LR; at L, each uses its own selected LR.

\begin{table}[!htbp]
\centering\resulttablestyle
\caption{\textbf{At S, fixed priors with the learned shape match the learned prior.} S models evaluated as in Table~\ref{tab:prior-ablation} (terminal KV sharing at $R=5$). Fixed PLN ($\sigma{=}0.35$) is the capped PLN closest to the final learned prior in KL; the frozen learned prior samples that final distribution from the first update. The first and last rows are the Table~\ref{tab:prior-ablation} runs; the others match the learned run except for the prior and LR. FLOPs are in units of $10^{19}$.}
\label{tab:static-prior}
\renewcommand{\arraystretch}{\resultrowstretch}
\begin{NiceTabular}{@{\hspace{6pt}}lccccccccccccccc@{\hspace{6pt}}}
\CodeBefore
\rowcolor{papergray}{3}
\Body
\toprule
\multicolumn{3}{c}{Training} & \multicolumn{2}{c}{PPL $\downarrow$} & \multicolumn{8}{c}{Downstream (\%) $\uparrow$} & \multicolumn{3}{c}{Generation (\%) $\uparrow$} \\
\cmidrule(lr){1-3}\cmidrule(lr){4-5}\cmidrule(lr){6-13}\cmidrule(lr){14-16}
Prior & LR & FLOPs & Val. & Wiki & LAMB. & HS & PIQA & ARC-E & ARC-C & OBQA & SciQ & AVG & GSM & DROP & MBPP+ \\
\midrule
\multicolumn{16}{l}{\textbf{S scale: (100M, 21.5B tokens)}} \\
Fixed PLN-5 ($\sigma{=}0.5$) & 0.0015 & 7.292 & 5.73 & 20.19 & 26.41 & 32.42 & 61.70 & \textbf{47.56} & 23.98 & 16.20 & 81.00 & 41.32 & 1.52 & 13.29 & 0.75 \\
Fixed PLN-5 ($\sigma{=}0.5$) & 0.002 & 7.292 & 5.69 & 20.12 & \textbf{27.52} & 32.40 & 61.21 & 46.93 & 23.98 & 16.20 & 79.90 & 41.16 & 2.43 & 14.02 & \textbf{1.00} \\
Fixed PLN ($\sigma{=}0.35$) & 0.0015 & 7.348 & 5.71 & 19.87 & 26.10 & 32.29 & 61.64 & 45.29 & 24.83 & 14.40 & 82.20 & 40.96 & 1.90 & 10.96 & 0.38 \\
Fixed PLN ($\sigma{=}0.35$) & 0.002 & 7.348 & 5.67 & \textbf{19.75} & 26.57 & 32.68 & 61.70 & 46.09 & 24.74 & 16.60 & 80.30 & 41.24 & 2.20 & 11.03 & 0.50 \\
Frozen learned prior & 0.0015 & 7.334 & 5.71 & 20.25 & 26.64 & \textbf{32.84} & 61.32 & 46.25 & 24.83 & 16.40 & \textbf{82.30} & 41.51 & 2.05 & 11.13 & 0.38 \\
Frozen learned prior & 0.002 & 7.334 & \textbf{5.63} & 19.92 & 27.21 & 32.54 & 61.53 & 44.23 & 24.74 & 15.60 & 78.40 & 40.61 & \textbf{2.65} & 13.15 & 0.12 \\
\midrule
Learned, $\lambda_H{=}0.01$ & 0.002 & 7.328 & 5.67 & 19.91 & 27.36 & 32.51 & \textbf{63.00} & 45.50 & \textbf{25.09} & \textbf{17.20} & 80.90 & \textbf{41.65} & 1.59 & \textbf{14.08} & 0.62 \\
\bottomrule
\end{NiceTabular}
\end{table}

\subsection{Input Injection}
\label{app:architecture-results}
Models, optimization, batch layouts, and token budgets follow \appref{app:setup-details}, and \appref{app:setup-lr} gives the learning-rate grids. Equal token budgets need not mean equal compute.

\paragraph{Injection site and KV sharing.}
DEQ-QKV injects the input into queries, keys, and values. It breaks terminal KV sharing: decoding CacheEval one token at a time (\appref{app:setup-cache}), its PPL is $16.52$, against $6.43$--$6.69$ for the other S schemes and $7.13$ on \textsc{MixVal} without KV reuse (Table~\ref{tab:architecture-fullkv}). At S, generation is near the floor for every scheme (GSM8K at most $2.05\%$, MBPP+ at most $1.62\%$; Table~\ref{tab:architecture}).

\begin{table}[!ht]
\centering\resulttablestyle
\caption{\textbf{Table~\ref{tab:architecture} without KV reuse.} Same checkpoints as Table~\ref{tab:architecture}, evaluated at $R=5$. Val.\ and Wiki are native PPL over full sequences, choice tasks are scored without a cache, and generation keeps all twelve visit-specific KV banks.}
\label{tab:architecture-fullkv}
\fontsize{9}{10.8}\selectfont\renewcommand{\arraystretch}{\resultrowstretch}
\begin{NiceTabular}{@{\hspace{6pt}}lccccccccccccc@{\hspace{6pt}}}
\CodeBefore
\rowcolor{papergray}{3,8,12}
\Body
\toprule
Model & \multicolumn{2}{c}{PPL $\downarrow$} & \multicolumn{8}{c}{Downstream (\%) $\uparrow$} & \multicolumn{3}{c}{Generation (\%) $\uparrow$} \\
\cmidrule(lr){1-1}\cmidrule(lr){2-3}\cmidrule(lr){4-11}\cmidrule(lr){12-14}
Injection & Val. & Wiki & LAMB. & HS & PIQA & ARC-E & ARC-C & OBQA & SciQ & AVG & GSM8K & DROP & MBPP+ \\
\midrule
\multicolumn{14}{l}{\textbf{S scale: (100M, 21.5B tokens)}} \\
DEQ-QKV & 7.13 & 23.83 & 25.29 & 30.65 & 60.99 & 44.74 & 24.32 & 14.20 & 78.60 & 39.83 & 1.36 & \textbf{14.61} & 0.50 \\
Huginn-Linear & 6.89 & 22.23 & 25.91 & 31.40 & \textbf{62.19} & 43.35 & 24.06 & \textbf{17.00} & 79.10 & 40.43 & 1.67 & 11.84 & 0.50 \\
Parcae-Decay & 6.71 & 21.24 & 26.02 & 32.20 & 61.75 & \textbf{47.26} & 24.40 & 16.40 & 81.40 & 41.35 & \textbf{2.35} & 10.94 & 0.62 \\
\midrule
OrthoInj & \textbf{6.60} & \textbf{21.16} & \textbf{27.30} & \textbf{32.38} & 61.92 & 45.45 & \textbf{25.34} & 16.60 & \textbf{82.10} & \textbf{41.59} & 1.36 & 13.34 & \textbf{1.00} \\
\midrule
\multicolumn{14}{l}{\textbf{M scale: (400M, 85.9B tokens)}} \\
Huginn-Linear & 4.65 & 13.33 & 36.33 & 43.90 & 67.85 & 56.94 & \textbf{29.18} & 17.40 & \textbf{89.10} & 48.67 & \textbf{15.09} & 16.82 & 8.38 \\
Parcae-Decay & 4.62 & 13.20 & 37.28 & 44.37 & \textbf{68.06} & \textbf{57.62} & 27.82 & \textbf{19.20} & 88.10 & 48.92 & 12.59 & \textbf{19.44} & 9.00 \\
\midrule
OrthoInj & \textbf{4.58} & \textbf{13.06} & \textbf{37.49} & \textbf{44.58} & 67.57 & 56.82 & \textbf{29.18} & 18.20 & 88.70 & \textbf{48.94} & 13.57 & 19.38 & \textbf{10.75} \\
\midrule
\multicolumn{14}{l}{\textbf{L scale: (1.6B, 343.6B tokens)}} \\
Parcae-Decay & 3.58 & 9.22 & 53.19 & 60.32 & \textbf{73.12} & \textbf{68.77} & \textbf{38.82} & 23.40 & \textbf{93.60} & 58.75 & \textbf{47.38} & \textbf{33.72} & \textbf{31.25} \\
\midrule
 OrthoInj & \textbf{3.57} & \textbf{9.21} & \textbf{54.22} & \textbf{60.76} & 72.58 & 68.52 & 37.20 & \textbf{24.60} & 93.50 & \textbf{58.77} & 47.01 & 31.73 & 27.25 \\
\bottomrule
\CodeAfter
\begin{tikzpicture}
\foreach \sepcol in {2,4,12}{
\draw[draw=resultgrouprule,line width=0.35pt] (1-|\sepcol) -- (3-|\sepcol);
\draw[draw=resultgrouprule,line width=0.35pt] (4-|\sepcol) -- (8-|\sepcol);
\draw[draw=resultgrouprule,line width=0.35pt] (9-|\sepcol) -- (12-|\sepcol);
\draw[draw=resultgrouprule,line width=0.35pt] (13-|\sepcol) -- (15-|\sepcol);
}
\end{tikzpicture}
\end{NiceTabular}
\end{table}

\paragraph{Initialization and evaluation.}
Native validation CE at S selects the standard OrthoInj initialization of \appref{app:architecture-variants}. M OrthoInj uses the same initialization and one complete $20{,}480$-update checkpoint for every score.
Table~\ref{tab:architecture} follows the terminal-KV protocol of \appref{app:setup-terminal}, and Table~\ref{tab:architecture-fullkv} its no-reuse protocol; generation is excluded from AVG.
We keep weight decay at $0.1$: in a single-seed S Parcae-Decay control (LR $0.0015$, $5{,}120$ updates), zero decay worsens validation and cached PPL from $6.63$ and $6.45$ to $6.76$ and $6.58$.

\paragraph{From Parcae-Decay to OrthoInj.}
OrthoInj differs from Parcae-Decay by two switches: it projects the carryover against the injected vector $q$, and it drops prelude RMSNorm. Table~\ref{tab:orthoinj-normalization-init} crosses the two switches at S. Each switch lowers validation PPL on its own: the projection from $6.71$ to $6.67$, and dropping prelude RMSNorm from $6.71$ to $6.63$. Their gains add up: the projection lowers PPL by a similar amount with and without prelude RMSNorm ($0.04$ and $0.03$), and OrthoInj, which has both switches, reaches $6.60$. OrthoInj also has the highest downstream average; the other three cells lie within $0.03$ points of each other.

All four cells train with PLN-5 and independent depth draws, seed $1$, common data, optimizer, and batch settings, $\Delta_{\rm init}=\tfrac12\log5$, and $5{,}120$ updates; transformer-block RMSNorms are retained throughout. Parcae-Decay and OrthoInj are the S models of Table~\ref{tab:architecture}. Each cell uses the LR selected by its own staged screen. The cell without either switch was screened with independent draws over $\{1,1.5,2,3\}\times10^{-3}$ and selects $2\times10^{-3}$; the other three screens used antithetic draws. Setting $\Delta_{\rm init}=1$ instead changes selection PPL by at most $0.03$ ($6.87$ vs.\ $6.84$ with prelude RMSNorm and the projection; $6.77$ vs.\ $6.78$ for OrthoInj).

\begin{table}[!htbp]
\sidefloat[22.9pt]{0.59\linewidth}{table}{tables/appendix/orthoinj_switch_2x2.tex}\hfill
\sidefloat{0.38\linewidth}{figure}{figures/appendix/injection_switch_loss_figure.tex}
\end{table}

Figure~\ref{fig:injection-switch-loss} shows the late training loss of the four cells. Over the final $256$ updates, dropping prelude RMSNorm lowers mean training CE by $0.010$ with or without the projection, and the projection lowers it by a further $0.002$--$0.003$. These single-seed comparisons do not estimate seed variance.

\paragraph{Training loss.}
\label{app:injection-training-loss}
OrthoInj ends training with the lowest training CE at every scale (Figure~\ref{fig:injection-training-loss}): over the final $512$ updates, its mean is $0.011$ (S), $0.005$ (M), and $0.003$ (L) below Parcae-Decay's. At S, each run uses the peak learning rate selected by its own screen (\appref{app:setup-lr}): $2\times10^{-3}$ for OrthoInj and $1.5\times10^{-3}$ for Parcae-Decay. At M, both use $10^{-3}$, Parcae-Decay's selected rate, and OrthoInj's loss stays below that of both baselines over the whole plotted range. At L, both runs use $5\times10^{-4}$, and OrthoInj stays below Parcae-Decay over the whole plotted range.

\paragraph{At either M learning rate, OrthoInj has the lower training loss and PPL.}
\label{app:injection-lr-m}
OrthoInj's own M screen selects $1.5\times10^{-3}$, so we also train both methods at this rate (Table~\ref{tab:injection-lr-m}). At $1.5\times10^{-3}$, OrthoInj ends training $0.014$ below Parcae-Decay, and Parcae-Decay ends $0.010$ above its $10^{-3}$ run. OrthoInj keeps the lower validation and WikiText PPL at both rates. The downstream average depends on the rate: OrthoInj leads by $0.13$ points at $10^{-3}$ and trails by $0.51$ at $1.5\times10^{-3}$, where it still leads GSM8K and DROP.

\begin{table}[!htbp]
\centering\resulttablestyle
\caption{\textbf{At either M learning rate, OrthoInj has the lower training loss and PPL.} Train CE is the mean training loss over the final $512$ updates; the other columns follow Table~\ref{tab:architecture} (terminal KV sharing, $R=5$). The Parcae-Decay run at $1.5\times10^{-3}$ uses a different data-parallel layout: its global batch and sampled depths match the other runs, its data order does not.}
\label{tab:injection-lr-m}
\renewcommand{\arraystretch}{\resultrowstretch}
\begin{NiceTabular}{@{\hspace{6pt}}lccccccc@{\hspace{6pt}}}
\CodeBefore
\rowcolor{papergray}{2,5}
\Body
\toprule
Injection & Train CE $\downarrow$ & Val.\ $\downarrow$ & Wiki $\downarrow$ & AVG $\uparrow$ & GSM $\uparrow$ & DROP $\uparrow$ & MBPP+ $\uparrow$ \\
\midrule
\multicolumn{8}{l}{\textbf{Peak learning rate $10^{-3}$ (Table~\ref{tab:architecture})}} \\
Parcae-Decay & 1.602 & 3.96 & 12.45 & 49.03 & \textbf{15.01} & \textbf{20.17} & \textbf{8.25} \\
OrthoInj & \textbf{1.596} & \textbf{3.94} & \textbf{12.32} & \textbf{49.16} & 13.95 & 19.38 & 7.88 \\
\midrule
\multicolumn{8}{l}{\textbf{Peak learning rate $1.5\times10^{-3}$ (OrthoInj's selected rate)}} \\
Parcae-Decay & 1.611 & 3.96 & 12.41 & \textbf{49.80} & 14.48 & 18.63 & \textbf{11.62} \\
OrthoInj & \textbf{1.598} & \textbf{3.94} & \textbf{12.32} & 49.29 & \textbf{17.51} & \textbf{21.04} & 10.25 \\
\bottomrule
\end{NiceTabular}
\end{table}

\paragraph{Carryover along the injected input.}
Table~\ref{tab:injection-alignment} measures the component of the carryover $\Lambda h_t^r$ along $q_t$ that motivates OrthoInj (\secref{sec:architecture}). In Parcae-Decay this component is large and positive: at the fixed point, the carryover has cosine $0.81$ (S), $0.87$ (M), and $0.91$ (L) with $q_t$, so the input enters the core at $2.2\times$ (S), $2.8\times$ (M), and $3.4\times$ (L) its injected size. The factor varies across tokens, from $1.9$ to $2.5$ at S, from $2.3$ to $3.3$ at M, and from $2.6$ to $4.2$ at L between the $5$th and $95$th percentiles, and no token's carryover cancels half of the input. Before its projection, the OrthoInj carryover also leans toward $q_t$ (cosine $0.42$--$0.48$); the projection removes this component and fixes the factor at one.

\begin{table}[!htbp]
\centering\resulttablestyle
\caption{\textbf{In Parcae-Decay, the carryover amplifies the injected input.} Carryover $\Lambda h_t^r$ along the injected input $q_t$ at the fixed point of the Table~\ref{tab:architecture} checkpoints: recurrence $32$ of a depth-$32$ forward on $16$ \textsc{MixVal} sequences ($131{,}072$ tokens) with attention kept within documents; recurrence $16$ agrees to two decimals. Along $q$ is the mean of $\langle\Lambda h_t^r,q_t\rangle/\norm{q_t}_2^2$, so the input enters the core as $(1+\text{along }q)\,q_t$; gain gives its median and $5$--$95\%$ range over tokens. OrthoInj values precede its projection, which fixes the gain at one.}
\label{tab:injection-alignment}
\renewcommand{\arraystretch}{\resultrowstretch}
\begin{NiceTabular}{@{\hspace{6pt}}lcccc@{\hspace{6pt}}}
\CodeBefore
\rowcolor{papergray}{2,5,8}
\Body
\toprule
Injection & $\cos(\Lambda h_t^r,q_t)$ & Along $q$ & Gain & Gain $5$--$95\%$ \\
\midrule
\multicolumn{5}{l}{\textbf{S scale: (100M, 21.5B tokens)}} \\
Parcae-Decay & 0.81 & 1.19 & 2.18 & 1.91--2.49 \\
OrthoInj (before projection) & 0.44 & 0.59 & 1 & 1 \\
\midrule
\multicolumn{5}{l}{\textbf{M scale: (400M, 85.9B tokens)}} \\
Parcae-Decay & 0.87 & 1.76 & 2.75 & 2.29--3.26 \\
OrthoInj (before projection) & 0.42 & 0.62 & 1 & 1 \\
\midrule
\multicolumn{5}{l}{\textbf{L scale: (1.6B, 343.6B tokens)}} \\
Parcae-Decay & 0.91 & 2.43 & 3.43 & 2.60--4.24 \\
OrthoInj (before projection) & 0.48 & 0.72 & 1 & 1 \\
\bottomrule
\end{NiceTabular}
\end{table}

\subsection{Truncation, Depth Sampling, and Fixed Points}
\label{app:tbptt-study}
\label{app:gradient-results}\label{app:gradient-measurements}
\paragraph{Cost.} Training memory grows linearly with the window: peak allocation is $48$, $86$ and $124$\,GB for $b=5,10,15$ at local batch $8$. Full BPTT under PLN-5 must also hold the deepest draw; a depth-$30$ draw at local batch 8 exceeds device memory, and per-layer recomputation brings it to $28$\,GiB at about $1.3\times$ the step time of $b=10$. At a fixed depth $R=5$, $b=2$ trains $1.32\times$ faster than full BPTT. In a Small PLN-25 model at $R=16$, windows $b=5$ and $b=10$ lower saved activations from $3.95$ to $1.52$ and $2.63$\,GiB, which stay constant for fixed $b$ once $R\geq b$, while the core gradient at $b=10$ keeps a median cosine similarity of $0.978$ with full BPTT.

\paragraph{With depth sampling, windows of four or more recurrences match full BPTT.} Table~\ref{tab:tbptt-pln5-windows} varies only the backward window under IID PLN-5 draws. At M, every window $b\geq4$ lies within $1\%$ of full BPTT in WikiText perplexity at $R=5$, on either side of it, while shorter windows lose $1.6\%$ ($b=3$), $5.5\%$ ($b=2$) and $15\%$ ($b=1$). At S, $b=5$ and $b=10$ lie $0.8\%$ and $1.5\%$ above full BPTT. For every window, including $b=1$, perplexity at $R=64$ is below that at $R=5$ and terminal-KV sharing is lossless, so truncation lowers quality without preventing fixed points.

\paragraph{Truncation as a fixed-point shaper.} Table~\ref{tab:tbptt-fixed-r5} fixes the training depth, so depth sampling cannot shape fixed points. Full BPTT then degrades beyond the training depth and loses $21\%$ perplexity under terminal-KV sharing. Backpropagating through only the last two recurrences removes both, with a random or a zero initial state, at a cost of about $0.07$--$0.10$ CE at the training depth. The stop-gradient removes credit for iteration-specific computation in the detached prefix: the window is trained only to map whatever state the model reaches to a good prediction, which favors states that further recurrences leave unchanged. Without input injection, the prefix state is the only carrier of the input, so this pressure erases it: truncation then neither fits the training depth nor yields a fixed point. A two-term endpoint Neumann estimate, which uses only the Jacobian at the endpoint, collapses from initialization at every LR, because no fixed point exists early in training.

\paragraph{Relation to prior uses of truncation.} Earlier looped models that extrapolate stably also truncate backpropagation, but in each case together with another source of depth variation. Progressive training detaches a prefix of random length, so the window lands at a random depth, and mixes in a full-depth loss~\citep{bansal2022algorithm}; its motivation, discouraging iteration-specific behavior, matches the mechanism above. The looped transformers of \citet{yang2024looped} detach all but the last $b$ loops, but grow the loop count by a curriculum and supervise every loop in the window; they find that larger windows help reach a fixed point. Huginn backpropagates through the last eight recurrences of a sampled depth~\citep{huginn}. Both \citet{bansal2022algorithm} and \citet{yang2024looped} also find input injection necessary. Table~\ref{tab:tbptt-fixed-r5} removes these sources of depth variation: the depth is fixed, the window is the last two recurrences, and full BPTT is the control.

\begin{table}[!t]
\centering\resulttablestyle
\caption{\textbf{With depth sampling, windows of four or more recurrences match full BPTT.} Models trained with IID PLN-5 draws. S rows share one LR. Each M window uses the LR selected from $\{0.001,0.0015,0.002,0.003\}$ by a staged sweep ($2{,}048$, $4{,}096$ and $20{,}480$ steps); full BPTT is shown at two LRs. Terminal KV gives the change in cached PPL from full to terminal KV at $R_{\rm pre}=R_{\rm dec}=5$.}
\label{tab:tbptt-pln5-windows}
\renewcommand{\arraystretch}{\resultrowstretch}
\begin{NiceTabular}{@{\hspace{6pt}}lcccc@{\hspace{6pt}}}
\CodeBefore
\rowcolor{papergray}{3,7}
\Body
\toprule
\multicolumn{2}{c}{Training} & \multicolumn{2}{c}{WikiText PPL $\downarrow$} & Terminal KV \\
\cmidrule(lr){1-2}\cmidrule(lr){3-4}\cmidrule(lr){5-5}
Window $b$ & LR & $R{=}5$ & $R{=}64$ & $\Delta$ (\%) \\
\midrule
\multicolumn{5}{l}{\textbf{S scale: (100M, 21.5B tokens)}} \\
5 & 0.0015 & 21.32 & 21.09 & $-0.15$ \\
10 & 0.0015 & 21.47 & 21.21 & $-0.16$ \\
full (64) & 0.0015 & 21.15 & 20.94 & $-0.09$ \\
\midrule
\multicolumn{5}{l}{\textbf{M scale: (400M, 85.9B tokens)}} \\
1 & 0.001 & 15.23 & 15.12 & $-0.15$ \\
2 & 0.001 & 13.97 & 13.85 & $-0.06$ \\
3 & 0.001 & 13.46 & 13.30 & $-0.29$ \\
4 & 0.001 & 13.33 & 13.19 & $-0.15$ \\
5 & 0.001 & 13.37 & 13.18 & $-0.24$ \\
6 & 0.0015 & 13.17 & 13.00 & $-0.25$ \\
7 & 0.001 & 13.32 & 13.16 & $-0.28$ \\
8 & 0.001 & 13.32 & 13.15 & $-0.31$ \\
9 & 0.001 & 13.34 & 13.17 & $-0.33$ \\
10 & 0.001 & 13.21 & 13.05 & $-0.22$ \\
full (64) & 0.001 & 13.25 & 13.09 & $-0.21$ \\
full (64) & 0.0015 & 13.27 & 13.10 & $-0.34$ \\
\bottomrule
\CodeAfter
\begin{tikzpicture}
\foreach \sepcol in {3,5}{
\draw[draw=resultgrouprule,line width=0.35pt] (1-|\sepcol) -- (3-|\sepcol);
\draw[draw=resultgrouprule,line width=0.35pt] (4-|\sepcol) -- (7-|\sepcol);
\draw[draw=resultgrouprule,line width=0.35pt] (8-|\sepcol) -- (20-|\sepcol);
}
\end{tikzpicture}
\end{NiceTabular}
\end{table}

\begin{table}[!t]
\centering\resulttablestyle
\caption{\textbf{At a fixed training depth, truncation shapes fixed points when the input is injected.} S models trained at fixed $R=5$ with diagonal (Diag.) or no injection, each with its own screened LR. CE is native validation CE at $R=5$. Without injection, the loop starts from the prelude output, with embeddings and prelude frozen. The two-term endpoint Neumann estimate collapsed at all five LRs within $1{,}024$ steps.}
\label{tab:tbptt-fixed-r5}
\renewcommand{\arraystretch}{\resultrowstretch}
\begin{NiceTabular}{@{\hspace{6pt}}lllcccc@{\hspace{6pt}}}
\CodeBefore
\rowcolor{papergray}{3}
\Body
\toprule
\multicolumn{3}{c}{Training} & Val. & \multicolumn{2}{c}{WikiText PPL $\downarrow$} & Terminal KV \\
\cmidrule(lr){1-3}\cmidrule(lr){4-4}\cmidrule(lr){5-6}\cmidrule(lr){7-7}
Init. & Injection & Backward & CE $\downarrow$ & $R{=}5$ & $R{=}64$ & $\Delta$ (\%) \\
\midrule
\multicolumn{7}{l}{\textbf{S scale: (100M, 21.5B tokens)}} \\
Random & Diag.\ & Full ($b{=}5$) & 1.894 & 20.62 & 26.09 & $+21.9$ \\
Random & Diag.\ & TBPTT $b{=}2$ & 1.961 & 22.29 & 22.71 & $+2.9$ \\
Zero & Diag.\ & Full ($b{=}5$) & 1.886 & 20.76 & 26.06 & $+20.7$ \\
Zero & Diag.\ & TBPTT $b{=}2$ & 1.984 & 23.01 & 23.26 & $+1.3$ \\
Random & Diag.\ & Neumann, $b{=}2$ & \multicolumn{4}{c}{collapsed at every LR} \\
\midrule
Prelude & None & Full ($b{=}5$) & 2.345 & 39.85 & 9349 & $+232$ \\
Prelude & None & TBPTT $b{=}2$ & 4.971 & 1046 & 37860 & $+1890$ \\
\bottomrule
\CodeAfter
\begin{tikzpicture}
\foreach \sepcol in {4,5,7}{
\draw[draw=resultgrouprule,line width=0.35pt] (1-|\sepcol) -- (3-|\sepcol);
\draw[draw=resultgrouprule,line width=0.35pt] (4-|\sepcol) -- (11-|\sepcol);
}
\end{tikzpicture}
\end{NiceTabular}
\end{table}

\subsection{Recurrent-State and Cache Dynamics}
\label{app:dynamics-results}
This section gives the protocol behind the tokenwise convergence of \secref{sec:fixed-points} and measures how far hidden states and KV caches settle at larger depth.
\paragraph{Tokenwise dynamics of trained checkpoints.}
\label{app:huginn-prefix-probe}

Figure~\ref{fig:huginn-prefix-stabilization} uses the Small Fixed PLN-5 checkpoint of Table~\ref{tab:prior-ablation} at step $5{,}120$. We take eight frozen samples, one from each of the first eight validation sources in alphabetical order; each sample is the first $2{,}048$ tokens of the source's first packed sequence, with document boundaries and causal masks preserved. The unsmoothed heatmap in panel (a) shows the first document, which ends at position $1{,}003$; panel (c) pools all eight samples at document-local positions below $1{,}024$.

\paragraph{State and convergence.}
Each continuous $64$-update trajectory uses native normal initialization, executes the prelude and coda once, and generates no tokens. The relative-change diagnostic in \secref{sec:fixed-points} uses states after both core blocks and exit normalization, before the coda, with nonzero reference states and FP32 measurements on BF16 states. Under this criterion, Figure~\ref{fig:huginn-prefix-stabilization} sets $R_{\rm obs}=32$ and $\tau=2\%$, requiring a sub-threshold suffix of at least four updates; Table~\ref{tab:huginn-convergence} sets $R_{\rm obs}=64$. Tokens that do not meet the criterion have no observed convergence depth and leave display gaps. The logarithmic color scale is clipped to $[10^{-4},1]$, and the gray initialization row is unclipped.

\paragraph{Positional association.}
The zoomed panel (b) shows the latest-converging token. Panel (c) includes all qualifying positions, with medians in nonoverlapping $128$-position bins: $13{,}707$ positions qualify and two do not. For the correlation in \secref{sec:fixed-points}, ties use average ranks; per-sequence values range from $-0.10$ to $0.47$. No independent-token significance test is applied.

\paragraph{Huginn convergence measurements.}
\label{app:huginn-convergence}
\begin{table}[!htbp]
\centering\resulttablestyle\appendixtablesfalse
\caption{\textbf{Hidden states and KV caches settle with depth.} S models trained with PLN-5 and PLN-25. \textbf{A:} a token is converged if every later update through recurrence $64$ stays below the threshold for at least four updates; onset is the median over converged tokens. \textbf{B:} relative pre-RoPE KV change between recurrences $r-5$ and $r$ for PLN-25, with sequence-bootstrap intervals. Values are percentages except onset, which counts recurrences.}
\label{tab:huginn-convergence}
\begin{NiceTabular}{@{}rrrrr@{}}
\toprule
\tableband{5}{A: Adjacent hidden-state changes and converged-token coverage}
Mean depth & Median $\rho_{64,t}$ & Converged at 1\% (\%) & Converged at 2\% (\%) & Onset at 2\% \\
\midrule
5 & 0.33 & 99.74 & 99.93 & 7 \\
25 & 0.23 & 97.39 & 99.09 & 11 \\
\bottomrule
\end{NiceTabular}
\par\vspace{8pt}
\begin{NiceTabular}{@{}rrrrr@{}}
\toprule
\tableband{5}{B: KV changes over five recurrences; PLN-25}
Recurrence $r$ & Reference $r-5$ & Median KV change & 95\% interval & KV norm / norm at $r=5$ \\
\midrule
25 & 20 & 1.23 & [1.22, 1.24] & 97.48 \\
30 & 25 & 0.85 & [0.84, 0.86] & 97.48 \\
160 & 155 & 0.59 & [0.58, 0.60] & 97.48 \\
\bottomrule
\end{NiceTabular}
\end{table}

\paragraph{Adjacent hidden-state changes.}
Table~\ref{tab:huginn-convergence}A applies the initialization, precision and convergence protocol in \appref{app:huginn-prefix-probe} through update $64$ over all $16{,}384$ positions; onset medians are computed over qualifying tokens only. The mean-$5$ and mean-$25$ models use LRs of $1.5\times10^{-3}$ and $2\times10^{-3}$, respectively, and also differ in training recipe, so the two rows are not a controlled comparison of depth. Within $64$ updates, more than $97\%$ of tokens meet the $1\%$ criterion in both models.

\paragraph{Five-recurrence KV changes.}
Panel B stacks the pre-RoPE keys and values of the banks $C^r$ from both physical core layers into one matrix every fifth recurrence. On each of $1{,}216$ \textsc{MixVal} sequences (\appref{app:setup-mixture}), we compute
\begin{equation}
 \rho^{C}_r=\frac{\norm{C^r-C^{r-5}}_F}
 {\sqrt{\tfrac12(\norm{C^r}_F^2+\norm{C^{r-5}}_F^2)}}.
 \label{eq:huginn-kv-five-step}
\end{equation}
The table reports sequence medians with pointwise $95\%$ intervals from $10{,}000$ sequence-bootstrap resamples; norm ratios use each sequence's $\norm{C^5}_F$ as the reference. The five-recurrence KV change falls from $1.23\%$ at $r=25$ to $0.59\%$ at $r=160$, below the $2\%$ single-update threshold of \secref{sec:fixed-points} even though it spans five recurrences, while the KV norm stays at $97.48\%$ of its $r=5$ value, so the remaining change occurs at nearly constant norm. Each continuous $160$-recurrence trajectory uses two core-block applications per recurrence without token generation. These measurements characterize state and cache changes; they do not establish global contraction, a causal training-prior effect, or the quality of residual-based stopping.

\subsection{Terminal KV Sharing at Deeper Recurrence}
\label{app:deep-r-kv}
Table~\ref{tab:deep-r-kv} compares terminal KV sharing with the full per-visit cache on the Table~\ref{tab:prior-ablation} checkpoints at $R=5$, $16$, and $32$, where the full cache holds $12$, $34$, and $66$ banks and sharing keeps $4$. Sharing raises \textsc{MixVal} PPL by at most $0.85\%$ at $R=5$ and by at most $0.11\%$ at $R=16$ and $32$, so its cost shrinks as recurrence deepens, consistent with Lemma~\ref{lem:joint-frozen}. With sharing, GSM8K accuracy lies within $3.2$ points of the full cache, with no consistent sign. Both caches show the same depth trend: Fixed PLN-5 improves from $R=5$ to $R=16$ at every scale, whereas the PPL of the L learned prior rises beyond $R=5$ while its GSM8K accuracy grows.

\begin{table}[!htbp]
\centering\resulttablestyle
\caption{\textbf{Terminal KV sharing stays close to the full cache as recurrence deepens.} Table~\ref{tab:prior-ablation} checkpoints evaluated at depth $R$ with the full per-visit cache ($2+2R$ banks) or terminal KV sharing ($4$ banks), under the protocol of Table~\ref{tab:prior-ablation}: Val.\ is \textsc{MixVal} PPL on tokens that continue a document begun in the $4{,}096$-token prefill, and GSM8K decodes with the given cache. $\Delta$: relative change in Val.\ PPL from full to terminal KV.}
\label{tab:deep-r-kv}
\renewcommand{\arraystretch}{\resultrowstretch}
\begin{NiceTabular}{@{\hspace{6pt}}lccccccc@{\hspace{6pt}}}
\CodeBefore
\rowcolor{papergray}{3,10,17}
\Body
\toprule
 & & Full & \multicolumn{3}{c}{Val.\ PPL $\downarrow$} & \multicolumn{2}{c}{GSM8K (\%) $\uparrow$} \\
\cmidrule(lr){4-6}\cmidrule(lr){7-8}
Prior & $R$ & banks & Full & Terminal & $\Delta$ (\%) & Full & Terminal \\
\midrule
\multicolumn{8}{l}{\textbf{S scale: (100M, 21.5B tokens)}} \\
Fixed PLN-5 & 5 & 12 & 5.733 & 5.734 & $+0.01$ & 2.20 & 1.52 \\
 & 16 & 34 & 5.719 & 5.719 & $+0.01$ & 1.52 & 1.59 \\
 & 32 & 66 & 5.720 & 5.720 & $+0.01$ & 1.29 & 1.36 \\
Learned, $\lambda_H{=}0.01$ & 5 & 12 & 5.666 & 5.670 & $+0.08$ & 1.90 & 1.59 \\
 & 16 & 34 & 5.661 & 5.661 & $+0.01$ & 2.20 & 2.27 \\
 & 32 & 66 & 5.663 & 5.664 & $+0.01$ & 1.97 & 2.20 \\
\midrule
\multicolumn{8}{l}{\textbf{M scale: (400M, 85.9B tokens)}} \\
Fixed PLN-5 & 5 & 12 & 3.996 & 3.997 & $+0.04$ & 12.89 & 13.72 \\
 & 16 & 34 & 3.979 & 3.979 & $+0.01$ & 14.78 & 14.78 \\
 & 32 & 66 & 3.981 & 3.981 & $+0.00$ & 14.40 & 14.71 \\
Learned, $\lambda_H{=}0.01$ & 5 & 12 & 3.917 & 3.926 & $+0.23$ & 14.86 & 15.62 \\
 & 16 & 34 & 3.929 & 3.930 & $+0.02$ & 16.68 & 16.98 \\
 & 32 & 66 & 3.935 & 3.936 & $+0.02$ & 16.22 & 17.13 \\
\midrule
\multicolumn{8}{l}{\textbf{L scale: (1.6B, 343.6B tokens)}} \\
Fixed PLN-5 & 5 & 12 & 3.092 & 3.095 & $+0.10$ & 44.43 & 47.61 \\
 & 16 & 34 & 3.078 & 3.078 & $+0.01$ & 50.42 & 50.11 \\
 & 32 & 66 & 3.080 & 3.080 & $+0.00$ & 50.42 & 51.18 \\
Learned, $\lambda_H{=}0.01$ & 5 & 12 & 3.047 & 3.073 & $+0.85$ & 50.42 & 47.92 \\
 & 16 & 34 & 3.137 & 3.141 & $+0.11$ & 48.75 & 50.42 \\
 & 32 & 66 & 3.158 & 3.159 & $+0.04$ & 52.24 & 51.63 \\
\bottomrule
\end{NiceTabular}
\end{table}

{At M scale, Table~\ref{tab:deep-r-kv-m} adds the full downstream battery at $R=5$ and $16$: sharing moves the seven-task average by at most $0.34$ points at $R=5$ and $0.04$ points at $R=16$.}

\begin{table}[!htbp]
\centering\resulttablestyle
\caption{\textbf{Terminal KV sharing matches the full cache at $R=5$ and $R=16$.} M-scale ($400$M) checkpoints of Table~\ref{tab:prior-ablation} evaluated at depth $R$ with the full per-visit cache ($2+2R$ banks) or terminal KV sharing ($4$ banks), under the protocol of Table~\ref{tab:prior-ablation}. Bold: the better cache within each pair. Table~\ref{tab:deep-r-kv} adds $R=32$ and the S and L scales.}
\label{tab:deep-r-kv-m}
\fontsize{9}{10.8}\selectfont\renewcommand{\arraystretch}{\resultrowstretch}
\begin{NiceTabular}{@{\hspace{6pt}}lclccccccccccccc@{\hspace{6pt}}}
\toprule
 & & Cache & \multicolumn{2}{c}{PPL $\downarrow$} & \multicolumn{8}{c}{Downstream (\%) $\uparrow$} & \multicolumn{3}{c}{Generation (\%) $\uparrow$} \\
\cmidrule(lr){4-5}\cmidrule(lr){6-13}\cmidrule(lr){14-16}
Prior & $R$ & (banks) & Val. & Wiki & LAMB. & HS & PIQA & ARC-E & ARC-C & OBQA & SciQ & AVG & GSM & DROP & MBPP+ \\
\midrule
Fixed PLN-5 & 5 & Full (12) & 4.00 & 12.57 & 39.18 & 44.05 & 67.25 & 55.47 & 29.69 & \textbf{17.20} & 88.50 & 48.76 & 12.89 & 17.14 & 8.75 \\
 &  & Terminal (4) & 4.00 & \textbf{12.55} & \textbf{39.69} & \textbf{44.64} & \textbf{67.68} & \textbf{56.61} & \textbf{29.78} & 16.40 & \textbf{88.60} & \textbf{49.06} & \textbf{13.72} & \textbf{17.75} & \textbf{10.38} \\
\cmidrule(lr){2-16}
 & 16 & Full (34) & 3.98 & 12.37 & 40.21 & \textbf{44.70} & 67.63 & \textbf{56.48} & \textbf{29.52} & 17.60 & 88.50 & 49.23 & 14.78 & 18.88 & 7.88 \\
 &  & Terminal (4) & 3.98 & 12.37 & \textbf{40.31} & 44.63 & \textbf{67.79} & 56.44 & 29.35 & \textbf{17.80} & 88.50 & \textbf{49.26} & 14.78 & \textbf{19.56} & 7.88 \\
\midrule
Learned, $\lambda_H{=}0.01$ & 5 & Full (12) & \textbf{3.92} & \textbf{12.24} & 38.40 & 45.31 & \textbf{67.74} & 56.69 & \textbf{28.84} & 16.80 & 87.30 & 48.73 & 14.86 & \textbf{17.92} & \textbf{12.62} \\
 &  & Terminal (4) & 3.93 & 12.28 & \textbf{38.58} & \textbf{46.17} & 67.41 & \textbf{57.58} & 28.75 & \textbf{17.00} & \textbf{88.00} & \textbf{49.07} & \textbf{15.62} & 17.11 & 10.38 \\
\cmidrule(lr){2-16}
 & 16 & Full (34) & 3.93 & 12.18 & \textbf{37.88} & 45.91 & \textbf{67.57} & \textbf{57.15} & 28.84 & \textbf{18.40} & 87.90 & \textbf{49.09} & 16.68 & \textbf{19.41} & \textbf{10.38} \\
 &  & Terminal (4) & 3.93 & 12.18 & 37.78 & \textbf{45.94} & 67.46 & 57.11 & 28.84 & 18.20 & \textbf{88.00} & 49.05 & \textbf{16.98} & 19.35 & 8.88 \\
\bottomrule
\end{NiceTabular}
\end{table}

\subsection{Independent Prefill and Decode Budgets}
\label{app:cache-results}\label{app:cache-budgets}\label{app:pd-results}
Terminal KV sharing lets prefill depth $R_{\rm pre}$ and decode depth $R_{\rm dec}$ vary independently: every decoding visit to a physical attention layer reads the same terminal bank, so no decoding visit needs a matching prefill visit. $R_{\rm pre}$ controls only how refined this bank is, while $R_{\rm dec}$ controls how many recurrences each new token receives. Under the convergence conditions of Lemma~\ref{lem:joint-frozen}, deeper prefill moves the bank toward its fixed point, so decoding should improve with $R_{\rm pre}$ and then saturate. Figure~\ref{fig:pd-scaling} confirms this for Small checkpoints trained at mean recurrence $5$ and $25$ once the decode depth is at least $5$ (at least $2$ for the mean-$5$ model): beyond $R_{\rm pre}=5$, cached log-perplexity changes by at most $0.01$ at every decode depth. At the shallowest decode depths, a matched shallow prefill does better; for the mean-$25$ model at $R_{\rm dec}=2$, $R_{\rm pre}=1$ gives $2.89$ against $3.13$ for $R_{\rm pre}\geq5$. Decode depth matters more, especially for the mean-$25$ model, where raising $R_{\rm dec}$ from $1$ to $10$ lowers log-perplexity from $3.47$ to $1.82$ before it plateaus.

\begin{figure}[!htbp]
\centering
\includegraphics[width=\linewidth]{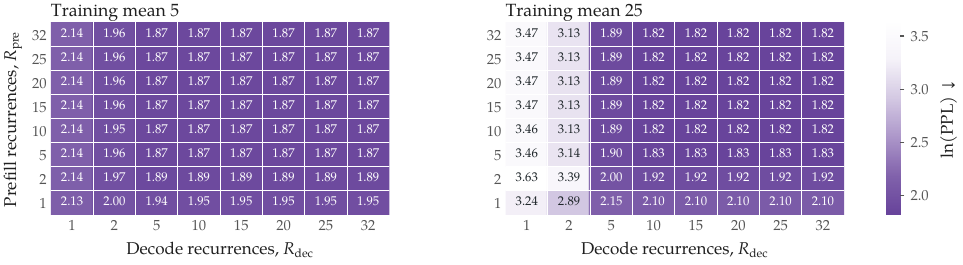}
\caption{\textbf{Prefill gains saturate by $R_{\rm pre}=5$, and decode depth matters more.} Cells give the cached log-perplexity of Small checkpoints trained at mean recurrence $5$ (left) and $25$ (right), on CacheEval with terminal KV sharing, $1{,}024$ prefix tokens, and continuation positions $2$--$512$.}
\label{fig:pd-scaling}
\end{figure}

Both grids use checkpoints with Parcae w/o norm injection, reuse the terminal history, and keep the current token's KV local to each visit, with $2+2R_{\rm pre}$ block applications per prefill token and $2+2R_{\rm dec}$ per incremental input. Figure~\ref{fig:pd-scaling} evaluates $R_{\rm pre},R_{\rm dec}\in\{1,2,5,10,15,20,25,32\}$ on CacheEval. KV allocation stays constant across each grid's $64$ settings.

\subsection{Distilled Prefill}
\label{app:distillation-results}
We first evaluate teacher-initialized students on the full battery of \secref{sec:exp-distill} (\appref{app:distill-ablation}) and then distill a deeper $R=25$ teacher (\appref{app:distill-r25}).
\subsubsection{Teacher-Initialized Students under the Full Battery}
\label{app:distill-ablation}
\paragraph{Evaluation.}
Table~\ref{tab:distilled-prefill-main} and Tables~\ref{tab:distilled-prefill-ablation} and~\ref{tab:distilled-prefill-r25} use the data, requests, and scorers of Table~\ref{tab:prior-ablation}. The prefill path processes the prompt, and the teacher at $R=5$ processes everything after it while reading the four terminal KV banks the path produced. For Val.\ and Wiki, the path prefills the prompt part of the document that spans the first $4{,}096$ tokens of each $8{,}192$-token sequence ($1{,}216$ and $38$ sequences), and the teacher scores that document\'s remaining tokens (\appref{app:setup-terminal}). For the seven choice tasks, the path prefills every context token but the last, and the teacher scores the last context token and the candidate continuation. For GSM8K, DROP, and MBPP+, the path prefills the prompt and the teacher decodes with the settings of \appref{app:generation-protocol}. The teacher rows at $R=3$ and $R=1$ prefill with the teacher stopped after that many recurrences and keep each physical layer's final-recurrence bank, as along the prefill-depth axis of Figure~\ref{fig:pd-scaling}. Untied~4 and the standalone student decode themselves with their own KV. The teacher row therefore uses terminal KV throughout and equals the $\lambda_H=0.01$ rows of Table~\ref{tab:prior-ablation}; with the full cache, its downstream averages are $41.74$ on Small and $48.69$ on Medium (Table~\ref{tab:prior-ablation-fullkv}). Latencies are paired means of six warmed repetitions at batch eight, timed in the teacher's process; Untied~4 is timed the same way in a separate job.

\paragraph{Teacher and student.}
The Small and Medium teachers in \secref{sec:exp-distill} are the learned-prior models of Table~\ref{tab:prior-ablation} ($\lambda_H=0.01$) with Parcae input injection and no prelude RMSNorm. They were trained on $21.47$B and $85.90$B tokens with learning rates $2\times10^{-3}$ and $1.5\times10^{-3}$, respectively. The Large teacher is the $1.6$B model of the same table, trained on $343.60$B tokens at $10^{-3}$. Targets are pre-coda states after five recurrent visits. The two-layer students match their teachers' widths ($51.9$M parameters on Small, $207.6$M on Medium, and $830.5$M on Large). Neither student shares weights between its two layers. Equation~\eqref{eq:hidden-distillation} uses energy normalization calibrated on $64$ training rows.

\paragraph{Student training.}
At each scale, held-out hidden-state MSE on $128$ prompts selects among learning rates $3\times10^{-4}$, $10^{-3}$, $3\times10^{-3}$, and $6\times10^{-3}$. All run $128$ updates, the top two continue to $512$, and the selected $10^{-3}$ reaches $2{,}560$ updates on Small and $10{,}240$ on Medium. The cosine schedule uses eight warmup updates and reaches $10^{-4}$ at update $2{,}048$, retaining that rate thereafter. All stages use GBS $256$ at sequence length $8{,}192$, so each final student sees one quarter of its teacher's training tokens ($5.369$B on Small and $21.475$B on Medium); training excludes held-out rows.

\paragraph{Teacher initialization.}
Teacher-initialized students add a $d\times d$ input map to the two-block student ($2.36$M parameters on Small, $9.44$M on Medium, $37.75$M on Large). It is set to $\operatorname{diag}(\Delta)W$, so it outputs the injected input $q$ of equation~\eqref{eq:additive-input-injection}, and the two blocks copy the recurrent core. Because the teacher's initial state is small (standard deviation $d^{-1/2}$ per coordinate), the untrained student approximately reproduces the teacher's state after one recurrence. They train for the same $2{,}560$ and $10{,}240$ updates at GBS $256$ as the randomly initialized students, directly at learning rate $10^{-3}$, the rate selected for those students. A control with the same input map, objective, and rate but random initialization (random, +KL) isolates the initialization. The Large student has no such control; it trains for $40{,}960$ updates, one quarter of its teacher's tokens ($85.90$B), with the same rate and schedule. The KL loss is the second term of equation~\eqref{eq:hidden-distillation} and has weight one.

\begin{table}[htbp]
\centering\resulttablestyle
\caption{\textbf{Teacher initialization helps the prefill path, and the KL loss raises its generation scores.} Same battery as Table~\ref{tab:distilled-prefill-main}, $R=5$; AVG is the mean over its seven choice tasks. \emph{+1 rec.}\ applies one teacher recurrence to the student state (our path; the teacher-init +KL row is the main recipe), \emph{KV direct} feeds the student's own KV to the teacher, and \emph{Standalone} decodes with the student alone. +KL adds the second term of equation~\eqref{eq:hidden-distillation}.}
\label{tab:distilled-prefill-ablation}
\renewcommand{\arraystretch}{\resultrowstretch}
\begin{NiceTabular}{@{\hspace{6pt}}lllcccccccc@{\hspace{6pt}}}
\CodeBefore
\rowcolor{papergray}{3,14}
\Body
\toprule
\multicolumn{3}{c}{Prefill path} & \multicolumn{2}{c}{PPL $\downarrow$} & \multicolumn{2}{c}{Downstream (\%) $\uparrow$} & \multicolumn{3}{c}{Generation (\%) $\uparrow$} & Latency $\downarrow$ \\
\cmidrule(lr){1-3}\cmidrule(lr){4-5}\cmidrule(lr){6-7}\cmidrule(lr){8-10}\cmidrule(lr){11-11}
Use & Init & Loss & Val. & Wiki & LAMB. & AVG & GSM & DROP & MBPP+ & 8K (ms) \\
\midrule
\multicolumn{11}{l}{\textbf{S scale: (100M, 21.5B tokens)}} \\
Teacher &  &  & 5.67 & 19.91 & 27.36 & 41.65 & 1.59 & 14.08 & 0.62 & 195.0 \\
Untied 4 &  &  & 6.57 & 25.19 & 23.95 & 39.45 & 1.97 & 11.39 & 0.12 & 100.7 \\
\midrule
+1 rec. & random & hidden & 5.77 & 20.42 & 24.24 & 40.44 & 1.90 & 11.51 & 0.12 & 125.4 \\
 & random & +KL & 5.76 & 20.32 & 20.88 & 39.96 & 1.52 & 12.44 & 0.00 & 125.3 \\
 & teacher & hidden & 5.72 & 20.24 & 24.63 & 40.73 & 0.83 & 14.25 & 0.38 & 126.3 \\
 & teacher & +KL & 5.71 & 20.16 & 23.56 & 40.63 & 1.74 & 14.41 & 0.62 & 125.9 \\
\midrule
KV direct & teacher & hidden & 6.08 & 21.36 & 18.86 & 38.68 & 1.29 & 10.44 & 0.00 & 105.1 \\
 & teacher & +KL & 5.99 & 21.08 & 18.96 & 38.71 & 1.14 & 11.28 & 0.00 & 104.9 \\
\midrule
Standalone & teacher & hidden & 7.81 & 30.64 & 19.41 & 38.00 & 2.27 & 9.41 & 0.00 & 100.1 \\
 & teacher & +KL & 6.97 & 26.73 & 21.35 & 38.22 & 2.12 & 9.92 & 0.00 & 99.9 \\
\midrule
\multicolumn{11}{l}{\textbf{M scale: (400M, 85.9B tokens)}} \\
Teacher &  &  & 3.93 & 12.28 & 38.58 & 49.07 & 15.62 & 17.11 & 10.38 & 495.8 \\
Untied 4 &  &  & 4.68 & 15.73 & 31.19 & 44.08 & 2.35 & 13.12 & 2.50 & 229.8 \\
\midrule
+1 rec. & random & hidden & 4.04 & 12.69 & 29.15 & 45.45 & 5.53 & 13.20 & 0.25 & 295.5 \\
 & random & +KL & 4.02 & 12.57 & 29.52 & 45.66 & 7.05 & 15.36 & 2.50 & 295.3 \\
 & teacher & hidden & 3.98 & 12.51 & 33.22 & 46.88 & 8.42 & 15.21 & 5.38 & 299.1 \\
 & teacher & +KL & 3.97 & 12.44 & 31.86 & 46.77 & 8.95 & 16.20 & 6.12 & 299.3 \\
\midrule
KV direct & teacher & hidden & 4.35 & 13.66 & 9.99 & 39.65 & 2.12 & 7.21 & 0.00 & 235.6 \\
 & teacher & +KL & 4.60 & 14.27 & 15.85 & 40.81 & 1.29 & 9.39 & 0.00 & 235.2 \\
\midrule
Standalone & teacher & hidden & 6.24 & 22.14 & 22.67 & 40.64 & 1.97 & 11.22 & 0.38 & 226.6 \\
 & teacher & +KL & 5.14 & 17.56 & 29.40 & 43.75 & 2.58 & 12.47 & 1.00 & 226.0 \\
\bottomrule
\CodeAfter
\begin{tikzpicture}
\foreach \sepcol in {4,6,8,11}{
\draw[draw=resultgrouprule,line width=0.35pt] (1-|\sepcol) -- (3-|\sepcol);
\draw[draw=resultgrouprule,line width=0.35pt] (4-|\sepcol) -- (14-|\sepcol);
\draw[draw=resultgrouprule,line width=0.35pt] (15-|\sepcol) -- (25-|\sepcol);
}
\end{tikzpicture}
\end{NiceTabular}
\end{table}

\paragraph{Results.}
Teacher initialization improves the prefill path (Table~\ref{tab:distilled-prefill-ablation}). With the hidden loss alone, it raises the Medium downstream average from $45.45$ to $46.88$, GSM8K from $5.53$ to $8.42$, and MBPP+ from $0.25$ to $5.38$. On Small, where generation is near zero for every model, it raises DROP from $11.51$ to $14.25$. With the KL term in both, it raises the downstream average from $45.66$ to $46.77$ on Medium and from $39.96$ to $40.63$ on Small, and Medium MBPP+ from $2.50$ to $6.12$. Adding the KL term, as in the main recipe, keeps the downstream average within $0.1$ point and raises all three generation scores on both scales; on Medium, GSM8K rises from $8.42$ to $8.95$, DROP from $15.21$ to $16.20$, and MBPP+ from $5.38$ to $6.12$. Feeding the student's own KV to the teacher (KV direct) costs $2$--$7$ downstream points relative to +1 rec. Running the student without the teacher (standalone) depends most on the KL loss: it raises the Medium average from $40.64$ to $43.75$ and lowers Val.\ PPL from $7.08$ to $5.88$. Even then, neither use beats Untied~4, which has the same physical depth.

\subsubsection{A Deeper Teacher at \texorpdfstring{$R=25$}{R=25}}
\label{app:distill-r25}
Distilling a deeper endpoint should save more prefill time. We trained Small and Medium learned-prior teachers ($\lambda_H=0.01$) whose PLN prior has mean $25$, on the same $21.47$B and $85.90$B tokens at learning rates $2\times10^{-3}$ and $1.5\times10^{-3}$, and evaluate them at $R=25$. Their students follow the $R=5$ recipe and budget. Table~\ref{tab:distilled-prefill-r25} places them next to the $R=5$ results.

\begin{table}[htbp]
\centering\resulttablestyle
\caption{\textbf{Distilling a deeper endpoint saves more prefill time but loses more quality.} Mean-$25$ learned-prior teachers at $R=25$ next to the $R=5$ teachers of Table~\ref{tab:distilled-prefill-main}, same battery; AVG is the mean over its seven choice tasks. All students use the hidden loss alone at both depths, unlike the main recipe; standalone students decode themselves. The M $R=25$ teacher-initialized rows reuse the timing of the randomly initialized student.}
\label{tab:distilled-prefill-r25}
\renewcommand{\arraystretch}{\resultrowstretch}
\begin{NiceTabular}{@{\hspace{6pt}}llcccccccc@{\hspace{6pt}}}
\CodeBefore
\rowcolor{papergray}{3,14}
\Body
\toprule
\multicolumn{2}{c}{Prefill} & \multicolumn{2}{c}{PPL $\downarrow$} & \multicolumn{2}{c}{Downstream (\%) $\uparrow$} & \multicolumn{3}{c}{Generation (\%) $\uparrow$} & Latency $\downarrow$ \\
\cmidrule(lr){1-2}\cmidrule(lr){3-4}\cmidrule(lr){5-6}\cmidrule(lr){7-9}\cmidrule(lr){10-10}
Teacher $R$ & Path & Val. & Wiki & LAMB. & AVG & GSM & DROP & MBPP+ & 8K (ms) \\
\midrule
\multicolumn{10}{l}{\textbf{S scale: (100M, 21.5B tokens)}} \\
5 & Teacher & 5.67 & 19.91 & 27.36 & 41.65 & 1.59 & 14.08 & 0.62 & 195.0 \\
 & One recurrence & 5.95 & 21.02 & 19.29 & 38.05 & 1.52 & 10.86 & 0.00 & 110.7 \\
 & Distilled, random init & 5.77 & 20.42 & 24.24 & 40.44 & 1.90 & 11.51 & 0.12 & 125.4 \\
 & Distilled, teacher init & 5.72 & 20.24 & 24.63 & 40.73 & 0.83 & 14.25 & 0.38 & 126.3 \\
 & Standalone & 7.81 & 30.64 & 19.41 & 38.00 & 2.27 & 9.41 & 0.00 & 100.1 \\
\midrule
25 & Teacher & 5.41 & 18.59 & 28.82 & 41.66 & 2.73 & 13.99 & 1.00 & 648.4 \\
 & One recurrence & 6.36 & 22.45 & 10.44 & 33.84 & 1.52 & 3.44 & 0.00 & 132.7 \\
 & Distilled, random init & 5.67 & 19.46 & 20.20 & 38.64 & 1.67 & 11.80 & 0.00 & 156.1 \\
 & Distilled, teacher init & 5.57 & 19.20 & 22.65 & 39.25 & 1.52 & 11.33 & 0.12 & 151.0 \\
 & Standalone & 10.17 & 37.43 & 18.82 & 36.19 & 2.73 & 7.52 & 0.00 & 99.2 \\
\midrule
\multicolumn{10}{l}{\textbf{M scale: (400M, 85.9B tokens)}} \\
5 & Teacher & 3.93 & 12.28 & 38.58 & 49.07 & 15.62 & 17.11 & 10.38 & 495.8 \\
 & One recurrence & 4.23 & 13.12 & 13.76 & 41.19 & 2.27 & 8.56 & 0.12 & 245.9 \\
 & Distilled, random init & 4.04 & 12.69 & 29.15 & 45.45 & 5.53 & 13.20 & 0.25 & 295.5 \\
 & Distilled, teacher init & 3.98 & 12.51 & 33.22 & 46.88 & 8.42 & 15.21 & 5.38 & 299.1 \\
 & Standalone & 6.24 & 22.14 & 22.67 & 40.64 & 1.97 & 11.22 & 0.38 & 226.6 \\
\midrule
25 & Teacher & 3.91 & 12.17 & 41.61 & 49.45 & 21.38 & 20.11 & 11.25 & 1847.7 \\
 & One recurrence & 8.95 & 43.04 & 0.62 & 32.95 & 0.00 & 3.82 & 0.00 & 289.9 \\
 & Distilled, random init & 4.20 & 12.92 & 19.87 & 42.09 & 1.74 & 10.30 & 0.00 & 342.3 \\
 & Distilled, teacher init & 4.05 & 12.55 & 30.04 & 45.54 & 4.32 & 14.99 & 1.38 & 342.3 \\
 & Standalone & 8.30 & 27.84 & 18.40 & 39.43 & 1.82 & 6.39 & 0.00 & 227.1 \\
\bottomrule
\CodeAfter
\begin{tikzpicture}
\foreach \sepcol in {3,5,7,10}{
\draw[draw=resultgrouprule,line width=0.35pt] (1-|\sepcol) -- (3-|\sepcol);
\draw[draw=resultgrouprule,line width=0.35pt] (4-|\sepcol) -- (14-|\sepcol);
\draw[draw=resultgrouprule,line width=0.35pt] (15-|\sepcol) -- (25-|\sepcol);
}
\end{tikzpicture}
\end{NiceTabular}
\end{table}

The savings grow with depth: at $8$K, distilled prefill is $4.3\times$ faster than the $R=25$ teacher on Small and $5.4\times$ on Medium, compared with $1.54\times$ and $1.66\times$ at $R=5$. The quality cost grows faster. The teacher-initialized student trails its $R=25$ teacher by $2.4$ and $3.9$ downstream points on Small and Medium, compared with $0.9$ and $2.2$ at $R=5$, and it loses most of the deeper Medium teacher's generation gain (GSM8K $21.38$ vs.\ $4.32$). Teacher initialization matters more at this depth: on Medium it raises the downstream average from $42.09$ to $45.54$. A single recurrence no longer suffices: its downstream average is $33.84$ on Small and $32.95$ on Medium.

As a result, the $R=25$ student is both slower and less accurate than the $R=5$ student at each scale ($151.0$ vs.\ $126.3$\,ms and $39.25$ vs.\ $40.73$ on Small; $342.3$ vs.\ $299.1$\,ms and $45.54$ vs.\ $46.88$ on Medium). Its lower Small PPL comes from the stronger $R=25$ decoder, which scores every continuation token itself; relative to its own teacher, the $R=25$ student's PPL gap is $1.7$--$3.5$$\times$ the $R=5$ gap on both scales. It is faster than the $R=5$ teacher but falls well below it. Hidden-state error does not explain the gap. The depth-$25$ endpoint lies farther from the prelude output on both scales (copying the input leaves a relative error of $0.945$ vs.\ $0.856$ on Small and $0.997$ vs.\ $0.942$ on Medium). The Small students fit it less well (held-out relative MSE $0.070$--$0.100$ vs.\ $0.058$), but the Medium students fit it better ($0.011$--$0.017$ vs.\ $0.036$) and still lose more quality. The deeper teacher's decoder appears more sensitive to small handoff errors. A stronger teacher need not give a better handoff, consistent with evidence that a larger teacher--student gap can weaken distillation~\citep{cho2019efficacy,mirzadeh2020improved}. We therefore report $R=5$ in the main text.

\subsection{Reusing States during Post-Training}
\label{app:posttraining}
This section specifies the rollout-state reuse estimator of \secref{sec:why-rl}, gives the training protocol, and reports the training cost, uncertainty, and diagnostics behind \secref{sec:exp-rl}, and a code RL comparison on MBPP+. Saved states are used only before the optimizer step changes the parameters, and missing, stale, or token-mismatched states are rejected.

\paragraph{Forward state and gradient estimator.}
For a saved endpoint $z^R$, token sequence $x$, and input-conditioned recurrent map $F_\theta(z;x)$ of \secref{sec:fixed-points}, Algorithm~\ref{alg:rl-reuse} gives the update. Its only recorded graph is one application of $F_\theta$ at the saved endpoint, which includes the prelude through input injection, and $\widetilde z$ keeps the forward value $z^R$. With $J_R=\partial_{z}F_\theta(z^R;x)$ as in \appref{app:gradients}, the hook replaces the readout gradient $g$ with $v=\sum_{j=0}^{b-1}(J_R^{\top})^{j}g$. Neumann-4 ($b=5$) applies four products, so its sum has five terms, including the identity; Neumann-3 stops after three products (four terms). The number in each Neumann label thus counts vector--Jacobian products. Neumann-4 runs one recurrence forward, four backward passes to the state only, and one full backward pass, all through the same graph. The parameter gradient combines $(\partial_\theta F_\theta)^\top v$ with direct readout derivatives; differentiable attention over saved states preserves cross-token derivatives. Recompute + Neumann-4 reconstructs $z^R$ without gradients before applying this estimator; full BPTT differentiates all six recurrent visits. This endpoint Neumann estimate is approximate and carries no convergence guarantee. Reuse and recomputation differ only in the no-gradient forward pass that recomputation adds. At $R=6$, the four state products of Neumann-4 are close to the six recurrent visits of full BPTT, so this experiment does not test how the saving scales with depth.

\paragraph{Model, data, and reward.}
The shared checkpoint in \secref{sec:exp-rl} is the Large learned-prior model of Table~\ref{tab:prior-ablation} ($\lambda_H=0.01$, $343.6$B pretraining tokens). Its input injection is Parcae w/o norm, the injection used by all Table~\ref{tab:prior-ablation} rows. Rollout and evaluation use visit-specific KV. Prompts use the eight-shot chain-of-thought format of \appref{app:generation-protocol}; generation stops at the next question marker, and the binary reward requires the extracted numeric answer to match the reference. Training draws questions from the $7{,}473$-question GSM8K training split in a fixed shuffled order, excluding the eight demonstrations and two probe questions. Evaluation uses a fixed random sample of $500$ test questions. We did not verify pretraining decontamination; all arms share the checkpoint, so contamination would shift absolute accuracy but not the comparison between arms.

\paragraph{Policy updates and sampling budget.}
Each arm samples $16$ responses per training question once (temperature $0.8$, $512$-token cap), forming $200$ two-question batches. Dr.~GRPO centers each question's rewards within its group of $16$ without standard-deviation scaling and divides the summed token loss by the batch size times the completion cap. Updates use a clipping range of $0.2$, no reference KL, the recorded rollout log-probabilities as the importance-ratio denominator, and the rollout temperature during replay. Zero-advantage batches skip the optimizer step, and the others receive one on-policy update, so arms with equal sampling budgets can differ in update and token counts.

Training uses replay microbatch two and gradient accumulation $16$, so each update uses $32$ responses. AdamW uses LR $10^{-6}$, moments $(0.9,0.95)$, $\epsilon=10^{-8}$, zero weight decay, gradient-norm clipping at one, and a four-update linear warmup. Evaluation uses the same seed for every endpoint. It draws eight samples per question with the rollout sampler; pass@1 averages the eight outcomes, and pass@8 counts a question as solved if any sample is correct.

\paragraph{Recurrence and task choice.}
Two measurements favor $R=6$ for the starting checkpoint. With $64$ samples per question at temperature $0.8$ on $500$ GSM8K training questions, $R=6$ reaches pass@1/8/64 of $38.1$/$73.8$/$91.4\%$, against $36.2$/$72.0$/$89.8\%$ at $R=5$. With greedy decoding on the test split (Table~\ref{tab:rl-depth-scan}), $R=6$ also gives the best GSM8K accuracy, while code and reading scores decline slowly through $R=7$ and fall from $R=8$. This greedy accuracy ($53.68$ at $R=6$) is higher than the sampled pass@1 at temperature $0.8$ that Table~\ref{tab:rl-state-reuse} reports before RL ($40.63$). Groups of $16$ samples contain both a correct and an incorrect response for $78\%$ of the training questions, compared with $65\%$ of the MBPP+ training tasks, so GSM8K gives more learning signal per rollout.

\begin{table}[!htbp]
\centering\resulttablestyle
\caption{\textbf{$R=6$ gives the best GSM8K accuracy for the RL starting checkpoint.} L learned-prior model ($\lambda_H=0.01$) with visit-specific KV; GSM8K decodes greedily on the full test split.}
\label{tab:rl-depth-scan}
\renewcommand{\arraystretch}{\resultrowstretch}
\begin{NiceTabular}{@{\hspace{6pt}}lccccccc@{\hspace{6pt}}}
\CodeBefore
\rowcolor{papergray}{2}
\Body
\toprule
$R$ & 5 & 6 & 7 & 8 & 10 & 15 & 20 \\
\midrule
\multicolumn{8}{l}{\textbf{L scale: (1.6B, 343.6B tokens)}} \\
GSM8K (greedy) & 50.27 & \textbf{53.68} & 52.31 & 52.01 & 49.96 & 48.90 & 49.73 \\
MBPP+ pass@1 & \textbf{28.88} & 28.75 & 27.88 & 25.88 & 22.50 & 20.88 & 20.75 \\
DROP F1 & \textbf{35.01} & 34.67 & 33.93 & 32.39 & 30.45 & 30.47 & 29.15 \\
\bottomrule
\end{NiceTabular}
\end{table}

\begin{table}[!htbp]
\centering\resulttablestyle
\caption{\textbf{Rollout generation dominates RL training time.} Realized counts and times for the GSM8K runs of Table~\ref{tab:rl-state-reuse}. Tokens counts sampled response tokens. Train (min) includes rollout, answer checking, and updates; Update (s) sums scoring, backward, and optimizer time. Prompts whose responses all receive the same reward skip replay, so per-update means fall below the medians in Table~\ref{tab:rl-state-reuse}.}
\label{tab:rl-training-cost}
\renewcommand{\arraystretch}{\resultrowstretch}
\begin{NiceTabular}{@{\hspace{6pt}}lccccc@{\hspace{6pt}}}
\CodeBefore
\rowcolor{papergray}{3}
\Body
\toprule
Training & \multicolumn{2}{c}{Realized} & \multicolumn{3}{c}{Time} \\
\cmidrule(lr){1-1}\cmidrule(lr){2-3}\cmidrule(lr){4-6}
Update method & Tokens & Updates & Train (min) & Rollout (min) & Update (s) \\
\midrule
\multicolumn{6}{l}{\textbf{L scale: (1.6B, 343.6B tokens)}} \\
Full BPTT & 1,009,546 & 188 & 65.45 & 57.42 & 463.5 \\
Recompute + Neumann-4 & 880,930 & 186 & 61.71 & 55.50 & 356.5 \\
Reuse + Neumann-4 & 886,381 & 191 & 57.30 & 52.97 & 244.1 \\
Reuse + Neumann-3 & 858,002 & 192 & 58.98 & 54.95 & 225.9 \\
\bottomrule
\CodeAfter
\begin{tikzpicture}
\foreach \sepcol in {2,4}{
\draw[draw=resultgrouprule,line width=0.35pt] (1-|\sepcol) -- (3-|\sepcol);
\draw[draw=resultgrouprule,line width=0.35pt] (4-|\sepcol) -- (8-|\sepcol);
}
\end{tikzpicture}
\end{NiceTabular}
\end{table}

\paragraph{Quality and uncertainty.}
Pass@1 gains over initialization in Table~\ref{tab:rl-state-reuse} are $22.58$ ($[20.28,24.90]$) points for full BPTT, $19.43$ ($[17.28,21.58]$) for recomputation, $21.03$ ($[18.85,23.23]$) for Reuse + Neumann-4, and $20.68$ ($[18.48,22.93]$) for Reuse + Neumann-3. Reuse + Neumann-4 differs from full BPTT by $-1.55$ ($[-3.13,0.05]$) points and from recomputation by $+1.60$ ($[0.08,3.18]$); Neumann-3 differs from Neumann-4 by $-0.35$ ($[-2.05,1.33]$). All $95\%$ intervals use $5{,}000$ paired question-bootstrap draws, resampling questions together with all eight of their responses. They condition on one training seed and so exclude seed variance. Reuse and recomputation compute nearly identical gradients (cosine $0.997$--$0.999$, below), so their $+1.60$ difference indicates the size of run-to-run variation; differences of this size between arms do not show an estimator effect.

\paragraph{Paired update cost and diagnostics.}
Timings in Table~\ref{tab:rl-state-reuse} are per-update medians from the training runs. A paired probe with fixed actions, masks, actor probabilities, and synthetic advantages on two probe questions (two warmups, five repetitions) gives $169.7$, $131.6$, and $89.2$\,ms for full BPTT, recomputation, and Reuse + Neumann-4. Whole-model gradient cosines are $0.997$--$0.999$ between reuse and recomputation and $0.76$--$0.87$ between reuse and full BPTT ($0.74$--$0.86$ with Neumann-3).

At $R=6$, the relative state residual averages $0.034$ (maximum 0.041 in the Neumann-4 run), larger than near a converged fixed point, so the finite Neumann series truncates more of the full gradient. Reuse's maximum replay log-probability RMSE is $0.021$, and its minimum normalized importance-weight effective sample size is $0.9995$. Detached BF16 states stay on the GPU for one batch and use about $2.0$\,GiB without host transfer; the time to capture them is counted as rollout time. The longest full-BPTT probe peaks at $49.2$\,GiB after Adam-state allocation.

\paragraph{Total cost and scope.}
Table~\ref{tab:rl-training-cost} reports total costs, which also depend on sampled lengths and nonzero-update counts. Multiple seeds, the full test split, and reuse across updates remain untested.

\paragraph{Transfer to MATH500.}
We evaluate the starting checkpoint and the four GSM8K endpoints on the $500$ MATH500 problems with the Minerva four-shot prompt, a $1{,}024$-token cap, generation stopped at the next problem marker, and Minerva-normalized exact match. Sampling matches the GSM8K evaluation ($R=6$, temperature $0.8$, eight samples per problem), and each problem uses the same seed for every model. RL raises pass@1 by $4.88$ ($[3.55,6.22]$) points for full BPTT, $3.92$ ($[2.73,5.12]$) for recomputation, $4.42$ ($[3.17,5.67]$) for Reuse + Neumann-4, and $5.38$ ($[3.95,6.78]$) for Reuse + Neumann-3. Reuse + Neumann-4 differs from full BPTT by $-0.45$ ($[-1.50,0.68]$) points and from recomputation by $+0.50$ ($[-0.55,1.50]$), using the paired question bootstrap above.

\paragraph{Code RL on MBPP+.}
We repeat the comparison on MBPP+ with the same checkpoint, optimizer, and update methods, using $R=6$, group size $16$, and two tasks per update. Prompts use the three-shot MBPP+ format of \appref{app:generation-protocol}, and generation stops at the first test, docstring, or top-level call after the function. The reward is one only if the sanitized solution passes both the EvalPlus base and plus tests, executed in a network-isolated sandbox. The training pool has $273$ tasks in a fixed shuffled order, and one pass of $136$ two-task batches uses the first $272$. These tasks are disjoint from the $100$ held-out evaluation tasks and the three demonstrations, with exact prompt and solution duplicates removed.

\begin{table}[!htbp]
\centering\resulttablestyle
\caption[Rollout-state reuse during code RL (MBPP+).]{\textbf{Reusing rollout states also cuts the update time of code RL.} Same checkpoint, $R=6$, optimizer, and update methods as Table~\ref{tab:rl-state-reuse}. Training covers $272$ MBPP+ tasks once; evaluation draws eight samples per task on $100$ held-out tasks. Update is the median scoring plus backward time per update; Train includes rollout, code verification, and updates. Before RL resamples the starting checkpoint with the RL sampler; its pass@1 differs from Table~\ref{tab:rl-depth-scan} ($28.75$, independent samples) by $2.1$ points ($95\%$ interval $[-1.4,5.8]$).}
\label{tab:rl-mbpp}
\renewcommand{\arraystretch}{\resultrowstretch}
\begin{NiceTabular}{@{\hspace{6pt}}lccccc@{\hspace{6pt}}}
\CodeBefore
\rowcolor{papergray}{3}
\Body
\toprule
Training & \multicolumn{2}{c}{MBPP+ (\%) $\uparrow$} & \multicolumn{3}{c}{Time $\downarrow$} \\
\cmidrule(lr){1-1}\cmidrule(lr){2-3}\cmidrule(lr){4-6}
Update method & Pass@1 & Pass@8 & Update (s) & Train (min) & Rollout (min) \\
\midrule
\multicolumn{6}{l}{\textbf{L scale: (1.6B, 343.6B tokens)}} \\
Before RL & 30.88 & 63.00 & \resultmissing & \resultmissing & \resultmissing \\
Full BPTT & 40.88 & 69.00 & 1.03 & 20.0 & 14.4 \\
Recompute + Neumann-4 & 37.12 & 69.00 & 0.65 & 18.9 & 14.9 \\
\midrule
Reuse + Neumann-4 & 39.38 & 64.00 & 0.51 & 21.8 & 17.8 \\
Reuse + Neumann-3 & 38.00 & 68.00 & 0.48 & 21.2 & 17.0 \\
\bottomrule
\CodeAfter
\begin{tikzpicture}
\foreach \sepcol in {2,4}{
\draw[draw=resultgrouprule,line width=0.35pt] (1-|\sepcol) -- (3-|\sepcol);
\draw[draw=resultgrouprule,line width=0.35pt] (4-|\sepcol) -- (9-|\sepcol);
}
\end{tikzpicture}
\end{NiceTabular}
\end{table}

All methods raise pass@1 from $30.9\%$ to $37$--$41\%$ (Table~\ref{tab:rl-mbpp}). Pass@8 moves by at most six points, and each held-out task is worth one point of it; Reuse + Neumann-4 ends five points below full BPTT ($64.0$ vs.\ $69.0$). Reuse + Neumann-4 differs from full BPTT by $-1.50$ ($[-4.38,1.38]$) points of pass@1 and from recomputation by $+2.25$ ($[-1.25,5.75]$); Neumann-3 differs from Neumann-4 by $-1.38$ ($[-5.12,2.25]$). Intervals use the same paired question bootstrap as on GSM8K. Reuse speeds up scoring and backward computation by $1.27\times$ over recomputation and $2.02\times$ over full BPTT. Rollout generation takes $72$--$82\%$ of training time and code verification another $12$--$15\%$, so sampling dominates end-to-end cost, as on GSM8K. The reuse arms are not faster end to end: their rollouts take $17.0$--$17.8$ minutes against $14.4$ for full BPTT, which outweighs their $1.5$--$1.6$-minute saving outside rollout.

\FloatBarrier

\FloatBarrier
\section{Datasets and Evaluation Protocols}
\label{app:protocols}
We define the shared datasets, target populations, and scoring rules. \appref{app:additional-results} gives each experiment's checkpoint, execution budget, and any departures from these common protocols. Each cached, split, or reused-state path for evaluation or gradients was checked against a reference (uncached scoring, full autograd, or an analytic oracle) on a subset of inputs before use; the experiment descriptions omit these checks. Unless stated otherwise, results use one training seed, intervals quantify evaluation-sample uncertainty, and ``batch eight'' means a batch size of eight.

\subsection{Mixture-Matched Validation}

\label{app:setup-mixture}

\textsc{MixVal} is a fixed mixture-matched split of $64$ packed sequences of
length $8{,}192$ from each of $19$ TxT360 validation sources, totaling $9{,}961{,}472$
next-token targets. We report $\mathrm{PPL}=\exp(\mathrm{CE})$ from
token-weighted teacher-forced cross-entropy, evaluated in BF16 with
multi-segment semantics and no generation cache. Document-level separation
from the training corpus remains unverified.

\subsection{WikiText-103 and Zero-Shot Tasks}
\label{app:setup-wiki}
Table~\ref{tab:task-inventory} lists the evaluation tasks.
\begin{table}[t]
\centering\papertablestyle\renewcommand{\appendixtablefontsize}{9}%
\caption{\textbf{Evaluation tasks.} Choice tasks are zero-shot and score candidates by log likelihood (length-normalized for HellaSwag and ARC-C, summed otherwise). Generation tasks decode greedily except MBPP+, which averages eight samples at temperature $0.8$.}
\label{tab:task-inventory}
\begin{tabular}{@{}lllrrl@{}}
\toprule
Task & Split & Metric & Examples & Shots & Used in \\
\midrule
LAMBADA (OpenAI) & test & accuracy, last-word PPL & 5,153 & 0 & main tables \\
HellaSwag & validation & accuracy & 10,042 & 0 & main tables \\
PIQA & validation & accuracy & 1,838 & 0 & main tables \\
ARC-Easy & test & accuracy & 2,376 & 0 & main tables \\
ARC-Challenge & test & accuracy & 1,172 & 0 & main tables \\
OpenBookQA & test & accuracy & 500 & 0 & main tables \\
SciQ & test & accuracy & 1,000 & 0 & main tables \\
WinoGrande & validation & accuracy & 1,267 & 0 & \appref{app:prior-results}, \appref{app:architecture-results} \\
\midrule
DROP & frozen subset & F1 & 500 & 3 & main tables \\
GSM8K & test & accuracy & 1,319 & 8 & main tables, \appref{app:posttraining} \\
MBPP+ v0.2.0 & frozen subset & pass@1 & 100 & 3 & main tables, \appref{app:posttraining} \\
\bottomrule
\end{tabular}
\end{table}

We concatenate the $2{,}891$ nonempty records of WikiText-103~\citep{merity2016pointer}
with newlines. Native curves in Figures~\ref{fig:fixed-vs-pln-depth-scaling}
and~\ref{fig:m-entropy-ablation} score $310{,}315$ next-token targets in $38$
windows of $8{,}192$ tokens at every depth $1$--$64$, using batch two, BF16,
FlashAttention-3 (FA3), and state seed $42$. PPL exponentiates token-averaged
cross-entropy; document-level pretraining decontamination was not performed.

Paired WikiText PPL in Figure~\ref{fig:m-entropy-ablation} uses the same
windows and example IDs but scores $271{,}403$ continuation targets after a
$1{,}024$-token prefill, excluding the prefill prediction and padding. Both
layouts start from an identical per-visit prefill at $R_{\rm pre}=R_{\rm dec}=5$, with batch eight
and FA3 disabled; terminal sharing retains each physical layer's last bank,
reducing twelve banks to four.

The main tables share six zero-shot tasks from the Mamba-3
suite~\citep{lahoti2026mamba3}, using complete labeled splits (LAMBADA test, HellaSwag and PIQA validation, and ARC and OpenBookQA test): LAMBADA OpenAI
($5{,}153$), HellaSwag ($10{,}042$), PIQA ($1{,}838$), ARC-Easy ($2{,}376$),
ARC-Challenge ($1{,}172$), and OpenBookQA ($500$). HellaSwag and ARC-Challenge
normalize continuation log likelihood by character length; PIQA, ARC-Easy,
and OpenBookQA sum it. LAMBADA accuracy requires every final-word subtoken
to match its greedy prediction; last-word PPL exponentiates the negative mean final-word log probability given the context, summing over subtokens.

Tables~\ref{tab:prior-ablation} and~\ref{tab:architecture} and their no-reuse counterparts add all $1{,}000$ SciQ
test questions at $R=5$, with the support paragraph followed by
\texttt{Question: \{question\}} and then \texttt{Answer:}. Four answers are scored
by summed log likelihood. Table~\ref{tab:m-entropy-paired-tasks} and the entropy-sweep averages in \appref{app:m-entropy-matched} instead add the WinoGrande~\citep{winogrande} validation split (1,267 examples),
scoring the common suffix after each candidate-filled prefix by summed log likelihood. Each
seven-task AVG equally weights its accuracies, excluding PPL and generation.

The main distilled-prefill results (Table~\ref{tab:distilled-prefill-main})
use this battery, with each prompt split between the prefill path and the
teacher decoder (\appref{app:distill-ablation}).

Request formatting and pair tokenization use
\texttt{lm-evaluation-harness v0.4.2}~\citep{eval_harness}, commit \texttt{4600d6bf}, pinned
for this implementation; Mamba-3 does not specify a harness revision.
Native scoring uses BF16 with FA3, candidate batch eight, and no BOS, EOS, chat
template, or generation cache. State seed $42$ and fixed example IDs are
shared across candidates and checkpoints. Requests are right-padded into
token-determined power-of-two buckets of minimum width $128$; batches and
bucket shapes stay fixed across models. We keep all ARC questions, including those with three or five choices, and reject rather than truncate inputs over $8{,}192$ tokens.

The reward ablations (Table~\ref{tab:prior-reward-ablation}) instead use raw-likelihood scoring and a four-task mean.

\subsection{Terminal-KV Evaluation}
\label{app:setup-terminal}
Tables~\ref{tab:prior-ablation} and~\ref{tab:architecture} and the Teacher row of Table~\ref{tab:distilled-prefill-main} score every checkpoint at $R=5$ with terminal KV sharing, using the battery's requests and scorers. At $R=5$, a token visits the four physical attention layers twelve times: the prelude and coda once each, and each of the two core blocks five times. The cache keeps one bank per physical layer. Because the prelude and coda run once per token, their banks are exact; each core bank keeps only the final-recurrence KV of earlier tokens. A forward pass over a chunk of new tokens runs all five recurrences; at every core visit, the chunk attends to that layer's bank and to its own KV from the same visit, and after the final recurrence its KV is appended to the bank. From an empty cache this pass equals the native forward pass and differs only in what it stores; every checkpoint passes this check on \textsc{MixVal} prefixes.

Prefill passes every prompt token but the last as one chunk, then the last prompt token alone. For the seven choice tasks, the context is the prompt, and each candidate token is then passed alone, so every scored token reads earlier tokens only through terminal banks. Generation prefills the prompt the same way and decodes one token at a time with the settings of \appref{app:generation-protocol}. For Val.\ and Wiki, the prompt is the first $4{,}096$ tokens of each $8{,}192$-token sequence ($1{,}216$ \textsc{MixVal} and $38$ WikiText sequences), and attention stays within documents, as in training. PPL covers the tokens after the prompt that continue a document begun inside it, and the prediction from the last prompt token: $42\%$ of the remaining \textsc{MixVal} tokens, and every WikiText token, since WikiText sequences contain no document boundary. The prompt part of each such document is prefilled, and its later tokens read it through the terminal banks while attending to one another as in training, so Val.\ and Wiki test sharing of the prefix only. Tables~\ref{tab:prior-ablation-fullkv} and~\ref{tab:architecture-fullkv} evaluate the same checkpoints without KV reuse: Val.\ and Wiki are native PPL over full sequences (\appref{app:setup-mixture}), choice tasks are scored without a cache, and generation uses $R=5$ prefill and decoding with twelve visit-specific KV banks and a BOS token.

\subsection{Cached-Continuation Perplexity}

\label{app:setup-cache}
Cached PPL is teacher-forced next-token perplexity from the native incremental
decoder. Prefill's final-position logits score the first continuation target; appending each reference token once scores the next target. For the scored positions $\mathcal T_i$ of sequence $i$,
\begin{equation}
  \mathrm{CE}_{\mathrm{cache}}
  =-\frac{1}{\sum_i|\mathcal T_i|}\sum_i\sum_{t\in\mathcal T_i}
    \log p_\theta^{\mathrm{inc}}(x_{i,t}\mid x_{i,<t}),
  \qquad
  \mathrm{PPL}_{\mathrm{cache}}
  =\exp\!\left(\mathrm{CE}_{\mathrm{cache}}\right).
  \label{eq:setup-cached-ppl}
\end{equation}
Predicted EOS tokens do not stop scoring.

\textsc{CacheEval} contains $304$ frozen single-document \textsc{MixVal} spans
($16$ per source), each with $1{,}024$ prompt tokens and $512$ continuation targets.
The cache grows from $1{,}024$ to $1{,}535$ tokens over $511$ teacher-forced inputs.
Prior, input-injection, and training-depth comparisons score targets $2$--$512$
($155{,}344$ total), using five recurrences for both prefill and decoding;
\appref{app:pd-results} varies these budgets independently. Each experiment specifies its cache layout.

\subsection{Generation Benchmarks}
\label{app:generation-protocol}
All generation scores are percentages. DROP~\citep{drop} uses official F1 on $500$ frozen
examples (three-shot, greedy), GSM8K~\citep{gsm8k} uses numeric-answer accuracy
on all $1{,}319$ test examples (eight-shot, greedy), and MBPP+ uses
estimated pass@1 on $100$ frozen v0.2.0 tasks (three-shot, eight samples
per task, temperature $0.80$). MBPP+ augments MBPP~\citep{mbpp} with EvalPlus
tests~\citep{evalplus}. Generation caps are $64$ tokens for DROP and $512$
for GSM8K and MBPP+. RL uses the same eight GSM8K demonstrations;
\appref{app:posttraining} gives its training split and rollout protocol.
Each experiment specifies its model and recurrence budgets.

\FloatBarrier
\section{Extended Related Work}
\label{app:extended-related}
We position the learned depth prior, OrthoInj, terminal KV sharing, rollout-state reuse, and distilled prefill against related work that \secref{sec:related} covers only briefly.
\subsection{Learned Depth Distributions and Exits}
\label{app:related-depth}

Ouro, RL-Halting, and our prior all optimize an entropy-regularized expected objective over depths~\citep{ouro,kuo2026stochastic}. Ouro takes the exact expectation with per-token exit gates and reuses the gates for early exit.

\paragraph{Closest related work.} The closest related work is RL-Halting~\citep{kuo2026stochastic}, which also supervises one sampled depth per example and learns the depth distribution with REINFORCE. Its stopping head reads the detached, average-pooled hidden state after each iteration and outputs a hazard $\pi_r$, the probability of stopping at recurrence $r$ given no earlier stop, so $p(R=r\mid x)=\pi_r\prod_{r'<r}(1-\pi_{r'})$. The reward is the negative task loss with a moving-average baseline and entropy coefficient $0.01$, and inference stops at the most probable depth. Because the hazards depend on the states along the loop, training unrolls every example to its maximum depth of $20$ to form the distribution; the sampled depth selects the supervision but not the compute. A hazard that ignores the state is constant and gives a geometric distribution, so the head cannot start from a prescribed prior such as capped PLN. RL-Halting has no budget term and targets length extrapolation on synthetic tasks; it does not evaluate fixed points or KV sharing.

Our prior is unconditional and used only in training, giving it three advantages over Ouro and RL-Halting. First, it saves compute: it samples one depth per microbatch before the loop and runs only that depth, and the budget term in equation~\eqref{eq:depth-prior-objective} fixes the expected forward compute at $\bar R$. Ouro instead runs every step to maximum depth for the exact expectation, and RL-Halting unrolls to that maximum depth to form its state-dependent distribution. Second, it starts from a strong prior: its logits start at the capped PLN prior, so each learned run begins at the fixed-prior baseline, whereas a state-dependent hazard has no setting that reproduces a prescribed prior. Third, it deliberately promotes KV sharing: because the prior does not decide where inference stops, its entropy term extends supervision above $\bar R$, where settled prefix context supports terminal KV sharing (\appref{app:m-entropy-matched}). Table~\ref{tab:prior-reward-ablation} compares its inverse-perplexity reward to RL-Halting's raw CE.

Learned exits have two limits for fixed-point training. A single exit distribution both stops inference and weights the per-depth training losses; \citet{adaptive_depth} trace gate failures mainly to the trajectory this joint training induces, and find that fixed-prior depth supervision yields useful trajectories. An exit also stops once prediction suffices, so training need not reach depths beyond prefix convergence, where supervision prepares the model for terminal KV sharing (\secref{sec:inference}). Depth uncertainty also learns a global depth distribution, but within a Bayesian objective that marginalizes over depths at inference~\citep{dun}.

\subsection{Input Injection and Orthogonal Updates}
\label{app:related-injection}

DEQ-Transformer injects embeddings into Q/K/V; DEQ-TrellisNet injects them before convolutional LSTM gates~\citep[Section~4]{bai2019deep}. TrellisNet combines causal convolution, shared depth, and persistent injection~\citep{bai2019trellis}. Huginn and Parcae instead inject at the recurrent-core entrance~\citep{huginn,parcae}.

Orthogonal residual updates~\citep{orthogonal_residual} also apply a projection inside a residual block, but to the other operand. They keep only the component of a module's output that is orthogonal to the residual stream, so an update adds new directions without rescaling the stream. OrthoInj (equation~\eqref{eq:orthoinj}) instead projects the carried-over state off the injected input $q_t$ and then adds $q_t$, so the component along $q_t$ is fixed at every recurrence while the orthogonal part evolves freely. The two constraints act on different quantities: what a block adds, and what the recurrence retains.

\subsection{KV Sharing Across Recurrent Depth}
\label{app:related-kv-sharing}
Huginn reports zero-shot cyclic reuse of a fixed historical-KV budget, with
later recurrent visits overwriting earlier entries~\citep[Section~6.2]{huginn}.
Ouro's last-loop reuse during decoding largely preserves GSM8K accuracy
with a modest MATH-500~\citep{lightman2024verify} decrease, while prefill sharing degrades
performance~\citep[Section~5.4.2, Table~14]{ouro}. Even with visit-specific prompt banks, decode-time sharing represents the generated tokens by shared banks.

MELT instead finds zero scores on its Table~5 reasoning benchmarks for all four combinations of
first-/last-loop reuse and preserved/shared prompt caches on
Ouro-1.4B-Thinking~\citep[Appendix~B]{melt2026}. Its LightEval setup permits
$32$K-token completions. Differences from Ouro's few-shot CoT evaluation,
alongside incomplete reproduction of the original results, leave the
discrepancy unresolved~\citep[Section~4.1 and Appendix~F.1]{melt2026}.
MELT adapts the model to a shared cache with gated updates, chunk-wise training,
a gradual transition, and attention-aligned distillation; it does not
directly evaluate Huginn.

Under a common GSM8K setup without retraining,
\citet[Appendix~C.1, Table~1]{schwethelm2026cdb} find that one shared KV slot
per physical layer leaves Huginn accuracy nearly unchanged at $32$ recurrences, while
Ouro accuracy collapses at four. A separate first-visit slot recovers most of
Ouro's accuracy. These results suggest that cache compatibility depends on the model;
Huginn under MELT's long-reasoning protocol remains unmeasured. These studies test sharing zero-shot or after adaptation. We instead change only the training depth policy, which decides whether terminal sharing is nearly lossless (Table~\ref{tab:prior-ablation}); like them, we do not measure long-form reasoning.

\subsection{Fixed-Point Reuse}
\label{app:related-reuse}
Implicit models reuse a fixed point to warm-start the solver on a related input. DEQ-Flow and StreamDEQ start each video frame from the previous frame's fixed point~\citep{bai2022deqflow,ertenli2022streamdeq}, and fixed point diffusion models reuse solutions between denoising timesteps~\citep{bai2024fpdm}. In each case, the reused state initializes a new forward solve. Rollout-state reuse (\secref{sec:why-rl}) instead carries states between the two passes that RL makes over the same tokens, rollout and the policy update. The update scores the sampled tokens from the endpoints saved during rollout, so it does not replay the recurrence, and backpropagates through a single recurrence at each endpoint with a truncated Neumann series (\appref{app:posttraining}).

\subsection{Distillation for Looped Models}
\label{app:related-distill}
Distillation has trained looped students from non-looped teachers~\citep{bae2025relaxed} and adapted looped models to cheaper KV caches: MELT adapts Ouro to one gated cache per layer with attention-aligned distillation~\citep{melt2026}, and Looped Latent Attention distills a codec that compresses KV across loops~\citep{oneill2026lla}. LoopFormer trains shorter loop trajectories of the same model to match the final states of the full trajectory~\citep{jeddi2026loopformer}. In each case, the student still iterates the recurrent block. Distilled prefill instead trains a separate, non-recurrent student to predict the endpoint of a frozen looped teacher and uses it only for the prompt; decoding keeps the teacher's recurrence. The closest idea is the learned initializer of \mbox{HyperDEQ}, a small network that predicts the starting point of a frozen DEQ's fixed-point solver~\citep{bai2022hyperdeq}. Our student instead replaces the solver: one teacher recurrence from its prediction builds every KV bank that decoding reads under terminal sharing.

\FloatBarrier

\end{document}